\def\eqref#1{equation~\ref{#1}}
\def\ceil#1{\lceil #1 \rceil}
\def\1{\bm{1}}
\DeclareMathAlphabet{\mathsfit}{\encodingdefault}{\sfdefault}{m}{sl}
\SetMathAlphabet{\mathsfit}{bold}{\encodingdefault}{\sfdefault}{bx}{n}
\newcommand\Tstrut{\rule{0pt}{2.6ex}}       
\newcommand\Bstrut{\rule[-0.9ex]{0pt}{0pt}} 
\newcommand{\TBstrut}{\Tstrut\Bstrut} 
\newcommand{\textoverline}[1]{$\overline{\mbox{#1}}$}
\newcommand{\edit}[1]{\textcolor{black}{#1}}
\algnewcommand{\IfThen}[2]{
  \State \algorithmicif\ #1\ \algorithmicthen\ #2}
\newcommand{\tikzcircle}[2][red,fill=red]{\tikz[baseline=-0.5ex]\draw[#1,radius=#2] (0,0) circle ;}%
\newcommand{\X}{\texttt{CoverSumm}}
\newtheorem{prop}{Proposition}
\newtheorem{thm}{Theorem}
\definecolor{darkgray2}{rgb}{0.36, 0.36, 0.36}
\definecolor{LightCyan}{rgb}{0.8,0.9,0.8}
\definecolor{LightRed}{rgb}{1,0.75,0.75}
\definecolor{teal}{rgb}{0.98, 0.75, 0}
\definecolor{Gray}{gray}{0.83}
\definecolor{mintbg}{rgb}{.63,.79,.95}
\definecolor{darkgreen}{rgb}{0.07, 0.54, 0.196}
\definecolor{ruddy}{rgb}{1.0, 0.0, 0.16}
\definecolor{gblue}{RGB}{29, 144, 255}
\definecolor{royalblue}{rgb}{0.25, 0.41, 0.88}
\definecolor{ceruleanblue}{rgb}{0.16, 0.32, 0.75}
\definecolor{applegreen}{rgb}{0.55, 0.71, 0.0}
\definecolor{caribbeangreen}{rgb}{0.0, 0.8, 0.6}
\definecolor{dollarbill}{rgb}{0.52, 0.73, 0.4}
\definecolor{emerald}{rgb}{0.31, 0.78, 0.47}
\definecolor{ferngreen}{rgb}{0.31, 0.47, 0.26}
\definecolor{forestgreen}{rgb}{0.13, 0.55, 0.13}
\definecolor{azure}{rgb}{0.0, 0.5, 1.0}
\newcommand{\kibitz}[2]{\ifnum\Comments=0{{\textcolor{#1}{\textsf{\small[#2]}}}}\fi}
\definecolor{tealtwo}{rgb}{0.1,0.6,0.7}
\title{Incremental Extractive Opinion Summarization \\ Using Cover Trees}
\author{\name Somnath Basu Roy Chowdhury$^1$, Nicholas Monath$^2$, Kumar Avinava Dubey$^3$, \\Manzil Zaheer$^2$, Andrew McCallum$^2$, Amr Ahmed$^3$, Snigdha Chaturvedi$^1$ \\
      \addr $^1$UNC Chapel Hill, $^2$Google DeepMind, $^3$Google Research \\
      \email \texttt{\{somnath, snigdha\}@cs.unc.edu} \\
      \email \texttt{\{nmonath, avinavadubey, manzilzaheer, mccallum, amra\}@google.com}\\
      }
\begin{document}

\maketitle

\begin{abstract}

Extractive opinion summarization involves automatically producing a summary of text about an entity (e.g., a product's reviews) by extracting representative sentences that capture prevalent opinions in the review set. Typically, in online marketplaces user reviews accumulate over time, and opinion summaries need to be updated periodically to provide customers with up-to-date information.
In this work, we study the task of extractive opinion summarization in an incremental setting, where the underlying review set evolves over time. 
Many of the state-of-the-art extractive opinion summarization approaches are centrality-based, such as {CentroidRank}~\citep{radev2004centroid,semae}.
CentroidRank performs extractive summarization by selecting a subset of review sentences closest to the centroid in the representation space as the summary. However, these methods are not capable of operating efficiently in an incremental setting, where reviews arrive one at a time.  In this paper, we present an efficient algorithm for accurately computing the CentroidRank summaries in an incremental setting.  
Our approach, \X, relies on indexing review representations in a cover tree and maintaining a reservoir of candidate summary review sentences.  {\X}'s efficacy is supported by a theoretical and empirical analysis of running time. Empirically, on a diverse collection of data (both real and synthetically created to illustrate scaling considerations), we demonstrate that {\X} is up to 36x faster than baseline methods, 
and capable of adapting to nuanced changes in data distribution. 
We also conduct human evaluations of the generated summaries and find that {\X} is capable of producing informative summaries consistent with the underlying review set.\let\thefootnote\relax\footnotetext{Code available here: \href{https://github.com/brcsomnath/CoverSumm}{https://github.com/brcsomnath/CoverSumm}.}
\end{abstract}

\section{Introduction}

Opinion summarization~\citep{hu2004mining, medhat2014sentiment,pang2008lee} is the process of automatically generating summaries of user reviews about entities (such as e-commerce products). The summarized text is useful
in many ways, including assisting customers in making informed purchasing decisions and aiding sellers in understanding user feedback.
Incremental opinion summarization is particularly important as online marketplaces continue to grow and the efficiency and accurate evolution of product summaries is paramount. Most of the current opinion summarization systems operate in a static setup -- generating a summary based on the complete set of reviews. Updating opinion summaries with such static systems is expensive, as the entire set of reviews must be processed each time a new review arrives. 
This necessitates the development of techniques that can efficiently update summaries with changes in the review set. Even when utilizing the LLMs, such as GPT-4~\citep{bubeck2023sparks}, performing opinion summarization still poses challenges due to constraints like token limit as well as high computational cost in incremental settings~\citep{bhaskar2023prompted}.

Opinion summarization systems can produce either abstractive~\citep{isonuma2021unsupervised, kim2022beyond} or extractive summaries~\citep{kim2011comprehensive}. 
While abstractive summaries allow for novel phrasing, they often suffer from hallucination~\citep{ji2022survey}, lack of faithfulness~\citep{maynez2020faithfulness} and interpretability~\citep{saha2022summarization}, especially for larger review sets. Extractive summaries, which are the focus of this work, reduce concerns about hallucination and lack of faithfulness~\citep{zhang-etal-2023-extractive} by producing a summary consisting only of a carefully selected subset of review sentences.
Most extractive opinion summarization systems operate in an unsupervised setup due to the large volume of reviews in real-world settings. In the incremental setup, providing supervision becomes even more challenging, 
as it necessitates having oracle or human-written summaries at each time step, which is expensive. Therefore, we focus on unsupervised approaches to perform incremental extractive opinion summarization. 

Unsupervised extractive summarization approaches assign saliency scores to review sentences and extract the ones with the highest scores as the summary~\citep{peyrard-2019-simple, qt}. The quality and efficiency of producing the summary depend on the extractive summarization method. While previous work presented complex graph-based objectives that use lexical features to extract summarizing sentences~\citep{erkan2004lexrank, textrank,nenkova2005impact}, recent work has demonstrated that a simple centrality-based approach
with learned representations {for the input reviews} produces state-of-the-art results 
across a wide variety of datasets ~\citep{geosumm, semae, meansum}. This objective,
known as CentroidRank~\citep{radev2004centroid}, functions by calculating the centroid of review sentences in the representation space, and selecting the nearest neighbors of the centroid as the summary. Note that CentroidRank's objective is closely related to classic problems in machine learning such as finding the central nodes in a graph \citep{toprank} or medoid(s) within a dataset~\citep{meddit, baharav2019ultra}.
Previous work~\citep{semae,geosumm} has advanced the state-of-the-art in extractive summarization using CentroidRank approaches by improving the representation learning method for the input reviews. 
There is less work focusing on the algorithmic aspects of these methods. In particular, there is limited work in adapting these methods to incremental settings, where reviews are added over time. 
and a summary needs to be kept up-to-date with each addition.

In this paper, we focus on the task of incremental extractive opinion summarization, 
which involves the {extraction} of salient sentences from a continuous stream of reviews as they arrive. 
We propose a novel algorithm {\X}, that executes on centroid-based extractive summarization~\citep{semae, li2023aspect, ghalandari2017,radev2004centroid,rossiello2017centroid} in an incremental setup.
Centroid-based incremental extractive summarization requires computing the $k$-nearest neighbours of the centroid (where $k$ is the number of sentences in the summary) at each point in time. {\X}  efficiently performs incremental summarization by maintaining a small reservoir of input samples (reviews) without processing the entire review set at every time step. Empirical evaluation shows that {\X} is up to 36x faster than baselines. The speedup occurs as {\X} limits most of the nearest neighbour search queries to the reservoir instead of the entire review set.
We perform experiments on large real-world review sets and show that generated summaries align with the aggregate user reviews.
Our primary contributions are:

\begin{itemize}[topsep=1pt, leftmargin=*, noitemsep]
    \itemsep1mm
    \item We study the problem of extractive opinion summarization in an incremental setup,  where a system generates an updated summary with each incoming user review (Section~\ref{sec:3}). 
    \item We extend the paradigm of 
    centroid-based summarization to an incremental setup, and propose {\X} that performs extractive summarization using cover trees (Section~\ref{sec:4}). 
    \item We perform theoretical analyses to show that {\X} generates exact nearest neighbours (NN) and provide bounds for the number of NN queries, as well as maximum storage required (Section~\ref{sec:5}).
    \item We evaluate {\X} to show that it is significantly faster than baselines (up to 36x), and requires minimal additional space. We also perform experiments to gauge the quality of the generated summaries, and if the content of the summaries aligns with the aggregate user reviews (Section~\ref{sec:6}).
\end{itemize}

\vspace{-2mm}
\section{Preliminaries \& Background}
\label{sec:3}
\vspace{-2mm}

In this section, we first describe centroid-based extractive summarization in the incremental setting. Then, we present the data structures used for efficient nearest neighbor and range search in {\X}.
Following that, we will outline some simple baselines that utilize these data structures.

\subsection{Problem Formulation}
In this section, we formally describe the problem of extractive summarization {and then its extension} in an incremental setting. 
Given a set of review sentences for a product, the objective of the extractive summarization system is to select a subset of sentences as the summary.
Similar to prior works, we compute the saliency score for each review sentence and select a subset of sentences with high salience scores as the summary. We build on the paradigm of centroid-based extractive summarization techniques~\citep{radev2004centroid, rossiello2017centroid}, where the saliency score of a review sentence is quantified as its distance to the centroid in the representation space. 
We assume access to a representation model that yields a numerical representation for input texts.
 In this paradigm of summarization, the system greedily selects $k$-nearest neighbours of the centroid as the output summary. Mathematically, given a set of $n$ sentence representations in $D$-dimensional space, $\mathcal{X} = [x_1, x_2, \ldots] \in \mathbb{R}^{n \times D}$, the summary representation with budget $k$ is $\mathbf{S} \in \mathbb{R}^{k \times D}$ and is computed as:
\begin{equation}
\mathbf{S} = \texttt{knn}(\mathcal{X}, \mu) \in \mathbb{R}^{k \times D}, \; \mu = \frac{1}{n}\sum_i x_i 
\in \mathbb{R}^{D}.
\label{eqn:summ}
\end{equation}
The final summary is a concatenation of review sentences whose representations are present in $\mathbf{S}$.

\textbf{Incremental Summarization Task}. We study this paradigm of summarization  in an incremental setup, where at every time step $t$, a new representation $x_t$ arrives and we have the representation set $\mathcal{X}_t = [x_j]_{j=1}^t$. Using this set, the system should generate a summary $\mathbf{S}_t$ consistent with the formulation in Equation~\ref{eqn:summ}. We aim to develop efficient algorithms that accurately estimate $\mathbf{S}_t$. In this work, we primarily focus on reducing computation overhead, not storage space, as storing text representations is relatively inexpensive.

\subsection{Efficient Nearest Neighbour Data Structures}
Efficient retrieval of nearest neighbors of a centroid can be performed using index-based data structures. In this work, we will focus on cover tree-based index structures \citep{ct,sgtree}. 

\noindent \textbf{Cover Trees}~\citep{ct}. 
Each node in a cover tree is associated with a representative point $x \in \mathcal{X}$. The nodes of the tree are arranged into a series of levels. Nodes in level $\ell$, denoted $C_\ell$, have an associated parameter $\gamma^\ell$ used to define the the following set of invariants:

\begin{itemize}[topsep=1pt, leftmargin=*, noitemsep]
	\itemsep0mm
	\item {Covering}: \edit{For each representation $x \in C_\ell$, there exists at least a representation $y \in C_{\ell-1}$ such that distance $d(x, y) \leq \gamma^\ell$. One such representation $y$ that satisfies the condition must be a parent of $x$.} 
	\item {Separation}: \edit{Any pair of representations $x, y \in C_\ell$ are separated by at least $d(x, y) > \gamma^\ell$.}
    \item {Nesting}: \edit{If $x$ appears at level $\ell$, it must appear at all lower levels. Therefore, $C_\ell \subset C_{\ell-1}$.}
\end{itemize}
Cover trees support efficient nearest neighbor search using the algorithms presented by \citet{ct} and modified for use in our method (Algorithm~\ref{alg:reservoir-search}). The construction operations (e.g., insert)  of the cover tree can be performed in an incremental manner. 
\edit{For a detailed discussion of the running time of search and construction of cover trees, we refer readers to \citep{elkin2022counterexamples, elkin2023new}.}

\begin{figure}[t!]
    \centering
    \includegraphics[width=0.97\textwidth, keepaspectratio]{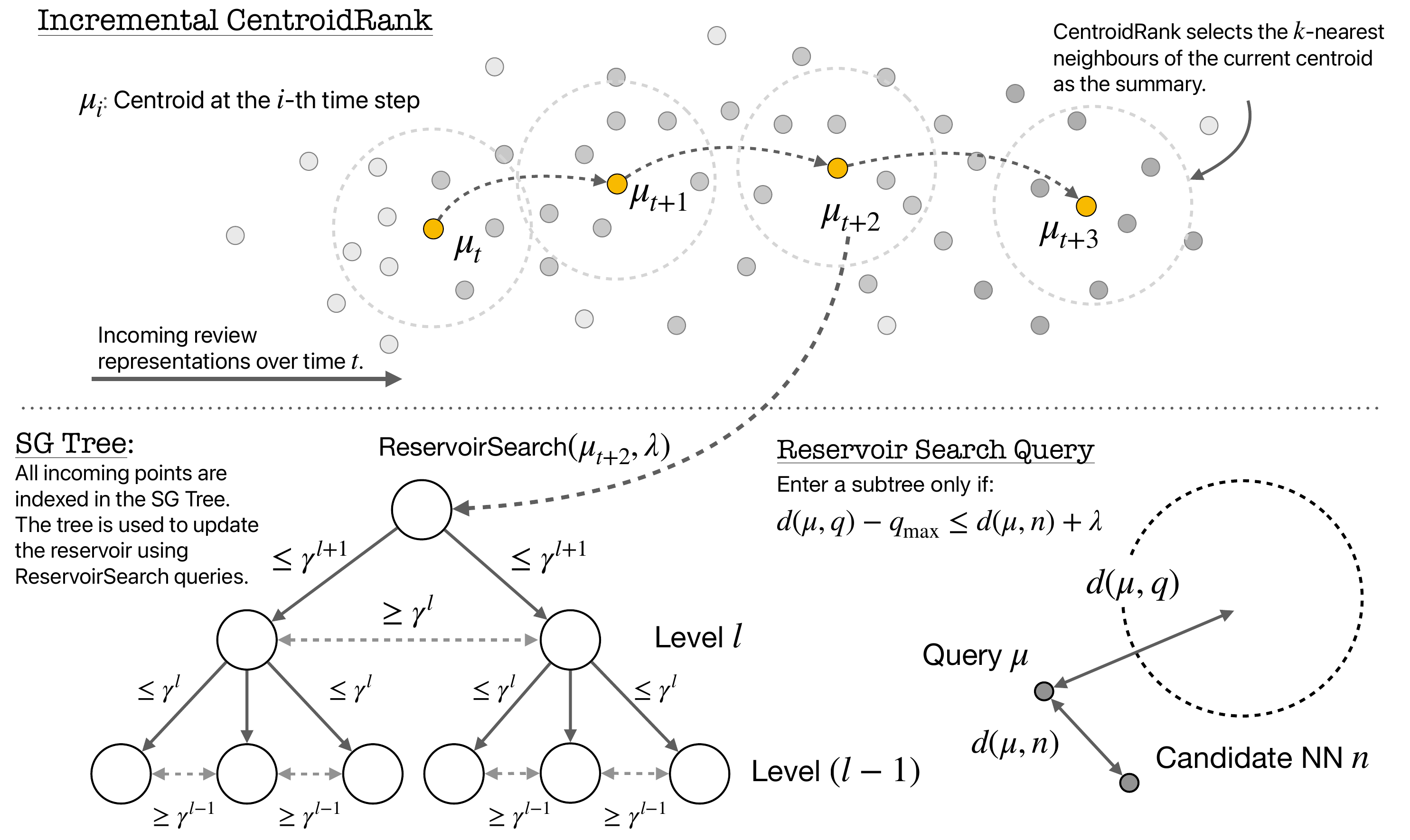}
    \caption{Overview of the incremental CentroidRank-based summarization task and the utility of nearest neighbour (NN) data structures. (Top): We show the CentroidRank task in the incremental setting, where the centroid ($\mu_i$) evolves over time.  Additionally, we illustrate the benefits of maintaining a reservoir of candidate samples for NN retrieval. (Bottom): We show how utilizing efficient NN retrieval data structures, such as the SG Tree, can enhance the efficiency of incremental summarization using reservoir search (discussed in Section~\ref{sec:rs}), particularly in scenarios where the reservoir does not have all the NNs the centroid.}
    \label{fig:sgtree}
\end{figure}

 \noindent \textbf{Stable Greedy (SG) Trees}~\citep{sgtree}. \edit{SG trees are defined similarly to cover trees, with a modified separation property. Rather than requiring all nodes in the level to be separated, only sibling nodes in the tree structure are required to be separated, e.g., all pairs of siblings $x, y \in C_\ell$ are at least $d(x, y) > \gamma^\ell$ apart.}
 An illustration of an SG Tree is shown in Figure~\ref{fig:sgtree} (bottom left). 
  \cite{sgtree} demonstrated that SG trees empirically improve on the online construction time compared to cover trees. 
  In the following sections, we will observe that in our proposed algorithm, construction poses more of a bottleneck than search queries. Therefore, in practice, we will use SG trees for our experiments. In presenting our proposed methods, we will refer to the tree structures as cover trees (since the results are general to the broader class of cover tree data structures of which SG trees are an instance).

\subsection{Baselines}

In this section, we will discuss baseline approaches for performing incremental centroid-based summarization before introducing our proposed algorithm, {\X}.

\noindent \textbf{Brute Force Algorithm}. In this approach, we compute the centroid representation at every time step $\mu_t = {1}{/t}\sum_{i=1}^{t} x_i  \in \mathbb{R}^D$. Then, we compute the summary at step $t$: $\mathbf{S}_t = \texttt{knn}(\mathcal{X}_t, \mu_t) \label{eqn:brute}$.
 Brute-force computation of nearest neighbours (Equation~\ref{eqn:summ}) requires $O(t)$ time at each summarization step. Assuming a total of $n$ text representations in the input stream, the time complexity is $O(n^2)$.

\noindent \textbf{Naive Algorithm with Cover Tree}.\label{sec:naive-ct} The bulk of the computation in the brute force algorithm relies on \texttt{knn} queries (as centroid is updated in $O(1)$ w.r.t. the number of instances). 
The running time of the $k$-nearest neighbor operation can be reduced by using any exact or approximate nearest neighbour retrieval data structures. 
We use cover trees~\citep{ct} to improve the performance of the brute-force algorithm. 
At each time step $t$, we execute the following queries on a cover tree {insert}$(x_t)$ and {knn}$(\mu_t)$. For a total of $n$ reviews, this yields an overall time complexity of $O(n\log n)$. 
While our work uses cover tree because of its theoretical properties illustrated above, the index could be replaced with any efficient nearest neighbour retrieval index (e.g., $k$d-tree~\citep{bentley1975multidimensional}, quad tree~\citep{samet1984quadtree}, r-tree~\citep{guttman1984r}, HNSW~\citep{malkov2018efficient}, etc.) that supports similar kinds of incremental operations like insertions, nearest neighbour, and range search.

\section{Cover Tree based Summarization ({\X})}
\label{sec:4}
In this section, we present an efficient algorithm, {\X}, for performing centroid-based extractive summarization in an incremental setup. {\X} builds on the observation that as the number of reviews increases, there are smaller perturbations in their centroid, $\mu_t$ 
(assuming that the representations belong to the same distribution with a fixed centroid), thereby resulting in infrequent changes to the $k$-nearest neighbours of $\mu_t$. Motivated by this observation,  we improve upon the Naive CT algorithm (Section~\ref{sec:naive-ct}) by reducing the number of $\texttt{knn}(\mu_t)$ queries executed. This is achieved by maintaining a small reservoir $\mathcal{R}$ of representations that we anticipate to be  the $k$-nearest neighbours of future centroids $\mu_{>t}$. This allows us to work with a small number of representations in $\mathcal{R}$ without having to execute expensive nearest neighbour queries on the entire set, $\mathcal{X}$.  To describe our algorithm precisely, we first define the following propositions:

\begin{prop}[Bounding distance to centroid]
    Let $x_1, x_2, \ldots, x_t$ be i.i.d. samples  from a distribution with centroid $\mu$ supported on $[-b/2, b/2]^D$ and $\mu_t = \sum_{i=1}^t x_i/t$. Then, the Euclidean distance of any sample $x$ from $\mu_t$ can be bounded with probability $(1-\delta)^D$ as:
    \begin{equation}
    d({\mu}_t, x) \leq d({\mu}, x) + \sqrt{\frac{Db^2\log(2/\delta)}{2t}},\; \forall x \in [-b/2, b/2]^D.
    \label{eqn:prop-1}
    \end{equation}
    \label{prop:1}
\end{prop}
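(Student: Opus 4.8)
The plan is to split the distance via the triangle inequality and then control the deviation of the empirical centroid $\mu_t$ from the true centroid $\mu$ coordinate by coordinate. First I would write, for any $x$,
\begin{equation}
d(\mu_t, x) \leq d(\mu_t, \mu) + d(\mu, x),
\end{equation}
which reduces the claim to showing $d(\mu_t, \mu) \leq \sqrt{Db^2\log(2/\delta)/(2t)}$ with the stated probability. Crucially, this bound on $d(\mu_t,\mu)$ does not depend on $x$, so once it holds it holds simultaneously for every $x \in [-b/2,b/2]^D$; this is what lets the ``$\forall x$'' quantifier come for free, with no additional union bound over samples.

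Next I would control each coordinate separately. Fixing a coordinate $j$ and writing $\mu_t^{(j)} = \frac{1}{t}\sum_{i=1}^t x_i^{(j)}$ for the $j$-th coordinate of the empirical mean, this is an average of i.i.d. random variables lying in $[-b/2, b/2]$ (range $b$) with mean $\mu^{(j)}$. Hoeffding's inequality then gives
\begin{equation}
\Pr\!\left(\left|\mu_t^{(j)} - \mu^{(j)}\right| > \epsilon \right) \leq 2\exp\!\left(-\frac{2t\epsilon^2}{b^2}\right).
\end{equation}
Choosing $\epsilon = \sqrt{b^2\log(2/\delta)/(2t)}$ makes the right-hand side equal to $\delta$, so each coordinate satisfies $|\mu_t^{(j)} - \mu^{(j)}| \leq \epsilon$ with probability at least $1-\delta$.

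Then I would combine the $D$ coordinate bounds. Treating the $D$ coordinate events as independent, their intersection holds with probability at least $(1-\delta)^D$, matching the proposition. On this joint event,
\begin{equation}
d(\mu_t, \mu)^2 = \sum_{j=1}^D \left(\mu_t^{(j)} - \mu^{(j)}\right)^2 \leq D\epsilon^2 = \frac{Db^2\log(2/\delta)}{2t},
\end{equation}
and taking square roots and substituting into the triangle inequality from the first step yields the claim.

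The main obstacle, or at least the point deserving the most care, is the passage from the per-coordinate bounds to the $(1-\delta)^D$ joint guarantee: this invokes independence across the $D$ coordinates of a single sample, which is an assumption beyond mere i.i.d.-ness of the vectors $x_i$ and should be stated explicitly. A cleaner alternative, should one wish to avoid that assumption, is a union bound giving probability $1 - D\delta$ with an otherwise identical argument; I would flag this and keep the $(1-\delta)^D$ form to match the statement. A secondary detail is verifying the Hoeffding normalization (range $b$ per coordinate, and dividing the sum by $t$) so that the constant $2$ in the denominator of the final bound lands correctly.
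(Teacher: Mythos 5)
Your proposal is correct and follows essentially the same route as the paper's proof: per-coordinate Hoeffding control of $\mu_t - \mu$, aggregation of the $D$ coordinate deviations into the Euclidean norm to produce the $\sqrt{D}$ factor, and the triangle inequality to transfer the bound to $d(\mu_t, x)$ uniformly over $x$. Your explicit observation that the $(1-\delta)^D$ joint guarantee tacitly assumes independence across the coordinates (with the union-bound alternative $1 - D\delta$ otherwise) is a careful point that the paper's own proof leaves implicit.
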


We obtain the above result by applying the Hoeffding-Azuma inequality~\citep{hoeffding1963probability} (more details in Appendix~\ref{sec:hoeffding}) to each dimension of the vector followed by triangle inequality (proof in Appendix~\ref{appdx:prop1}).
Next, we use the result from Proposition~\ref{prop:1} to derive a bound on how much subsequent centroids shift over time. The detailed proof of Proposition~\ref{prop:2} is provided in Appendix~\ref{appdx:prop2}.
\begin{prop}[Bounding distance between subsequent centroids]
     Let $x_1, x_2, \ldots, x_t$ be i.i.d. samples  from a distribution with centroid $\mu$ supported on $[-b/2, b/2]^D$ and $\mu_t = \sum_{i=i}^t x_i/t$. Then, we can bound the Euclidean distance between subsequent centroids $\mu_t$ and $\mu_{t+i}$ with probability $(1-\delta)^{2D}$:
    \begin{equation}
    d({\mu}_t, {\mu}_{t+i}) \leq \sqrt{\frac{2Db^2\log(2/\delta)}{t}}, \; \forall t \in [n - 1], i \in [n-t].
    \label{eqn:prop-2}
    \end{equation}
    \label{prop:2}
\end{prop}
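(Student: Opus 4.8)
The plan is to route the distance between the two empirical centroids through the \emph{true} centroid $\mu$ and apply Proposition~\ref{prop:1} twice. The crucial enabling observation is that Proposition~\ref{prop:1} is stated uniformly over every $x \in [-b/2, b/2]^D$; since $\mu$ is the mean of a distribution supported on the cube $[-b/2, b/2]^D$, it lies in that (convex) cube and is therefore a legitimate choice of $x$. Substituting $x = \mu$ and using $d(\mu, \mu) = 0$ collapses Proposition~\ref{prop:1} to $d(\mu_t, \mu) \le \sqrt{Db^2 \log(2/\delta)/(2t)}$, which is exactly a bound on how far the $t$-sample empirical centroid strays from the population centroid. Applying the same substitution at the later time $t+i$ gives $d(\mu_{t+i}, \mu) \le \sqrt{Db^2 \log(2/\delta)/(2(t+i))}$.

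With these two one-sided deviations in hand, I would first invoke the triangle inequality $d(\mu_t, \mu_{t+i}) \le d(\mu_t, \mu) + d(\mu, \mu_{t+i})$ to split the target quantity. Next, because $t + i \ge t$, the radical controlling $d(\mu_{t+i}, \mu)$ is no larger than the one controlling $d(\mu_t, \mu)$, so both summands are at most $\sqrt{Db^2 \log(2/\delta)/(2t)}$. Adding them gives $2\sqrt{Db^2 \log(2/\delta)/(2t)} = \sqrt{2Db^2 \log(2/\delta)/t}$, which is precisely the right-hand side of Equation~\ref{eqn:prop-2}. The distance part of the argument is thus essentially mechanical once Proposition~\ref{prop:1} is applied at $x = \mu$.

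The step that deserves the most care is the probability bookkeeping. Each call to Proposition~\ref{prop:1} holds with probability $(1-\delta)^D$, one factor of $1-\delta$ per coordinate arising from the Hoeffding--Azuma step. The statement's confidence $(1-\delta)^{2D}$ then follows from treating the two calls --- at times $t$ and $t+i$ --- as contributing $2D$ coordinate-level good events and multiplying their probabilities. The subtlety, and the main obstacle, is that the events at $t$ and $t+i$ are \emph{not} independent: $\mu_{t+i}$ reuses the first $t$ samples that define $\mu_t$. I would therefore either present $(1-\delta)^{2D}$ as the value obtained under the working convention that the two deviation events are combined multiplicatively, or replace it with a union bound over the $2D$ coordinate events; the clean distance inequality above is unaffected by which convention is adopted.
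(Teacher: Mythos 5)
Your proposal matches the paper's proof essentially step for step: the paper likewise splits $d(\mu_t,\mu_{t+i}) \leq d(\mu_t,\mu) + d(\mu_{t+i},\mu)$, applies Proposition~\ref{prop:1} at $x=\mu$ (so $d(\mu,\mu)=0$) at both times, uses $t+i \geq t$ to bound both terms by $\sqrt{Db^2\log(2/\delta)/(2t)}$, and multiplies the two $(1-\delta)^D$ confidences to get $(1-\delta)^{2D}$. Your observation that the two deviation events are dependent (since $\mu_{t+i}$ reuses the first $t$ samples) is a point of care the paper's proof silently glosses over, and your suggested union-bound fix is a sound alternative.
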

\textbf{Outline}. Equipped with these results, we present our summarization algorithm, {\X} (Algorithm~\ref{alg:online-ct}). The main idea behind our approach is to maintain a small reservoir of sentences $\mathcal R$ (with a {maximum} capacity of $c_{\mathrm{max}}$) along with the cover tree index ({ct}). The reservoir's capacity $\mathcal{R}$ is set to be more than the summary budget $k$. 
$\mathcal{R}$ facilitates the computation of $k$-nearest neighbours of $\mu_{t+i}$ ($i>0$),  in the subsequent time steps instead of executing expensive 
$\mathrm{knn}$ queries {on} the entire cover tree. We initialize $\mathcal{R}$ with points close to the current centroid $\mu_t$ from the cover tree, $\texttt{ct}$. Naively applying a range search query in the cover tree can return all representations within a radius of $r$ to the input query $\mu_t$.  To have more than $k$ representations in $\mathcal{R}$, the range search radius $r$, should be at least $r \geq d_k$ (where $d_{k}$ is the distance of the $k$-th nearest neighbour from $\mu_t$). We use  Proposition~\ref{prop:2} (which provides an estimate of how far the centroid may shift) to set the radius of the range search query as shown below:
\begin{equation}
	r = d_{k} +  \underbrace{\sqrt{{2\alpha Db^2\log(2/\delta)}/{t}}}_{\Huge \lambda},
 \label{eqn:threshold}
\end{equation}
where $\alpha$ is a hyperparameter. We set the confidence $\delta = O(1/t)$. The size of the reservoir $\mathcal{R}$ is proportional to the radius $r$, as more samples would be selected with a higher $r$. Depending on the reservoir budget $c_{\mathrm{max}}$, the radius can be tuned using the parameter, $\alpha$. 

\begin{figure}[t!]
    \begin{minipage}{0.48\textwidth}
    \begin{algorithm}[H]
    	\input{algorithm/algorithm.tex}
    \end{algorithm}
    \end{minipage}
    \hfill
    \begin{minipage}{0.51\textwidth}
    \vspace{0pt}  
    \begin{algorithm}[H]
        \input{algorithm/combined_search}
    \end{algorithm} 
    \end{minipage}
\end{figure}

Therefore, re-initializing the reservoir requires the following queries to the cover tree: (a) $k$-nearest neighbour query to estimate $d_k$ (required in computing $r$ in Eqn.~\ref{eqn:threshold}) and (b) range search query with the radius $r$. These queries are executed for the same point $\mu_t$ and their sequential execution involves quite a bit of redundant computation. A naive way to reduce this computation is through memorization. Instead, we present an elegant algorithm, \textit{reservoir search}, to retrieve the reservoir elements given a threshold $\lambda$ and summary budget $k$ in Algorithm~\ref{alg:reservoir-search}. This algorithm can retrieve the reservoir in a single tree traversal in $O(\log n)$.

\label{sec:rs}
\textbf{Reservoir Search}. In Algorithm~\ref{alg:reservoir-search}, we use a modified version of the nearest neighbour search algorithm in SG Trees to populate the reservoir $\mathcal{R}$ whenever the distance to a node is within the range search radius (Line~\ref{lst:rs_add}). This algorithm does not miss out on any potential reservoir candidates because at any time the $d_k$ estimate is more than the exact distance to the $k$-th neighbour. We perform a filtering step at the end to remove points outside the actual range search radius (Line~\ref{lst:filter}). We also use the range radius to ignore nodes that do not have any children within the search radius (Line~\ref{lst:ignore}). We use the reservoir search method to efficiently update the reservoir, $\mathcal{R}$, during incremental summarization in Algorithm~\ref{alg:online-ct}. Next, we describe the outline of our incremental summarization algorithm, {\X}.

\textbf{CoverSumm Algorithm}. In Algorithm~\ref{alg:online-ct}, at every time step $t$, we update the current centroid $\mu_t$ (Line~\ref{lst:update}) and insert $x_t$ into the cover tree \texttt{ct} (Line~\ref{lst:insert}).   We also compute the drift $\Delta$, which measures how much $\mu_t$ has shifted from $\mu_{\mathrm{last}}$ (Line~\ref{lst:drift}),  when the last time reservoir search was performed. If the drift $\Delta$ was more than the threshold $\lambda/2$ or the reservoir is at capacity, it implies that the current reservoir $\mathcal{R}$ does not contain all the $k$-nearest neighbours of $\mu_t$ (proof in Proposition~\ref{lemma:exact}). In this case, we reinitialize the reservoir by executing {reservoir search} query on the cover tree (Line~\ref{lst:rs}). If $\Delta$ is within the threshold, we update the $\mathcal{R}$ with current point $x_t$ if needed (Line~\ref{lst:add}). Finally, we compute the summary $\mathbf{S}_t$ by performing $k$-nearest neighbour search in the reservoir. The review text corresponding to the $k$-nearest neighbours is the output extractive summary. An illustration of the overall summarization algorithm is shown in Figure~\ref{fig:sgtree}.

We can improve the efficiency of {\X} even further, by delaying the cover tree insertions (Line~\ref{lst:insert}) to only when the drift exceeds the threshold (Line~\ref{lst:cond_1}) as the cover tree is not utilized in the other computation flow. {\X} can also facilitate the deletion of reviews in cases where certain reviews need to be deleted due to spam or offensive content (more details in Appendix~\ref{sec:deletion}). In the following section, we present several theoretical properties of {\X}.

\subsection{Theoretical Analysis}
\label{sec:5}

In this section, we analyze properties of the efficiency and accuracy of {\X}, including its ability to produce the correct nearest neighbors for CentroidRank, the number of \textit{reservoir search} queries required, the size of the reservoir used, and the rate of change of search queries.

\begin{figure*}[t!]
	\centering
	\includegraphics[width=0.85\textwidth, keepaspectratio]{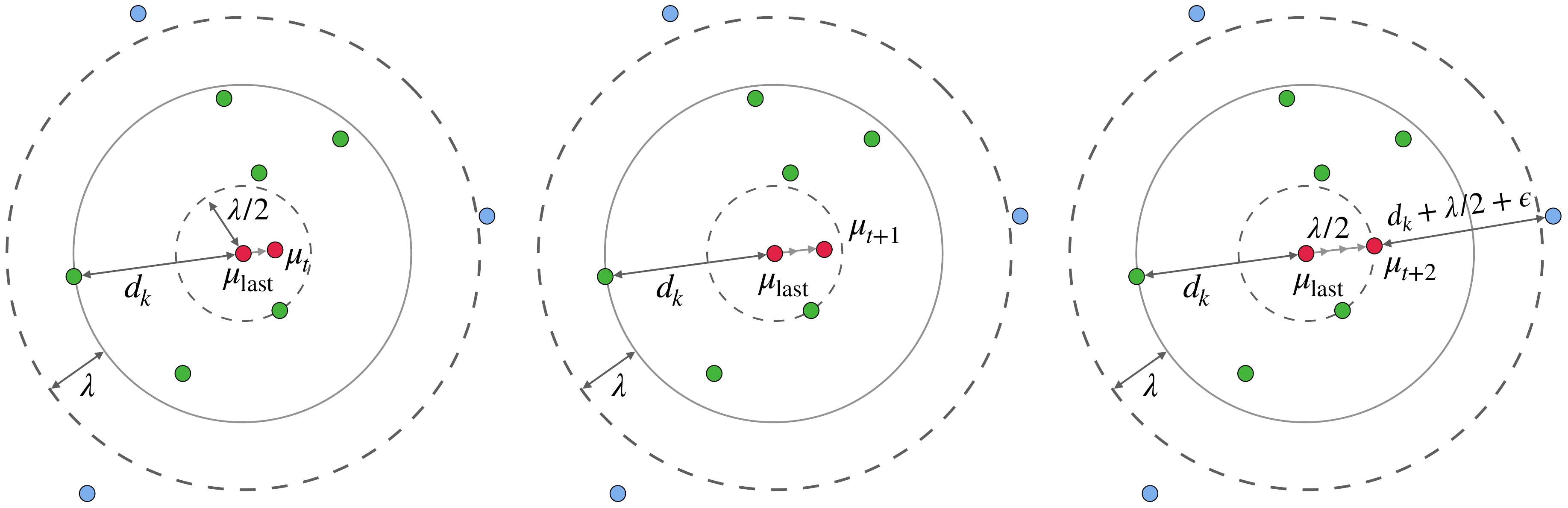}
	\caption{An illustration of {\X}'s operation over three consecutive time steps. Red circles \tikzcircle[fill=red]{2.5pt} represent centroids at different times, green circles \tikzcircle[fill=green]{2.5pt} indicate current nearest neighbors in reservoir $\mathcal{R}$, and blue circles \tikzcircle[fill=cyan]{2.5pt} denote representations outside $\mathcal{R}$. The figure displays the last query $\mu_{\mathrm{last}}$ with radius $(d_{\mathrm{k}}+\lambda)$. 
    In the first two cases, the current centroid is within distance $\lambda$ of $\mu_{\mathrm{last}}$, and the summary can be retrieved from the reservoir. The rightmost figure shows a boundary case where a representation is just outside the reservoir's boundary and summary computation requires querying the cover tree. }
	\label{fig:viz}
 \vspace{-10pt}
\end{figure*}

\begin{prop}[Correctness of Reservoir Search]
    If $\texttt{ct}$ is a SG Tree, then \textbf{ReservoirSearch}$(\mu, \lambda, k)$ returns all the neighbours of $\mu$ within a distance of $(d_k + \lambda)$, where $d_k$ is the distance of $\mu$ to its $k$-th nearest neighbour.
\end{prop}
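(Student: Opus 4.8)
The plan is to prove both \emph{completeness} (every point within distance $d_k + \lambda$ of $\mu$ is returned) and \emph{soundness} (nothing farther is returned) by maintaining two invariants on the tree traversal and then invoking the triangle inequality through the stored subtree-radius field $q.\mathrm{max}$. First I would fix the semantics of that field: for any node $q$, $q.\mathrm{max}$ upper-bounds $d(q,p)$ over all descendants $p$ of $q$. This is the only structural fact the correctness argument needs, and it holds for the SG tree just as for ordinary cover trees; the separation/covering invariants control \emph{efficiency}, not correctness, so the restriction to SG trees is immaterial to what must be shown.

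The central invariant is that, at every iteration of the while loop, the running estimate $d_k = \mathcal{N}.\mathrm{max}$ is at least the true distance $d_k^\star$ from $\mu$ to its $k$-th nearest neighbour. This holds because $\mathcal{N}$ stores the $k$ closest nodes examined so far (a sorted list capped at size $k$): the $k$-th closest among any subset of points can only be farther than the $k$-th closest among all points, and while fewer than $k$ have been seen $\mathcal{N}.\mathrm{max} = \infty$. Consequently the working radius $r = d_k + \lambda$ satisfies $r \geq d_k^\star + \lambda$ throughout; it starts at $\infty$ and shrinks monotonically toward $d_k^\star + \lambda$ but never drops below it.

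For completeness, let $p$ be any point with $d(\mu, p) \leq d_k^\star + \lambda$. I would argue by induction along the root-to-$p$ path that every ancestor $a$ of $p$ is enqueued and never short-circuited. In both the child-admission test (Line~\ref{lst:q_add}) and the subtree-skip test (Line~\ref{lst:ignore}), the quantity compared against $r$ is $d(\mu, a) - a.\mathrm{max}$; since $a.\mathrm{max} \geq d(a,p)$, the triangle inequality gives $d(\mu, a) - a.\mathrm{max} \leq d(\mu, p) \leq d_k^\star + \lambda \leq r$, so $a$ always passes. Hence $p$ is eventually popped, and at that moment $d(\mu, p) \leq r$, so $p$ enters $\mathcal{R}$ (Line~\ref{lst:rs_add}). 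The same reasoning applied to the true $k$ nearest neighbours shows they are all visited, so at termination $\mathcal{N}.\mathrm{max} = d_k^\star$ exactly.

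Finally, the filtering step (Line~\ref{lst:filter}) keeps exactly $\{p \in \mathcal{R} : d(\mu, p) \leq \mathcal{N}.\mathrm{max} + \lambda\} = \{p : d(\mu, p) \leq d_k^\star + \lambda\}$, which by the previous paragraph contains every qualifying point and by construction excludes every non-qualifying one, yielding the claimed set. The main obstacle is that $r$ varies during the traversal, so one might worry a subtree gets pruned using a radius later revealed to be too small. The crux that dispels this worry is precisely the over-estimation invariant of the second paragraph: because $d_k$ is always at least $d_k^\star$, every pruning decision is taken with a radius no smaller than the final target $d_k^\star + \lambda$. Pinning down this invariant and its interaction with the $q.\mathrm{max}$ triangle-inequality bound is the heart of the argument; the rest is bookkeeping.
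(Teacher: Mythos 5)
Your proof is correct and follows essentially the same route as the paper's: the key invariant that the running estimate $d_k = \mathcal{N}.\mathrm{max}$ never falls below the true $k$-th NN distance, hence the pruning tests at Lines~\ref{lst:ignore} and~\ref{lst:q_add} can never discard an ancestor of a qualifying point, with the final filter at Line~\ref{lst:filter} yielding exactness. You merely make explicit what the paper leaves implicit (the triangle-inequality role of $q.\mathrm{max}$, the induction along the root-to-$p$ path, and that $\mathcal{N}.\mathrm{max}$ equals the true $d_k$ at termination), which strengthens rather than changes the argument.
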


\begin{proof}
    First, we note that $\mathcal{N}.\text{max} = \infty$ at the beginning and $d_k$ (Line~\ref{lst:d_k}) is always more than or equal to the exact distance to the $k$-th NN during the algorithm. This means that Line~\ref{lst:q_add} cannot throw away any grandparent of a nearest neighbour within distance $r = d_k + \lambda$. Similarly, Line~\ref{lst:rs_add} cannot ignore a nearest neighbour within distance $r$. Since, the reservoir $\mathcal{R}$ now contains all the possible NN candidates, Line~\ref{lst:filter} returns the set of exact nearest neighbours within distance $r$ of query $\mu$. 
\end{proof}

\begin{prop}[Exact nearest neighbours]
    {The $k$-nearest neighbours returned by Algorithm~\ref{alg:online-ct}, $\mathbf{S}_t$ are the exact $k$-nearest neighbours of $\mu_t$, the mean at every time step $t$, i.e., $\mathbf{S}_t = \texttt{knn}(\mathcal{X}, \mu_t)$.}
    \label{lemma:exact}
\end{prop}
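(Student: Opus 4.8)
The plan is to show that the reservoir $\mathcal{R}$ maintained by Algorithm~\ref{alg:online-ct} always contains the exact $k$ nearest neighbours of the current centroid $\mu_t$ within $\mathcal{X}_t$. Since $\mathcal{R}\subseteq\mathcal{X}_t$ at all times, the concluding line $\mathbf{S}_t=\texttt{knn}(\mathcal{R},\mu_t)$ then necessarily returns $\texttt{knn}(\mathcal{X}_t,\mu_t)$. I would establish this invariant by a case analysis on the branch taken at step $t$, noting that the argument is fully deterministic (the probabilistic bounds of Proposition~\ref{prop:2} govern how \emph{often} the branch fires, not correctness). If the condition on Line~\ref{lst:cond_1} fires (drift $\Delta\ge\lambda/2$ or $|\mathcal{R}|\ge c_{\mathrm{max}}$), the reservoir is rebuilt by $\texttt{ReservoirSearch}(\mu_t,\lambda,k)$ over the entire tree, which by the Correctness of Reservoir Search proposition returns every point of $\mathcal{X}_t$ within distance $d_k+\lambda$ of $\mu_t$; as $d_k+\lambda\ge d_k$ this set contains all $k$ nearest neighbours, so the invariant holds trivially. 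The substantive case is the fast path, where the reservoir is reused.

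For the fast path, let $t_0\le t$ be the most recent step at which a reservoir search was performed, write $\mu_{\mathrm{last}}=\mu_{t_0}$, and let $d_k$ denote the $k$-th nearest-neighbour distance of $\mu_{t_0}$ in $\mathcal{X}_{t_0}$. Immediately after that search, $\mathcal{R}$ held all points of $\mathcal{X}_{t_0}$ within $d_k+\lambda$ of $\mu_{t_0}$; in the intervening steps the algorithm has only \emph{added} points (Line~\ref{lst:add}) and never removed any, so the current $\mathcal{R}$ still contains all of those old points together with every newly arrived $x_s$ that passed the admission test $\|\mu_{\mathrm{last}}-x_s\|\le r$. Because the branch did not fire at any step since $t_0$, we also know $\Delta=\|\mu_t-\mu_{\mathrm{last}}\|<\lambda/2$.

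The core estimate I would prove is that every $k$-nearest neighbour $y$ of $\mu_t$ satisfies $d(\mu_{\mathrm{last}},y)<d_k+\lambda$. This rests on two monotonicity facts about the $k$-th NN distance function $d_k(\cdot,\cdot)$ that I would state as a preliminary lemma: it is $1$-Lipschitz in the query point, and it is non-increasing as points are inserted into the set. Combining these with $\mathcal{X}_{t_0}\subseteq\mathcal{X}_t$ gives
\begin{equation}
d_k(\mu_t,\mathcal{X}_t)\;\le\;d_k(\mu_t,\mathcal{X}_{t_0})\;\le\;d_k(\mu_{t_0},\mathcal{X}_{t_0})+\Delta\;=\;d_k+\Delta,
\end{equation}
and then a single triangle inequality yields
\begin{equation}
d(\mu_{\mathrm{last}},y)\;\le\;\Delta+d(\mu_t,y)\;\le\;\Delta+d_k(\mu_t,\mathcal{X}_t)\;\le\;d_k+2\Delta\;<\;d_k+\lambda.
\end{equation}
For $y\in\mathcal{X}_{t_0}$ this bound immediately places $y$ in $\mathcal{R}$; for a newly arrived $y$, this same bound is what the fast-path admission test on Line~\ref{lst:add} must be shown to respect. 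Together these give $\mathcal{R}\supseteq\texttt{knn}(\mathcal{X}_t,\mu_t)$, closing the invariant and hence the claim.

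I expect the main obstacle to be the treatment of the newly inserted points rather than the old ones. The admission test uses the \emph{realized} reservoir radius $r=\max_{q\in\mathcal{R}}d(\mu_{\mathrm{last}},q)$, which can be strictly smaller than the analytic radius $d_k+\lambda$, so I must argue that no newly arrived point in the ``gap'' $(r,\,d_k+\lambda]$ can become a $k$-nearest neighbour of $\mu_t$ while $\Delta<\lambda/2$ — equivalently, that the drift trigger on Line~\ref{lst:cond_1} fires before the reservoir can become stale. This is precisely the boundary situation depicted in Figure~\ref{fig:viz}, and it is where the $\lambda/2$ threshold (half the analytic slack) is used rather than the full $\lambda$. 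The two supporting facts about $d_k$ are routine but should be stated explicitly, since the entire argument hinges on pushing $d_k(\mu_t,\mathcal{X}_t)$ back to $d_k(\mu_{t_0},\mathcal{X}_{t_0})$ even though both the query point and the underlying set have changed between $t_0$ and $t$.
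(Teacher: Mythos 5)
Your overall architecture is the same as the paper's: the rebuild branch is exact by the Correctness of Reservoir Search, and the fast path comes down to two triangle inequalities controlled by $\Delta < \lambda/2$. Your chain $d(\mu_{\mathrm{last}},y)\le \Delta + d_k(\mu_t,\mathcal{X}_t)\le d_k+2\Delta<d_k+\lambda$ (via the $1$-Lipschitz and monotonicity facts about the $k$-th NN distance, both of which are correct) is essentially the mirror image of the paper's argument, which instead separates two-sidedly: all $k$ points of $\mathbf{S}_{\mathrm{last}}$ satisfy $d(x,\mu_t)\le d(x,\mu_{\mathrm{last}})+\Delta<d_k+\lambda/2=d_{\mathrm{min}}$, while every $x\in\mathcal{X}_t\setminus\mathcal{R}$ satisfies $d(x,\mu_t)>d_{\mathrm{min}}$. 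For points already indexed at the last reservoir search, your containment argument is complete and slightly more systematic than the paper's.

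The genuine gap is the step you yourself flag and defer: admission of \emph{new} arrivals against the realized radius $r=\max\{d(\mu_{\mathrm{last}},q)\mid q\in\mathcal{R}\}$. Your hoped-for resolution --- ``the drift trigger on Line~\ref{lst:cond_1} fires before the reservoir can become stale'' --- is not available: if at rebuild time the annulus $(d_k,\,d_k+\lambda]$ around $\mu_{\mathrm{last}}$ happens to be empty, then $r=d_k$, and a later arrival at distance $d_k+\epsilon$ from $\mu_{\mathrm{last}}$ with $0<\epsilon<2\Delta$ is rejected at Line~\ref{lst:add} even though, if it lies on the side toward which the centroid has drifted, its distance to $\mu_t$ is about $d_k+\epsilon-\Delta<d_k+\Delta$, strictly beating an old $k$-th neighbour sitting on the opposite side; all of this is compatible with $\Delta<\lambda/2$, so no rebuild is triggered. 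In other words, the missing step, as posed for the realized radius, is false, and no amount of work on the drift condition will prove it. The paper's proof closes this case by implicitly assuming every $x\in\mathcal{X}_t\setminus\mathcal{R}$ lies outside the ball of radius $d_k+\lambda$ around $\mu_{\mathrm{last}}$ --- i.e., admission is effectively against the \emph{analytic} radius $d_k+\lambda$ (this is the boundary picture in Figure~\ref{fig:viz}, where such a point forces a tree query). Under that reading, your bound $d(\mu_{\mathrm{last}},y)<d_k+\lambda$ already covers newly arrived true neighbours, and your proof closes with no further ideas needed; so you isolated the crux correctly, but the fix is to strengthen the admission radius, not to argue about when the trigger fires.
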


\begin{proof}
    We first note that  $\mathcal{R}$ is initialized using reservoir search with radius $r = d_{k} + \lambda$, where $d_{k}$ is the distance of $\mu_{t}$ to its $k$-th neighbour. Therefore, for any radius $r \geq d_{k}$, the reservoir size is always $|\mathcal{R}| \geq k$. 

Next, we consider the case when the updated centroid $\mu_t$ is within distance $\|\mu_t - \mu_{\mathrm{last}}\| < \lambda/2$. In this case, the distance from $\mu_t$ of any representation not in the reservoir (blue circles \tikzcircle[fill=cyan]{2.5pt} in Figure~\ref{fig:viz} (left \& center))
is at least  $d(x, \mu_t) > d_{\mathrm{min}} = d_{k} + \lambda/2, \; \forall x \in  \mathcal{X}_t \setminus \mathcal{R}$. Now, note that all points in $\mathbf{S}_{\mathrm{last}} \in \mathcal{R}$, as the $\mathcal{R}$ has not been updated after that (also $|\mathbf{S}_{\mathrm{last}}| = k$).
Next, we can show that:
\begin{equation}
\begin{aligned}
     \forall {x \in \mathbf{S}_{\mathrm{last}}}, \; d(x, \mu_t) &\leq d(x, \mu_{\mathrm{last}}) + d(\mu_t, \mu_{\mathrm{last}})\\
     &<  d_k + \lambda/2 = d_{\mathrm{min}} 
\end{aligned}
\end{equation}
This shows that the reservoir has at least $k$ points within $d_{\mathrm{min}}$ of $\mu_t$. Therefore, performing \texttt{knn} within $\mathcal{R}$ 
is exact.  
When the updated mean $\mu_t$ drifts more $\Delta \geq \lambda/2$ (shown in Figure~\ref{fig:viz} (right)), we perform a reservoir search query on the entire cover tree which is also exact. 
\end{proof}

Now that we know that {\X} generates the exact nearest neighbours at all time steps. We focus on estimating the number of reservoir search queries (Line~\ref{lst:rs} in Algorithm~\ref{alg:online-ct}) required to achieve it in the following proposition (proof in Appendix~\ref{sec:lemma4_}).
\begin{prop}[Number of reservoir search queries]
    {Assuming incoming points $x_t \in [-b/2, b/2]^D$, the number of reservoir search queries on the cover tree executed $n_{\mathrm{rs}} = O(D\log n)$, where $n$ is the total number of points (e.g., review sentences) and $D$ is the dimensionality of the data.}
    \label{lemma:3}
\end{prop}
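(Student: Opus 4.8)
The plan is to bound the number of times the branch at Line~\ref{lst:cond_1} of Algorithm~\ref{alg:online-ct} is taken, since each such event triggers exactly one reservoir search. Let $t_0 < t_1 < \cdots < t_m$ denote the steps at which a reservoir search is invoked; the goal is to show $m = O(D\log n)$. A search at step $t$ fires only because of one of two events: the \emph{drift} event $\Delta = \|\mu_t - \mu_{\mathrm{last}}\| \geq \lambda/2$, or the \emph{capacity} event $|\mathcal{R}| \geq c_{\mathrm{max}}$. I would bound the number of each type separately and then add them.

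For the drift triggers, the key idea is that the empirical centroid stabilizes as $t$ grows, so the gap between consecutive drift-triggered searches grows multiplicatively. At a search time $t_j$ the algorithm freezes $\mu_{\mathrm{last}} = \mu_{t_j}$ and the threshold $\lambda_j/2 = \tfrac{1}{2}\sqrt{2\alpha Db^2\log(2/\delta_j)/t_j}$ with $\delta_j = O(1/t_j)$ (Equation~\ref{eqn:threshold}). By Proposition~\ref{prop:2}, with probability $(1-\delta)^{2D}$ the drift from $\mu_{t_j}$ obeys $\|\mu_t - \mu_{t_j}\| \leq \sqrt{2Db^2\log(2/\delta)/t_j}$ simultaneously for every later step $t$. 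The next drift trigger cannot occur until the actual drift reaches the frozen $\lambda_j/2$, and since the actual drift is dominated by the bound above, this forces a multiplicative gap: comparing $\sqrt{2Db^2\log(2/\delta_t)/t_j}$ against $\lambda_j/2$ under the schedule $\delta = O(1/t)$ yields $\log(2/\delta_{t_{j+1}}) \gtrsim \tfrac{\alpha}{4}\log(2/\delta_{t_j})$, i.e. $t_{j+1} \geq \rho\, t_j$ for a constant $\rho>1$. Telescoping from $t_0$ to $n$ gives at most $O(\log_\rho n) = O(\log n)$ drift-triggered searches.

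For the capacity triggers, I would argue that the reservoir rarely overflows once the radius $r = d_k + \lambda$ has been recalibrated. Between searches the reservoir gains at most one point per step (Line~\ref{lst:add}), admitting a point only if it lands within distance $r$ of $\mu_{\mathrm{last}}$. Because $d_k$ is exactly the radius capturing the $k$ nearest neighbours while $\lambda \sim 1/\sqrt{t}$ adds only a lower-order shell, the expected number of i.i.d.\ points falling inside this ball over the remainder of an epoch is $O(k)$; hence, with $c_{\mathrm{max}} = \Theta(k)$ as the method uses, each geometric epoch above absorbs its in-radius points while overflowing at most a constant number of times. Capacity triggers therefore contribute at most $O(\log n)$ further searches, and the two bounds combine to $O(\log n)$ at a fixed confidence level.

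Finally, the dimension factor $D$ enters through the concentration guarantees themselves: Propositions~\ref{prop:1} and~\ref{prop:2} hold only with probability $(1-\delta)^{D}$ and $(1-\delta)^{2D}$, i.e.\ after a union bound over the $D$ coordinates in the per-coordinate Hoeffding--Azuma step. To make the drift bound valid simultaneously at all $n$ steps I would union bound over $t$ with $\delta = O(1/t)$, which inflates the count by the $O(D)$ coordinate-wise failure budget and yields the stated $n_{\mathrm{rs}} = O(D\log n)$. The main obstacle I anticipate is exactly this bookkeeping: reconciling the threshold $\lambda_j$ that is frozen at the last search with the drift bound evaluated at the current step under the time-varying confidence $\delta = O(1/t)$, and ensuring the two trigger conditions do not interact to force more than a constant number of searches per geometric epoch. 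Extracting a clean $\rho>1$ (equivalently, a safe lower bound on $\alpha$) from this comparison is the crux; once $\rho>1$ is secured, the geometric telescoping together with the union bound delivers the result.
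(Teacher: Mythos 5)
Your proposal is correct in substance but organized quite differently from the paper's proof. The paper's argument (Appendix~\ref{sec:lemma4_}) is a one-step expectation computation: it identifies a reservoir search at step $t$ with a violation of the Proposition~\ref{prop:2} bound, which occurs with probability $p_t = 1-(1-\delta)^{2D}$, and then evaluates $n_{\mathrm{rs}} = \sum_t p_t \approx 2D\sum_t \delta_t = O(D\log n)$ under the schedule $\delta = \Theta(1/t)$ --- essentially just the failure-mass bookkeeping in your final paragraph, and nothing else. You additionally (a) separate drift-triggered from capacity-triggered searches and bound the capacity triggers via the $O(k)$ reservoir size, whereas the paper's proof is entirely silent on the $|\mathcal{R}| \geq c_{\mathrm{max}}$ branch of Line~\ref{lst:cond_1}, and (b) show that on the good event the drift triggers are geometrically (indeed polynomially, $t_{j+1}\gtrsim t_j^{\alpha/4}$) spaced, rather than simply postulating that searches occur exactly when the bound fails. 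This buys rigor the paper skips, at the price of the constraint you yourself flag: securing $\rho>1$ needs $\alpha > 4$ up to constants, a hypothesis absent from the proposition as stated --- the paper avoids it only because its modeling assumption equates triggers with violations, implicitly absorbing the $\sqrt{\alpha}/2$ gap between the frozen threshold $\lambda/2$ and the Proposition~\ref{prop:2} deviation. Note also that both arguments bound $n_{\mathrm{rs}}$ in expectation: the ``simultaneous over all $n$ steps'' event has failure probability on the order of $\sum_t 2D\delta_t$, which exceeds $1$, so your move of charging each failed step at most one extra search and summing the expected failure mass is the right one, and is precisely the computation the paper performs.
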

We observe that the reservoir search query count grows with the logarithm of the total number of representations $n$, unlike baseline approaches that execute nearest neighbour queries at every step. Although the overall complexity of our algorithm is $O(n\log n)$, since we perform insertions ($O(\log n)$ complexity) at each step, the constants involved for nearest neighbour and range search queries on cover trees are significantly larger. As a result, limiting the number of nearest neighbour queries leads to substantial speed improvements in practice.

Next, we investigate the storage cost involved in maintaining the reservoir $\mathcal{R}$ (detailed proof in Appendix~\ref{sec:proof_lemma4}).
\begin{prop}[Maximum reservoir size]
    \textit{For points arriving from a metric space $(\mathcal{X}, d)$, the upper bound of the reservoir  size ($|\mathcal{R}|$) at large time steps ($t \gg 0$) is $O(k)$}. 
    \label{lemma:4}
\end{prop}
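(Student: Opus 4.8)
The plan is to reduce the question to counting how many data points lie inside a ball around the current centroid. By the correctness of \textbf{ReservoirSearch} established above, immediately after a re-initialization the reservoir is exactly the set of points within distance $r = d_k + \lambda$ of $\mu_t$; writing $B(\mu_t, \rho)$ for the set of data points within radius $\rho$ of $\mu_t$, we have $\mathcal{R} = B(\mu_t, d_k + \lambda)$. Between re-initializations $\mathcal{R}$ only grows by points that also lie within this radius (Line~\ref{lst:add}), so it suffices to upper bound $|B(\mu_t, d_k + \lambda)|$.

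First I would control the padding term $\lambda$. Since $\lambda = \sqrt{2\alpha D b^2 \log(2/\delta)/t}$ and we set $\delta = O(1/t)$, we have $\lambda = O(\sqrt{\log t / t})$, which tends to $0$ as $t \to \infty$. The key structural step is then to show that for all sufficiently large $t$ the padding is dominated by the $k$-nearest-neighbour radius, $\lambda \le d_k$, so that $r = d_k + \lambda \le 2 d_k$.

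Granting $r \le 2 d_k$, I would invoke the doubling (bounded-expansion) property of the metric space that cover trees presuppose: there is a constant $c$ such that $|B(p, 2\rho)| \le c\,|B(p,\rho)|$ for every center $p$ and radius $\rho$. Applying this with $p = \mu_t$ and $\rho = d_k$, and using that $B(\mu_t, d_k)$ contains exactly the $k$ nearest neighbours, gives
\begin{equation}
|\mathcal{R}| = |B(\mu_t, d_k + \lambda)| \le |B(\mu_t, 2 d_k)| \le c\,|B(\mu_t, d_k)| = c\,k = O(k),
\end{equation}
as claimed.

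The main obstacle is the intermediate claim $\lambda \le d_k$ for large $t$, since $d_k$ itself shrinks as points accumulate near the centroid. I would handle this by deriving a matching lower bound on $d_k$ from the same expansion constant: iterating $|B(p,2\rho)| \le c\,|B(p,\rho)|$ shows that a ball capturing all $t$ points and one capturing only $k$ points differ in radius by a factor polynomial in $t/k$, yielding $d_k = \Omega\bigl((k/t)^{1/\log_2 c}\bigr)$. Comparing this with $\lambda = O(\sqrt{\log t/t})$, the ratio $\lambda/d_k$ behaves like $t^{1/\log_2 c - 1/2}\sqrt{\log t}$ (up to factors of $k$), which vanishes once the effective dimension $\log_2 c$ exceeds $2$; hence $\lambda \le d_k$ for all large $t$ and the argument closes. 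This comparison of decay rates — ensuring the range-search slack decays strictly faster than the $k$-NN radius — is the delicate part, and is precisely where an assumption on the data density (or bounded effective dimension) must enter.
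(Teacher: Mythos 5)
Your overall skeleton is the same as the paper's: after a re-initialization the reservoir is exactly the ball $B(\mu_t, d_k+\lambda)$, intermediate insertions stay inside that ball, $\lambda = O(\sqrt{\log t/t}) \to 0$ since $\delta = O(1/t)$, and a doubling/expansion constant $c$ converts the radius inflation into a multiplicative factor on $k$. The paper writes this as $|\mathcal{R}| \le c^{\lceil \log_2(1+\lambda/d_k)\rceil} k$ and then, for $t \gg 0$, treats $\lambda/d_k$ as a vanishing quantity $\epsilon$ so the exponent collapses to $1$ and $|\mathcal{R}| \le ck$ --- which is equivalent to your single doubling step under the hypothesis $\lambda \le d_k$.

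The genuine flaw is your intermediate claim that iterating $|B(p,2\rho)| \le c\,|B(p,\rho)|$ yields $d_k = \Omega\bigl((k/t)^{1/\log_2 c}\bigr)$. The inequality runs the other way: comparing the ball of radius $d_k$ (containing $k$ points) with a ball of radius $R$ containing all $t$ points gives $t/k \le c^{\lceil \log_2(R/d_k)\rceil}$, i.e., the \emph{upper} bound $d_k \le R\,(k/t)^{1/\log_2 c}$ --- exactly the reverse of what you need. A bounded expansion constant cannot lower-bound $d_k$ at all: place $k$ points within radius $\eta$ of the centroid and the remaining points in geometric shells (roughly $2^j k$ points at distance $2^j$), so that $c = O(1)$ while $d_k \approx \eta$ is arbitrarily small relative to $\lambda$; then your step $\lambda \le d_k$ fails at every large $t$. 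Consequently your rate comparison $\lambda/d_k \sim t^{1/\log_2 c - 1/2}\sqrt{\log t}$ is valid only under an additional lower-bound-density (e.g., Ahlfors-regularity) assumption, which your final sentence gestures at but never actually states, so as written the proof does not close. To be fair, the paper's own proof elides the same point: it silently assumes $\lambda/d_k$ remains below $1$ as $t$ grows (treating $d_k$ as not shrinking faster than $\lambda$) when it replaces the exponent by $\lceil\log_2(1+\epsilon)\rceil = 1$. Your version, with the density hypothesis made explicit, would actually be the more rigorous of the two, including the honest caveat that the comparison only vanishes when the effective dimension $\log_2 c$ exceeds $2$.
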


The above result shows that the storage cost of the reservoir at large time steps is relatively low and does not scale with the number of points or dimensions of the data.  We provide additional theoretical results about the interval between consecutive nearest neighbour cover queries in Appendix~\ref{sec:addl_theory}.

\vspace{-1mm}
\section{Evaluation}
\label{sec:6}
\vspace{-1mm}

In this section, we describe the dataset and baseline approaches used in our experiments. 

\textbf{Datasets}. We evaluate {\X} on both synthetic and real-world datasets. For synthetic data, we experiment with two different setups: (a) we sample representations $x$ from a uniform distribution, where each representation serves as a proxy text representation, and (b)  we sample synthetic data from an LDA process (more details in Appendix~\ref{sec:lda}) to mimic data from the textual domain. We use $10$K vectors of $x \in \mathbb{R}^{100}$ for all experiments using synthetic data. We also perform experiments on two real-world datasets: (a) \textsc{Space} dataset~\citep{qt} has $\geq 3$K hotel reviews per entity with a total of $\geq 50$K review sentences. We use sentence representations, $x \in \mathbb{R}^{8192}$, from the state-of-the-art extractive summarization system SemAE~\citep{semae} on \textsc{Space}. (b) Amazon US reviews~\citep{he2016ups} have product reviews along with their temporal order. {We only consider products with more than 1000 reviews in this setup.}
To simulate an incremental setup, we present an algorithm with one review sentence at a time and evaluate its performance by reporting the computation time and quality of the generated summaries.

\textbf{Baselines.} We compare the efficiency gains of {\X} with the following centroid-based baselines: 
\begin{itemize}[leftmargin=*, topsep=0pt, noitemsep]
    \item {Brute force}: computes the $k$NN from the updated mean $\mu_t$ at every step.
    \item {Naive CT}: performs nearest neighbour query on the cover tree at each step.
    \item \edit{Naive HNSW: performs NN query on an HNSW~\citep{hnsw} index at each step.}
    \item \edit{Naive FAISS: performs NN query on a FAISS~\citep{faiss} index at each step.}
    \item {{\X} (knn+range)}: uses a modification of the proposed algorithm (Algorithm~\ref{alg:online-ct}), where a separate knn and range search query is executed on the cover tree instead of the reservoir search.  
    \item {{\X} (reservoir)}: implements the proposed online summarization approach in Algorithm~\ref{alg:online-ct}.
    \item {{\X} (lazy reservoir)}: modifies the {\X}'s summarization routine by delaying the insertions into the cover tree only when required (Line~\ref{lst:cond_1} is satisfied). This is the most efficient version of our proposed algorithm, and we will often refer to it as simply {\X}.
\end{itemize}

The above baselines can produce the exact nearest neighbours at every time step. We also compare {\X} with some naive variants that approximate the nearest neighbors to assess the trade-offs between nearest neighbor accuracy and runtime. The variants are listed below:
\begin{itemize}[leftmargin=*, topsep=0pt, noitemsep]
    \item {{\X} (random)}: randomly decides whether an incoming point should be included in the reservoir $\mathcal{R}$.
    \item {{\X} (decay $\lambda$)}: uses an exponential decay term as the threshold (independent of the current points) to decide whether a new point should be part of the reservoir. \edit{Specifically, we change Line~\ref{lst:insert} of the CoverSumm algorithm to $\Delta \leq c_1 \exp^{-c_2T}$, where $c_1, c_2$ are hyperparameters.} 
\end{itemize}

We also compare the quality of {\X}'s  summaries with different extractive summarization algorithms: 
SumBasic~\citep{vanderwende2007beyond}, LexRank~\citep{erkan2004lexrank}, TextRank~\citep{textrank}, Centroid-OPT~\citep{ghalandari2017revisiting}, and Latent Semantic Analysis (LSA)~\citep{ozsoy2011text}. \edit{Centroid OPT is the most similar to CentroidRank. CentroidOPT constructs the summary by greedily selecting sentences that maximize the similarity of the {constructed summary} with the centroid. While CentroidRank measures the similarity between a sentence and the centroid, CentroidOPT does so for the summary.}
In our experiments, we use the SG Tree implementation available in  \texttt{graphgrove} library (more details in Appendix~\ref{sec:setup}). \edit{We set hyperparameters by using grid search on a held-out development set for each dataset.}

\begin{table*}[t!] 
    \centering
    \resizebox{\textwidth}{!}{
\begin{tabular}{@{}cl@{} c c c c c c c c}
    \toprule[1pt]
    & & \multicolumn{2}{c}{{Uniform}} & \multicolumn{2}{c}{{LDA}} & \multicolumn{2}{c}{\textsc{Space}} & \multicolumn{2}{c}{\textsc{Amazon}} \\
    & \textbf{Algorithm}  &  Time (s) & Acc. (\%) & Time (s) & Acc. (\%) & Time (s) & Acc. (\%) & Time (s) & Acc. (\%) \TBstrut\\
    \midrule[0.5pt]
    \parbox[t]{1mm}{\multirow{6}{*}{\rotatebox[origin=c]{90}{\small{Exact}}}} 
    & {Brute force} & 28.23 & 100.0  & 26.09 & 100.0  & 7.78 & 100.0 & 49.82 & 100.0\Tstrut\\
    & Naive CT & 26.37 & 100.0  & 25.21 & 100.0  & 3.23 & 100.0 & 17.53 & ~~99.6\\
    & Naive Faiss & ~2.22 & 100.0 & ~2.25 & 100.0 & 1.30 & 100.0 & ~~4.81 & ~~99.6\\
    & {\X} (knn+range) & ~1.08 & 100.0  & ~1.34 & 100.0  & 1.43 & 100.0  & ~~2.30 & ~~99.7\\
    & {\X} (reservoir) & ~0.91 & 100.0  & ~0.96 & 100.0  & 1.25 & 100.0  & ~~1.47 & ~~99.7\\
    & {\X} (lazy reservoir) & ~\textbf{0.87} & 100.0  & ~\textbf{0.92} & 100.0  & \textbf{1.14} & 100.0  & ~~\textbf{1.36} & ~~99.7\Bstrut\\
    \midrule
    \parbox[t]{1mm}{\multirow{3}{*}{\rotatebox[origin=c]{90}{\small{Appx.}}}} 
    & Naive HNSW &  ~5.35 & ~~4.9 & 4.44 & ~26.6 & 3.25 & ~72.3 & ~~7.07 & ~~81.2\\
    & \texttt{{\X}} (random) & ~0.53 & ~~0.2  & ~0.59 & ~0.2  & 1.02 & ~~2.3 & ~~0.62 & ~~~0.5\\
    & {\X} (decay $\lambda$) & ~1.28 & ~~1.0  & ~0.70 & ~8.9  & 1.35 & ~10.7 & ~~0.53 & ~~~2.3\\ 
    \bottomrule[1pt]
\end{tabular}
}
    \caption{Total incremental summarization runtime (per entity) and nearest neighbour (NN) accuracy for different centroid-based algorithms for representations from synthetic (Uniform \& LDA) and opinion review (\textsc{Space} \& \textsc{Amazon}) datasets. We observe that {\X} outperforms baseline approaches, showcasing up to $36$x speedup. We also observe that reservoir search achieves up to 28\% speedup compared to sequentially performing knn and range queries on the tree. Among the algorithms that retrieve \textit{exact} NNs, we highlight the best time achieved in \textbf{bold}.
    }
    \label{tab:runtime}
    \vspace{-10pt}
\end{table*}

\subsection{Main Results}

We evaluate {\X} on several opinion summarization datasets and perform additional experiments to analyze its performance.

In Table~\ref{tab:runtime}, we compare the computation efficiency of {\X} with other centroid-based baseline methods in the incremental setting.  
Since centroid-based summarization relies on $k$NN retrieval, we evaluate the accuracy 
that the generated nearest neighbours exactly match the actual nearest neighbours. \edit{We report the average time required to complete the incremental summarization process per entity across 5 runs. For the synthetic dataset, we consider all 10K presentations belonging to a single entity.}
We consider an output to be incorrect if either any of the elements or their order does not match. We retrieve $k=20$ nearest neighbours in our experiments. This serves as a proxy for 
the quality of the generated summary.  We observe that {\X} outperforms baseline approaches by a significant margin, showcasing up to 36x speed gains while generating exact nearest neighbours.
Moreover, we observe that {\X} (lazy reservoir)  
 achieves the best time performance. This is expected as the implementation of performing insertions in a batch can be parallelized using SG Trees. In Figure~\ref{fig:time}, we show an illustration of how {\X}'s runtime fares with baselines during the online summarization process on a synthetic dataset. In Table~\ref{tab:runtime}, we also observe that the speed gains on \textsc{Space} dataset are lower than others. This is due to the higher data dimension ($\mathbb{R}^{8192}$), as the confidence bound in Proposition~\ref{prop:2} weakens with an increase in data 
 dimension $D$, leading to more $k$NN queries. We analyze this phenomenon further in Section~\ref{sec:analysis}.

\textbf{Speedup.} We compare {\X}'s runtime with other paradigms of extractive summarization in Table~\ref{tab:rouge} on samples from the \textsc{Space} and \textsc{Amazon} datasets.\footnote{Exact methods on Amazon US reviews achieve slightly less than 100\% due to numerical stability issues in the cover tree implementation in edge cases, which can handle up to 32 bit numbers.
} These algorithms are quite expensive and some of them ran out of time for synthetic benchmarks (with $>$10K samples) reported in Table~\ref{tab:runtime}. A few algorithms like TextRank require access to text inputs and cannot be applied to synthetic data. We observe that our proposed algorithm, {\X}, obtains significantly better runtime compared to baseline approaches. This is because centroid-based algorithms are generally much faster, brute force algorithm has a time complexity $O(n^2)$ with knn search being $O(n)$ per step. While the other extractive summarization paradigms utilize either the PageRank algorithm ($O(n^3)$ complexity) or eigen-decomposition ($O(n^4)$ complexity).
Note that although we refer to extractive summarization systems by their approach (e.g., centroid-based), these algorithms are being used in state-of-the-art systems, like SemAE~\citep{semae} (e.g.,  Brute Force in Table~\ref{tab:runtime} on \textsc{Space} is identical to SemAE).

\textbf{Automatic Evaluation of Summary Quality}. In this experiment, we probe the quality of the generated online summaries. Since we do not have access to human-written summaries at every time step, we construct silver extractive summaries by greedily selecting sentences (seen till time step $t$) with high ROUGE overlap with final human-written ones. We perform this experiment on the SPACE~\citep{qt} dataset and report the ROUGE overlap with the silver summaries. We measure different ROUGE scores after every 20 steps and report the average score. 

In Table~\ref{tab:rouge}, we observe that {\X} achieves the best tradeoff between summarization performance (in terms of ROUGE scores) and runtime in the incremental setting. Here, we would like to emphasize that the focus behind designing {\X} is on improving the efficiency of centroid-based summarization systems, not on enhancing summary quality. 
 Apart from ROUGE scores, we further evaluate the quality of the generated summaries by their ability to track content in user reviews and human evaluation. We report these experiments in the following section.

\begin{figure}[t!]
    \begin{minipage}[b]{0.4\textwidth}  
        \centering
	\includegraphics[width=0.95\textwidth, keepaspectratio]{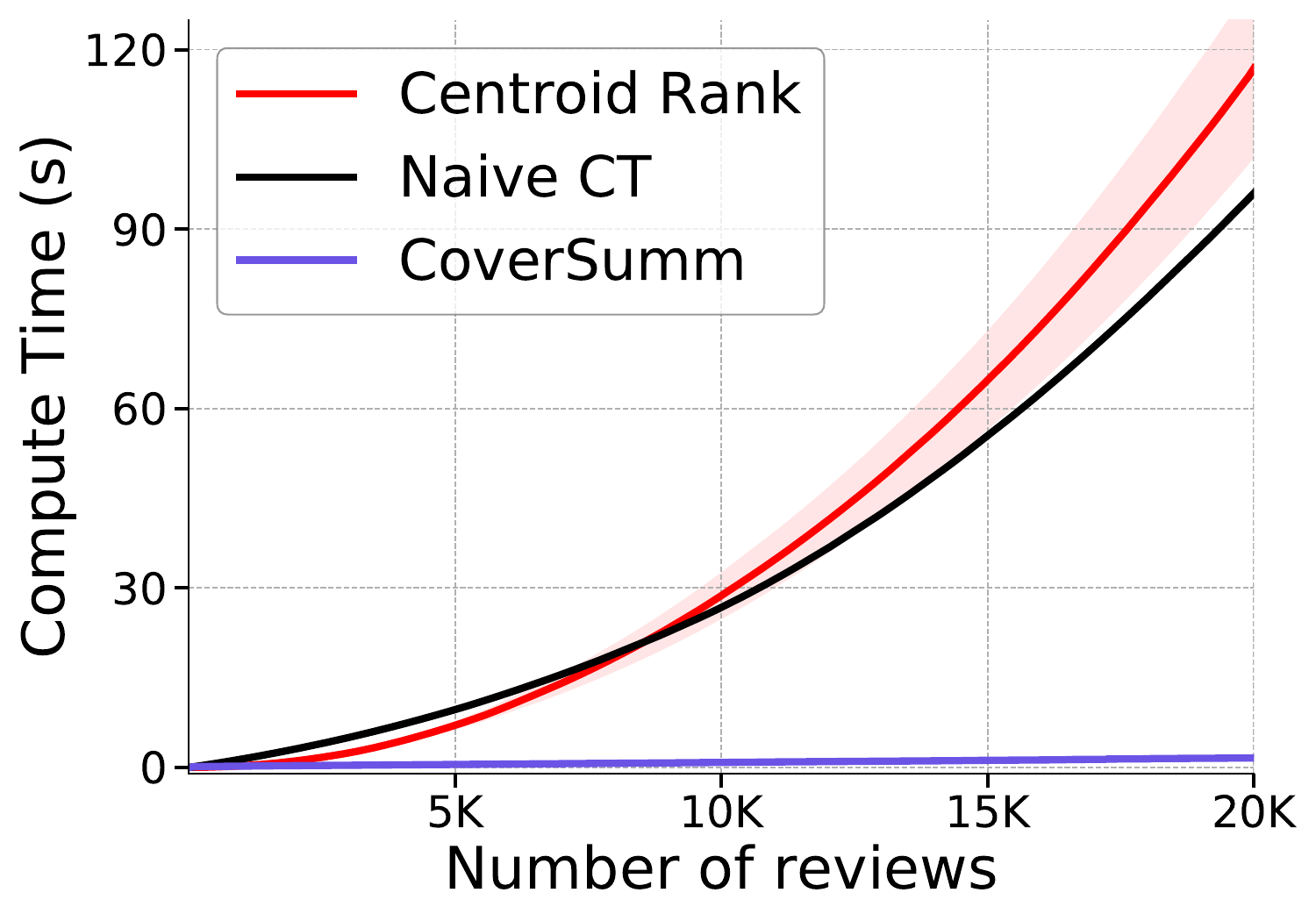}
        \vspace{-5pt}
	\caption{
        Time required by {\X} compared to baseline algorithms with an increasing number of reviews. We observe that the processing time of brute-force CentroidRank and Naive CT gradually increases, while the processing time of {\X} only slightly increases during summarization.}
    \label{fig:time}
    \end{minipage}
    \hfill
    \begin{minipage}[b]{0.58\textwidth}
        \centering
        \resizebox{0.9\textwidth}{!}{
\begin{tabular}{lc c c c}
\toprule
\textbf{Algorithm} & \textoverline{R1} & \textoverline{R2} & \textoverline{RL} & Time (s)\\
\midrule
LSA	& 25.04 & ~~4.66 & 15.34 & ~825.57\\
CentroidOPT	& 27.74 & ~~4.82 & 16.39 & ~263.84\\
TextRank &	30.84 & ~~5.27 & 17.40 & 1147.89\\
LexRank	& 30.04 & ~~5.70 & 17.59 & ~160.63\\
SumBasic	& 31.55 & ~~5.00 & 15.66 & ~287.80\\
Naive HNSW & 43.53 & 14.61 & 26.17 & ~~~~3.25\\
Naive Faiss & 43.48 & 14.51 & 25.99 & ~~~~1.30\\
{\X} (random) &	39.19 & 11.64 & 22.92 & ~~~~{1.02}\\
{\X} (decay $\lambda$)	& 41.03 & 12.18 & 24.03 & ~~~~1.35\\
\midrule
{\X} &	{42.46} & {14.06} & {25.15} & ~~~~1.14\\
\bottomrule
\end{tabular}
}
        \caption{Average ROUGE scores obtained by different incremental summarization systems on SPACE dataset. \textoverline{R1}, \textoverline{R2}, \textoverline{RL} denote the average ROUGE-1, ROUGE-2, and ROUGE-L scores respectively. We also report the time taken for incremental summarization per entity by different algorithms.}
        \label{tab:rouge}
    \end{minipage}
    \vspace{-10pt}
\end{figure}

\subsection{Analysis}\label{sec:analysis}
In this section, we analyze the functioning of {\X} through various experiments. First, we gauge the quality of  {\X}'s summaries using Amazon US reviews~\citep{he2016ups}, 
which contains real-world review sets. Since gold summaries are infeasible 
for such large review sets, we evaluate if the summaries mimic proxy measures like sentiment polarity and user ratings of the aggregate reviews.

\textbf{User ratings.} 
In Figure~\ref{fig:rating_drift}(a), we report the overall user rating (in \textcolor{darkgreen}{{green}}) and the summary rating (in {\color{blue} blue}), where the reviews arrive in a temporal order provided by the dataset. We observe that the summary ratings mimic the overall trends in the user ratings. The absolute difference between user and summary ratings was 0.42.
 In Figure~\ref{fig:rating_drift}(b) \& (c), we simulate 
a drift in the user reviews by ordering them according to ratings from low to high or vice-versa. We observe that {\X} is able to tackle such scenarios where the summary ratings still track the aggregate ratings.

\textbf{Sentiment Polarity}. In this experiment, we probe the sentiment polarity of the generated summaries and verify if they are consistent with the {sentiment of the} aggregate reviews. We use the VADER~\citep{vader} to extract sentiment polarity. We assign summaries a sentiment polarity score based on the average polarity score of their individual sentences. In Figure~\ref{fig:sentiment}(a) \& (b), we report the summary polarity (in \textcolor{darkgreen}{{green}}), the average review polarity (in \textcolor{blue}{{blue}}), and human-written summary's polarity (in \textcolor{black}{{black}}). We report the scores for two distinct entities in the \textsc{Space} dataset. We observe that the generated summaries' sentiment polarity generally follows trends in the overall review polarity while achieving a  polarity score close to the human-written summary with an increase in reviews. In general, the summary's polarity doesn't exactly track the aggregate review polarity as many reviews are neutral, and selecting them would result in less informative summaries (more details in Appendix~\ref{sec:user_ratings}).

\begin{figure*}[t!]
	
	\centering
	\includegraphics[height=0.22\textwidth, keepaspectratio]{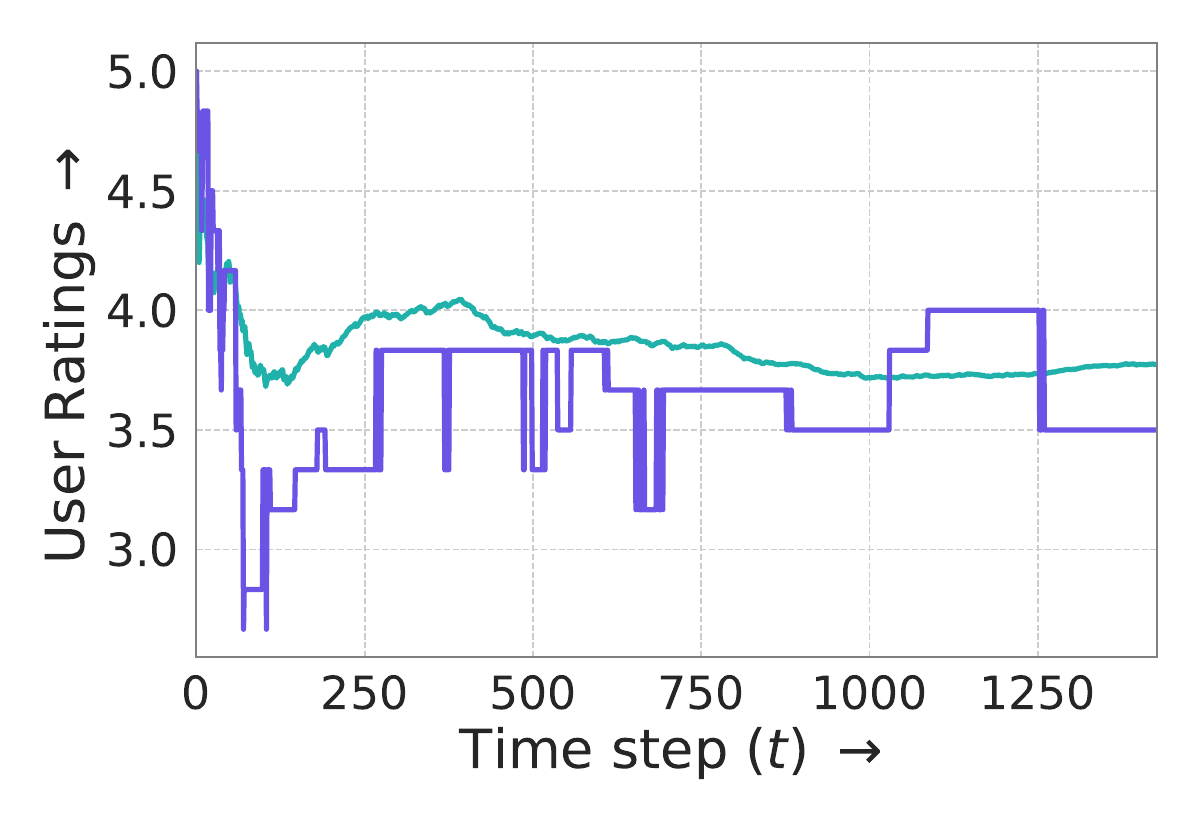}
	\includegraphics[height=0.22\textwidth, keepaspectratio]{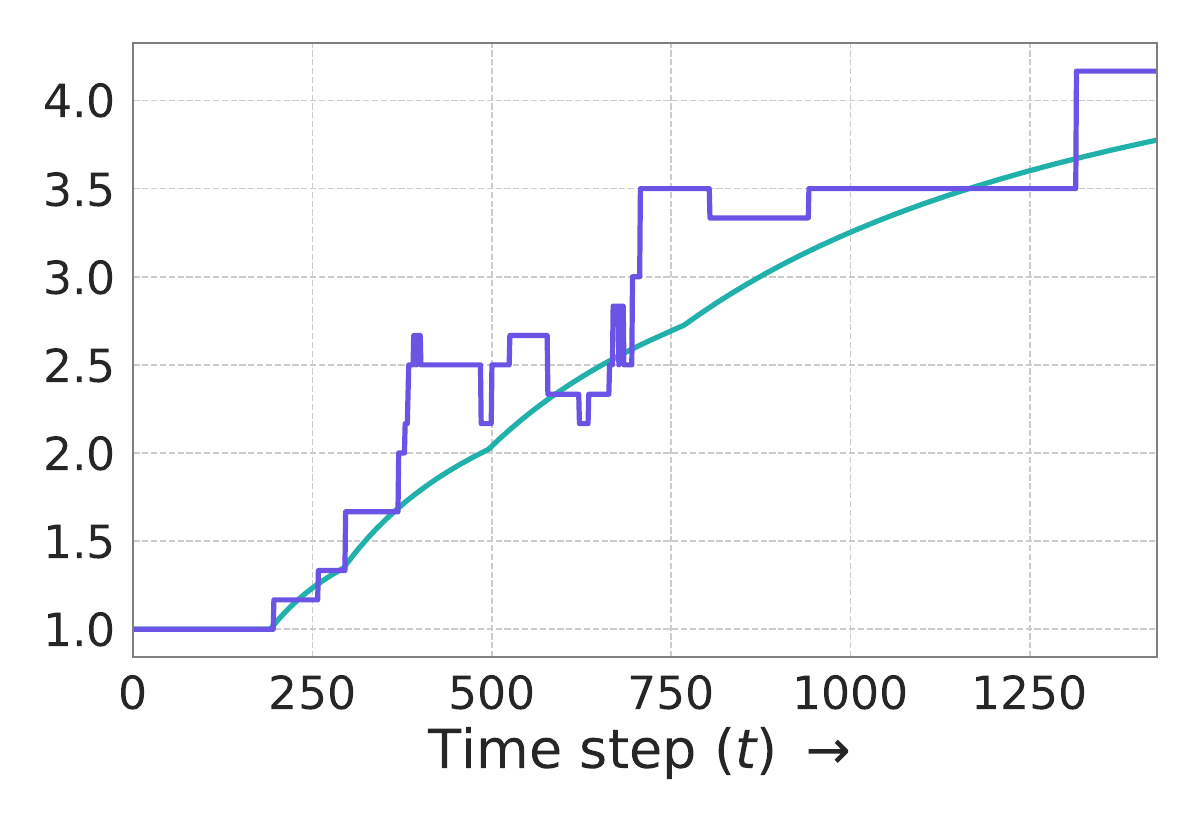}
	\includegraphics[height=0.22\textwidth, keepaspectratio]{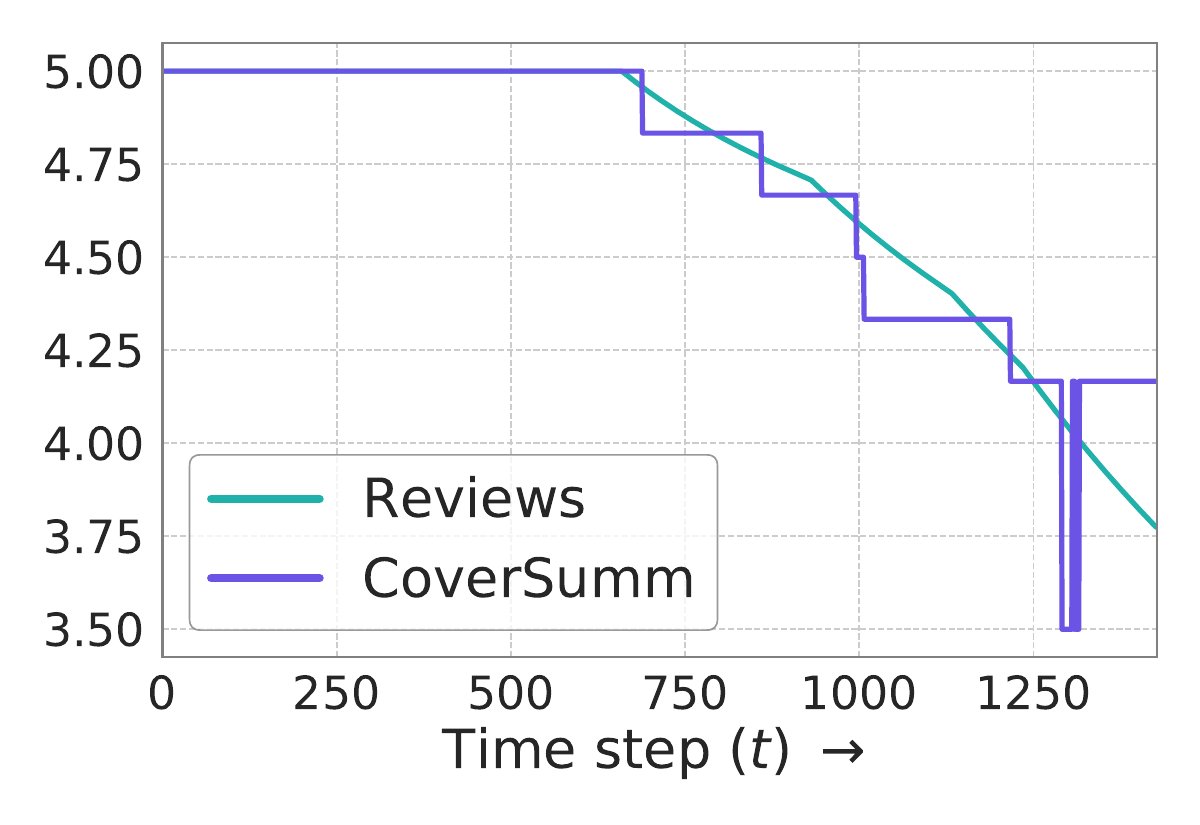}
        \vspace{-8pt}
	\caption{Evolution of user ratings in {\X}'s summary and user reviews during summarization in an incremental setting. The goal of this experiment is to determine if the user ratings can be accurately reflected in the incremental summaries from {\X}.  We report the results in three settings when reviews arrive in their: (a)  original temporal order; (b) ascending order of their ratings; (c) descending order of their user ratings. We observe that {\X}'s summary can track drifts in the ratings.}
	\label{fig:rating_drift}
    \vspace{-15pt}
\end{figure*}
\begin{figure*}[t!]
	\centering
        \includegraphics[height=0.22\textwidth, keepaspectratio]{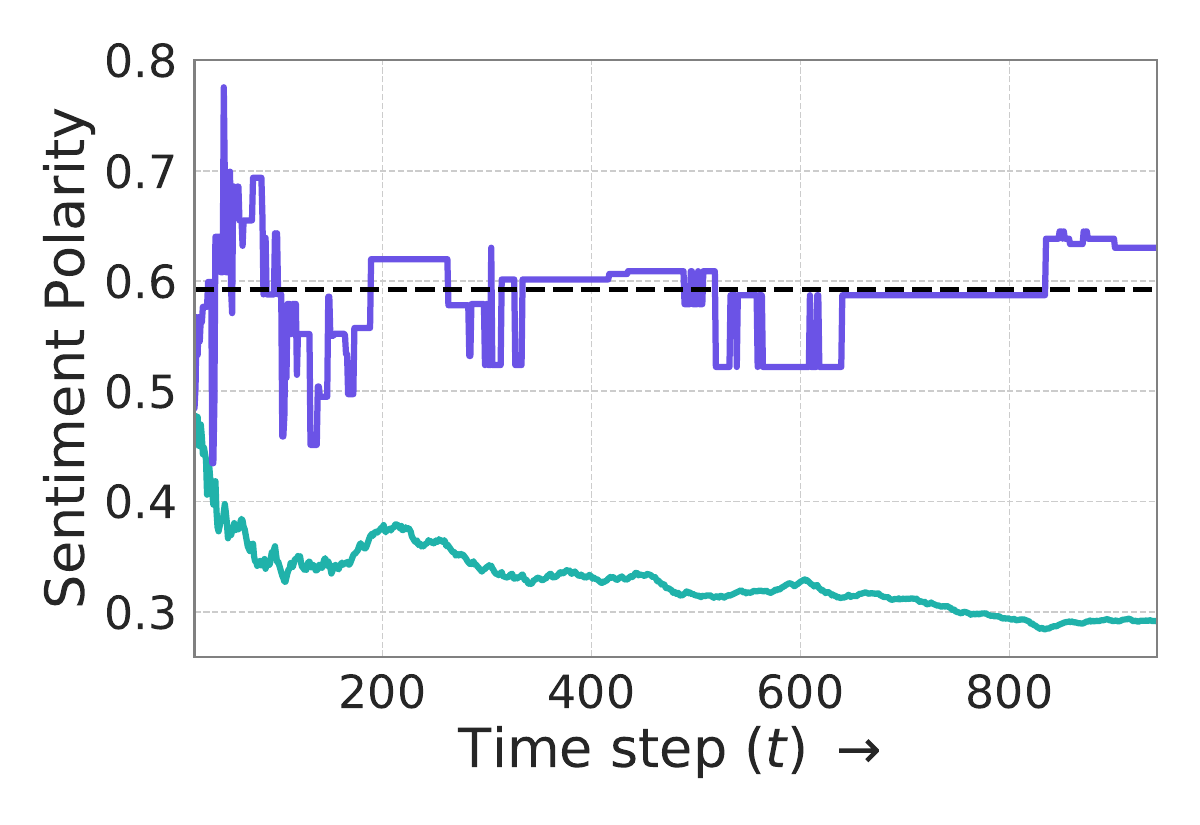}
	\includegraphics[height=0.22\textwidth, keepaspectratio]{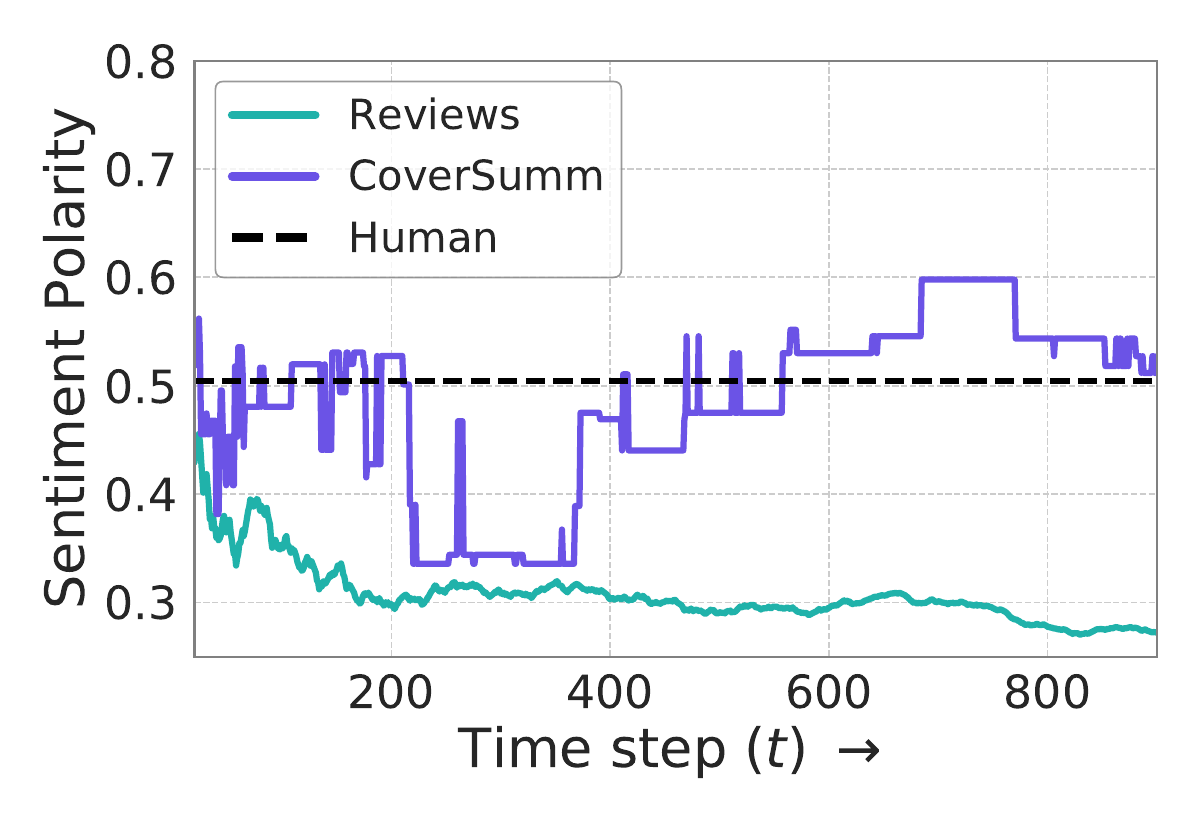}
        \includegraphics[height=0.22\textwidth, keepaspectratio]{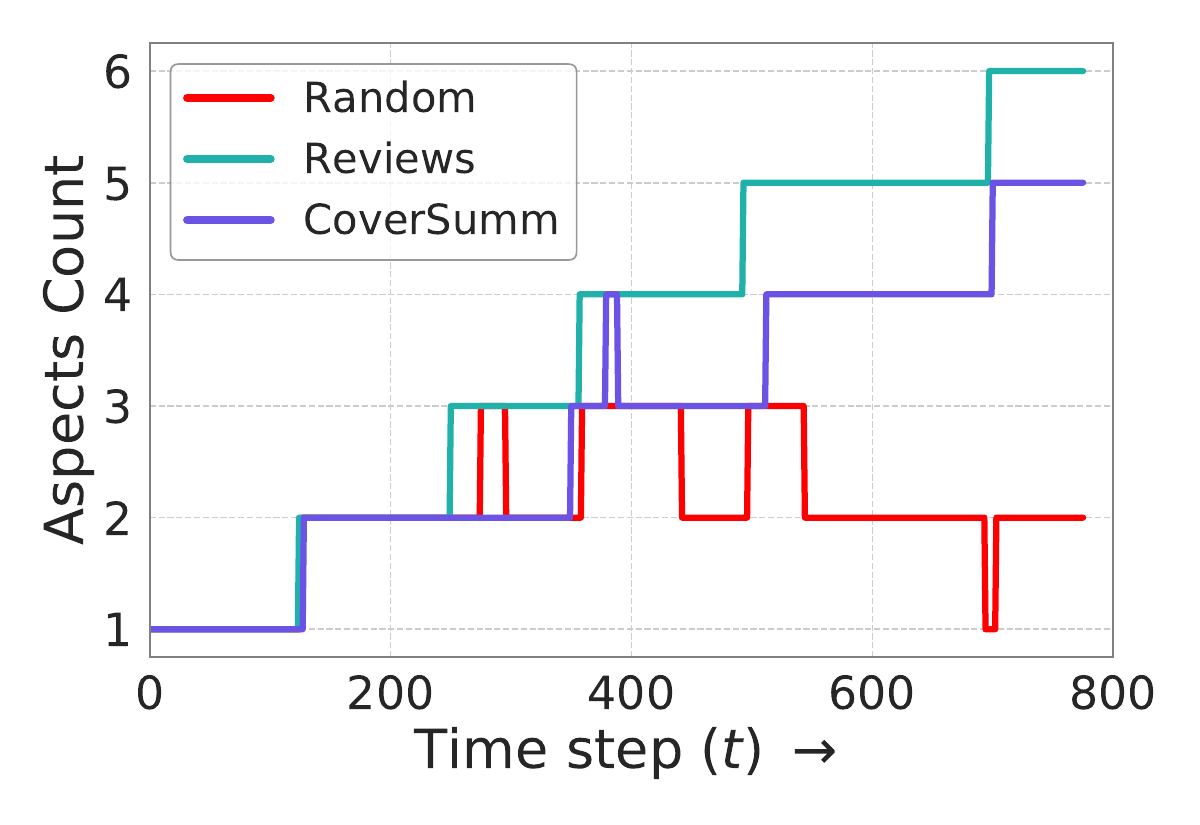}
        \vspace{-5pt}
	\caption{(a) \& (b) Evolution of sentiment polarity in {\X}'s summaries in an incremental setting (for 2 entities in \textsc{Space} dataset),  and (c) the number of unique aspects found in the {\X}'s summary and aggregate reviews. Random denotes the {\X} (random) baseline. These experiments show that the generated summaries can track finer aspects of the user reviews like sentiment polarity and aspects.}
	\label{fig:sentiment}
        \vspace{-10pt}
\end{figure*}

\textbf{Aspect discovery.} 
In this experiment,  we assess if {\X} can capture the underlying aspects in user reviews. To investigate this, we progressively fed the system reviews from each aspect  (e.g., hotel reviews about food first, followed by service, and so on). In Figure~\ref{fig:sentiment}(c), we compute the number of unique aspects in the summaries and reviews at different time steps. We found that summaries successfully captured new aspects as they were introduced in the review stream. Additionally, we calculated the average absolute difference between the number of unique aspects in the reviews and summaries to be 0.32. This demonstrates that {\X} effectively captures the underlying semantics and selects relevant aspects from reviews.

\textbf{Number of reservoir search queries} ($n_{\mathrm{rs}}$). In this experiment, we probe the number of reservoir search queries (the most expensive step in Algorithm~\ref{alg:online-ct}) executed by {\X}. 
In Figure~\ref{fig:knn} (a), we visualize the \texttt{rs} queries using data sampled from a uniform distribution and observe that {\X} performs a small number of queries ({e.g.,} less than $150$ queries when the number of points is $10$K). 
We also observe that with an increasing number of points, the gap between subsequent reservoir search queries increases. This is expected as the nearest neighbours of the centroid change less frequently. 
We repeat the same experiment with different data dimensions and observe an increase in reservoir search queries with increasing data dimension, $D$. This is expected as the confidence for the bound in Proposition~\ref{prop:2} is $(1 - \delta)^{2D}$ decreases for higher data dimension, $D$. Therefore, nearest neighbour candidates can be outside the reservoir more frequently.

\textbf{Reservoir size ($|\mathcal{R}|$)}. In this experiment, we investigate the size of $\mathcal{R}$ during summarization. We perform a synthetic experiment where points are randomly sampled from a uniform distribution. In Figure~\ref{fig:knn}(b), we observe that the reservoir size at any time step is significantly small ($<1$00) compared to the number of points ($\sim$10K). 
We also experiment by varying the summary budget, $k$, and observe a nearly linear increase in reservoir size as predicted by Proposition~\ref{lemma:3}.

\textbf{Data distribution ablations.} In this experiment, we investigate {\X}'s performance (time) when the input samples arise from a multi-modal distribution. Specifically, we sample $x \sim \sum_{i=1}^{m} a_i\mathcal{N}(\mu_i, \mathbf{I})$ from multi-dimensional gaussian distributions, where $a_i = {1}/{m}$, $\mu_i = i\mathbf{1}$ and $m$ is the number of modes. In Figure~\ref{fig:knn}(c), we observe that the time required by {\X} gradually increases with the number of modes in the input distribution. This behavior is expected as the number of representations near the centroid may change frequently as the number of modes increases. 
\begin{figure*}[t!]
	\centering
	\includegraphics[height=0.21\textwidth, keepaspectratio]{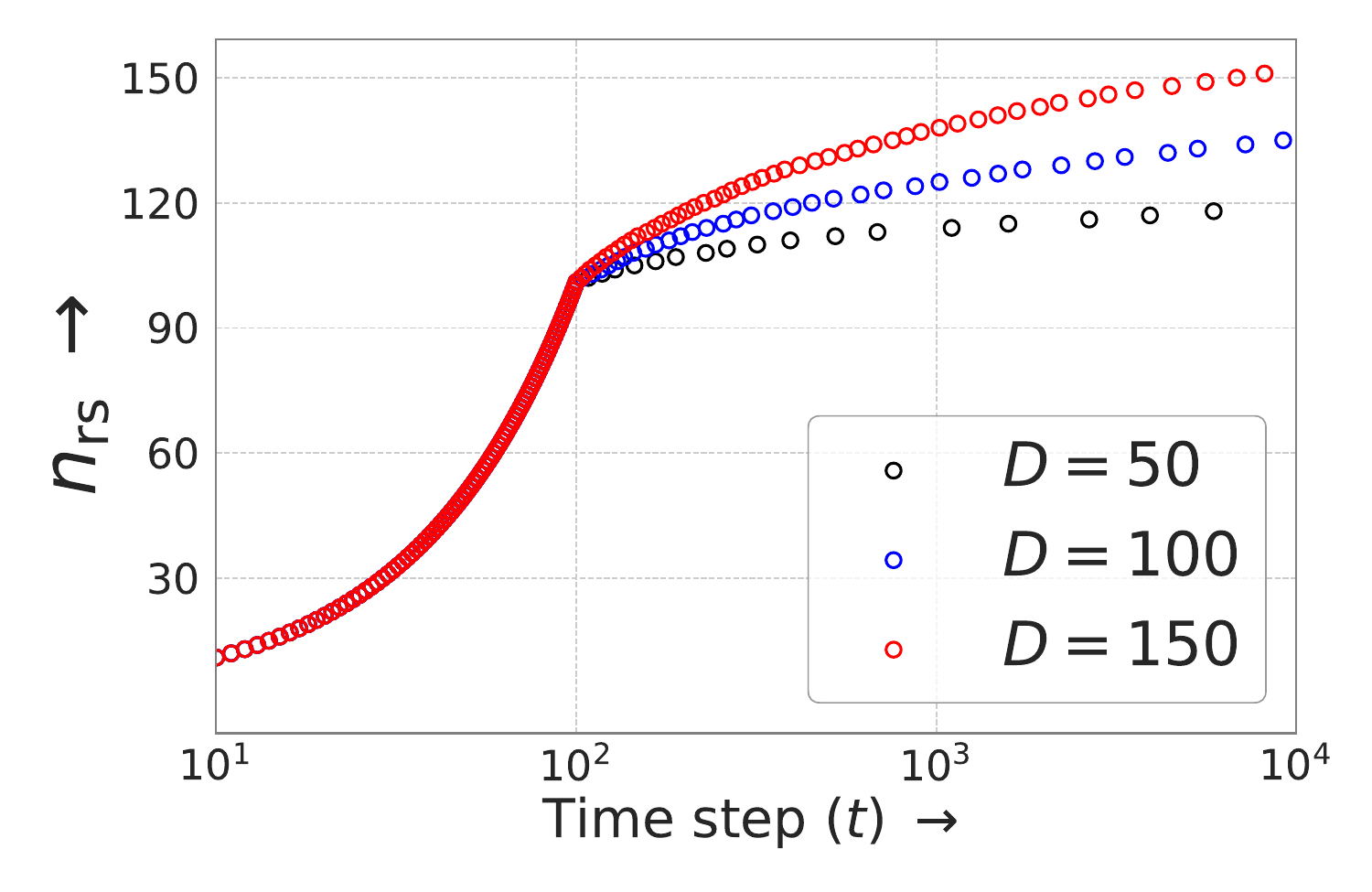}
	\includegraphics[height=0.21\textwidth, keepaspectratio]{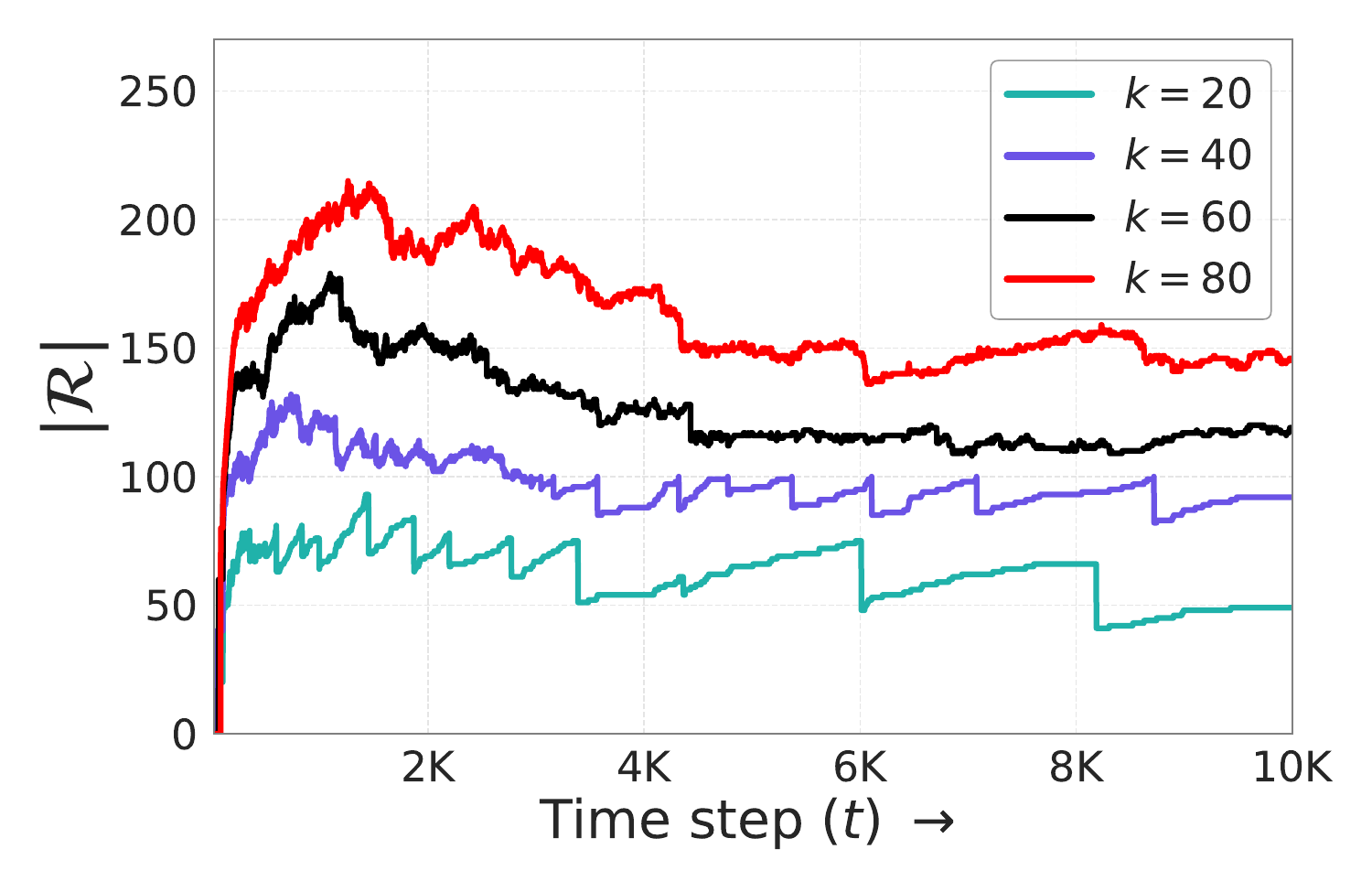}
	\includegraphics[height=0.21\textwidth, keepaspectratio]{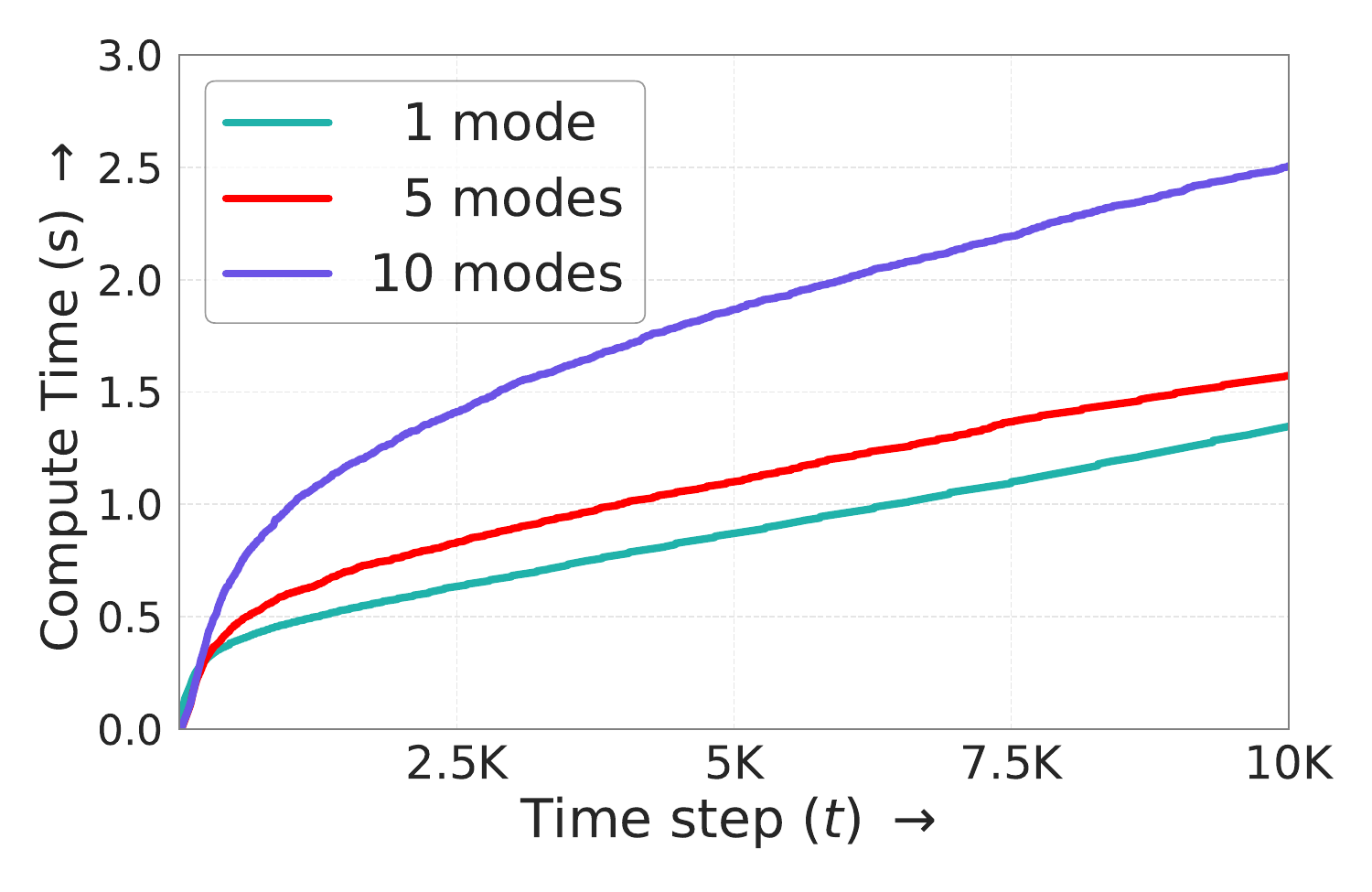}
	\caption{Plots showcasing (a)  the cumulative number of reservoir search queries ($n_{\mathrm{rs}}$), 
  (b) the variation in reservoir size $|\mathcal{R}|$, and (c) compute time when points are sampled from a multi-modal distribution.}
	\label{fig:knn}
    \vspace{-15pt}
\end{figure*}

\textbf{Human Evaluation}. To assess the quality of the summaries generated by {\X}, we perform a human evaluation of the generated summaries from \textsc{Space} and Amazon US reviews dataset. We compare {\X} with its two variants: {\X} (random) and {\X} (decay $\lambda$). 
We perform human evaluation experiments on Amazon MTurk.  Since evaluating the entire set of summaries is expensive, we selected consecutive summary pairs, which had at least one change in them.  Specifically, given a summary pair, we ask the annotator to judge whether the change was: (a) \textit{redundant}: if the information was already provided in the previous summary, and (b) \textit{informative}: if the new selected review sentence is informative or ``summary worthy'' (e.g., a sentence like `The boots wore off after a month' is summary worthy while a sentence like `I do not like them' is not).  We provide more details about the human evaluation setup in Appendix~\ref{sec:human_eval}.

\begin{wraptable}[10]{r}{0.65\textwidth}
    \centering
    \vspace{-10pt}
    \resizebox{0.62\textwidth}{!}{
    \begin{tabular}{l cc cc}
        \toprule
        & \multicolumn{2}{c}{\textsc{Amazon}} & \multicolumn{2}{c}{\textsc{Space}} \\
         {Algorithm} & {Redund.} $\downarrow$ & Info. $\uparrow$ & {Redund.} $\downarrow$ & Info. $\uparrow$\TBstrut\\
         \midrule
         {\X} (random) & 28.2\% & 54.0\% & \textbf{21.6}\%	& 78.2\%\Tstrut\\
         {\X} (decay $\lambda$) & 12.3\% & 63.4\% & 25.8\%	& 79.9\% \\
         {\X} & ~\textbf{8.8}\% & \textbf{65.4}\% & 24.1\% &	~\textbf{80.7}\% \\
         \bottomrule
\end{tabular}
    }
    \caption{Human evaluation results evaluating the redundancy and informativeness of generated summaries from {\X} and its variants. }
    \label{tab:human_result}
\end{wraptable}
\edit{In Table~\ref{tab:human_result}, we report the percentage of summary pairs marked as redundant or informative. 
On \textsc{Amazon}, we observe that {\X} generates the least redundant summaries (only 8.8\% are marked as redundant) while making the most informative changes (around 65\% of the time). On \textsc{Space}, we observe that different systems achieve similar performance with {\X} achieving the best results in informativeness. To put these results in perspective, we need to consider some of the key differences between \textsc{Amazon} and \textsc{Space} datasets.  \textsc{Space} reviews have a similar structure with sentences like “The room was great”, “The staff was helpful”, etc.
On the other hand, \textsc{Amazon} US Reviews have a much more diverse set of reviews e.g., “Overall I'm very pleased with the purchase”, “When I got it I was surprised how soft the lamb skin is, very nice”. We see these properties manifest in the human evaluation results.
In the \textsc{Space} dataset, we hypothesize that when reviews are altered with similar positive information, this is generally seen as informative, yielding high informative scores. {\X} (random) algorithm may be less sensitive to redundancy scores in this setup as it randomly chooses instances to be in the reservoir thereby encouraging diversity in the final summary. Overall, this experiment illustrates the need for evaluation on a diverse set of reviews to effectively gauge the system's performance.}

\section{Conclusion}
In this paper, we proposed {\X}, an efficient algorithm to perform centroid-based extractive summarization in an incremental setup. {\X} leverages cover trees to perform nearest neighbour search efficiently, thereby obtaining up to 36x speed improvement over naive baselines. 
We perform extensive theoretical and empirical analysis to show that {\X} generates high-quality opinion summaries that accurately track the semantics in the review set. 
Detailed analysis shows that {\X}'s performance is dependent on the underlying data distribution, and its efficiency can suffer in adversarial scenarios. Most works use empirical evaluations on specific datasets to choose a summarization technique. In this work, we focus on centrality-based measures for extractive summarization. Nevertheless, determining the optimal summarization paradigm for different domains remains an open question. 
Future work could explore the use of efficient data structures like CoverSumm or improved confidence intervals~\citep{waudby2024estimating} to further improve efficiency/accuracy trade-offs across a broad class of summarization methods.

\section{Acknowledgements}
The authors are thankful to Michael Boratko, Anneliese Brei, Rahul Kidambi, Haoyuan Li, and Anvesh Rao Vijjini for helpful feedback and discussions. Somnath Basu Roy Chowdhury and Snigdha Chaturvedi were partly supported by the National Science Foundation under award DRL-2112635.

\bibliography{main}

\begin{thebibliography}{70}
\providecommand{\natexlab}[1]{#1}
\providecommand{\url}[1]{\texttt{#1}}
\expandafter\ifx\csname urlstyle\endcsname\relax
  \providecommand{\doi}[1]{doi: #1}\else
  \providecommand{\doi}{doi: \begingroup \urlstyle{rm}\Url}\fi

\bibitem[Allan et~al.(2001)Allan, Gupta, and Khandelwal]{allan2001temporal}
James Allan, Rahul Gupta, and Vikas Khandelwal.
\newblock Temporal summaries of new topics.
\newblock In \emph{Proceedings of the 24th annual international ACM SIGIR conference on Research and development in information retrieval}, pp.\  10--18, 2001.

\bibitem[Angelidis et~al.(2021)Angelidis, Amplayo, Suhara, Wang, and Lapata]{qt}
Stefanos Angelidis, Reinald~Kim Amplayo, Yoshihiko Suhara, Xiaolan Wang, and Mirella Lapata.
\newblock Extractive opinion summarization in quantized transformer spaces.
\newblock \emph{Transactions of the Association for Computational Linguistics}, 9:\penalty0 277--293, 2021.

\bibitem[Bagaria et~al.(2018)Bagaria, Kamath, Ntranos, Zhang, and Tse]{meddit}
Vivek Bagaria, Govinda Kamath, Vasilis Ntranos, Martin Zhang, and David Tse.
\newblock Medoids in almost-linear time via multi-armed bandits.
\newblock In \emph{International Conference on Artificial Intelligence and Statistics}, pp.\  500--509. PMLR, 2018.

\bibitem[Baharav \& Tse(2019)Baharav and Tse]{baharav2019ultra}
Tavor Baharav and David Tse.
\newblock Ultra fast medoid identification via correlated sequential halving.
\newblock \emph{Advances in Neural Information Processing Systems}, 32, 2019.

\bibitem[Bentley(1975)]{bentley1975multidimensional}
Jon~Louis Bentley.
\newblock Multidimensional binary search trees used for associative searching.
\newblock \emph{Communications of the ACM}, 18\penalty0 (9):\penalty0 509--517, 1975.

\bibitem[Beygelzimer et~al.(2006)Beygelzimer, Kakade, and Langford]{ct}
Alina Beygelzimer, Sham Kakade, and John Langford.
\newblock Cover trees for nearest neighbor.
\newblock In \emph{Proceedings of the 23rd international conference on Machine learning}, pp.\  97--104, 2006.

\bibitem[Bhaskar et~al.(2023)Bhaskar, Fabbri, and Durrett]{bhaskar2023prompted}
Adithya Bhaskar, Alex Fabbri, and Greg Durrett.
\newblock Prompted opinion summarization with gpt-3.5.
\newblock In \emph{Findings of the Association for Computational Linguistics: ACL 2023}, pp.\  9282--9300, 2023.

\bibitem[Blei et~al.(2003)Blei, Ng, and Jordan]{lda}
David~M Blei, Andrew~Y Ng, and Michael~I Jordan.
\newblock Latent dirichlet allocation.
\newblock \emph{Journal of machine Learning research}, 3\penalty0 (Jan):\penalty0 993--1022, 2003.

\bibitem[Bubeck et~al.(2023)Bubeck, Chandrasekaran, Eldan, Gehrke, Horvitz, Kamar, Lee, Lee, Li, Lundberg, et~al.]{bubeck2023sparks}
S{\'e}bastien Bubeck, Varun Chandrasekaran, Ronen Eldan, Johannes Gehrke, Eric Horvitz, Ece Kamar, Peter Lee, Yin~Tat Lee, Yuanzhi Li, Scott Lundberg, et~al.
\newblock Sparks of artificial general intelligence: Early experiments with gpt-4.
\newblock \emph{arXiv preprint arXiv:2303.12712}, 2023.

\bibitem[Cheang et~al.(2023)Cheang, Chan, Wong, Liu, Li, Sun, Liu, and Chao]{cheang-etal-2023-lms}
Chi Cheang, Hou Chan, Derek Wong, Xuebo Liu, Zhaocong Li, Yanming Sun, Shudong Liu, and Lidia Chao.
\newblock Can {LM}s generalize to future data? {A}n {E}mpirical {A}nalysis on {T}ext {S}ummarization.
\newblock In Houda Bouamor, Juan Pino, and Kalika Bali (eds.), \emph{Proceedings of the 2023 Conference on Empirical Methods in Natural Language Processing}, pp.\  16205--16217, Singapore, December 2023. Association for Computational Linguistics.
\newblock \doi{10.18653/v1/2023.emnlp-main.1007}.
\newblock URL \url{https://aclanthology.org/2023.emnlp-main.1007}.

\bibitem[Chen et~al.(2023)Chen, Li, Gao, Chan, Zhao, Gao, Zhang, and Yan]{chen2023follow}
Xiuying Chen, Mingzhe Li, Shen Gao, Zhangming Chan, Dongyan Zhao, Xin Gao, Xiangliang Zhang, and Rui Yan.
\newblock Follow the timeline! generating an abstractive and extractive timeline summary in chronological order.
\newblock \emph{ACM Transactions on Information Systems}, 41\penalty0 (1):\penalty0 1--30, 2023.

\bibitem[Chowdhury et~al.(2022)Chowdhury, Zhao, and Chaturvedi]{semae}
Somnath Basu~Roy Chowdhury, Chao Zhao, and Snigdha Chaturvedi.
\newblock Unsupervised extractive opinion summarization using sparse coding.
\newblock In \emph{Proceedings of the 60th Annual Meeting of the Association for Computational Linguistics (Volume 1: Long Papers)}, pp.\  1209--1225, 2022.

\bibitem[Chowdhury et~al.(2023)Chowdhury, Monath, Dubey, Ahmed, and Chaturvedi]{geosumm}
Somnath Basu~Roy Chowdhury, Nicholas Monath, Avinava Dubey, Amr Ahmed, and Snigdha Chaturvedi.
\newblock Unsupervised opinion summarization using approximate geodesics.
\newblock In \emph{Findings of the Empirical Methods in Natural Language Processing: EMNLP 2023}, Singapore, December 2023. Association for Computational Linguistics.

\bibitem[Chu \& Liu(2019)Chu and Liu]{meansum}
Eric Chu and Peter Liu.
\newblock Meansum: a neural model for unsupervised multi-document abstractive summarization.
\newblock In \emph{International Conference on Machine Learning}, pp.\  1223--1232. PMLR, 2019.

\bibitem[Elkin \& Kurlin(2022)Elkin and Kurlin]{elkin2022counterexamples}
Yury Elkin and Vitaliy Kurlin.
\newblock Counterexamples expose gaps in the proof of time complexity for cover trees introduced in 2006.
\newblock In \emph{2022 Topological Data Analysis and Visualization (TopoInVis)}, pp.\  9--17. IEEE, 2022.

\bibitem[Elkin \& Kurlin(2023)Elkin and Kurlin]{elkin2023new}
Yury Elkin and Vitaliy Kurlin.
\newblock A new near-linear time algorithm for k-nearest neighbor search using a compressed cover tree.
\newblock In \emph{International Conference on Machine Learning}, pp.\  9267--9311. PMLR, 2023.

\bibitem[Erkan \& Radev(2004)Erkan and Radev]{erkan2004lexrank}
G{\"u}nes Erkan and Dragomir~R Radev.
\newblock Lexrank: Graph-based lexical centrality as salience in text summarization.
\newblock \emph{Journal of artificial intelligence research}, 22:\penalty0 457--479, 2004.

\bibitem[Ge et~al.(2016)Ge, Cui, Chang, Li, Zhou, and Sui]{ge2016news}
Tao Ge, Lei Cui, Baobao Chang, Sujian Li, Ming Zhou, and Zhifang Sui.
\newblock News stream summarization using burst information networks.
\newblock In \emph{Proceedings of the 2016 Conference on Empirical Methods in Natural Language Processing}, pp.\  784--794, 2016.

\bibitem[Ghalandari(2017)]{ghalandari2017revisiting}
Demian~Gholipour Ghalandari.
\newblock Revisiting the centroid-based method: A strong baseline for multi-document summarization.
\newblock \emph{EMNLP 2017}, pp.\ ~85, 2017.

\bibitem[Ghalandari \& Ifrim(2020)Ghalandari and Ifrim]{ghalandari2020examining}
Demian~Gholipour Ghalandari and Georgiana Ifrim.
\newblock Examining the state-of-the-art in news timeline summarization.
\newblock In \emph{Proceedings of the 58th Annual Meeting of the Association for Computational Linguistics}, pp.\  1322--1334, 2020.

\bibitem[Gholipour~Ghalandari(2017)]{ghalandari2017}
Demian Gholipour~Ghalandari.
\newblock Revisiting the centroid-based method: A strong baseline for multi-document summarization.
\newblock In \emph{Proceedings of the Workshop on New Frontiers in Summarization}, pp.\  85--90, Copenhagen, Denmark, September 2017. Association for Computational Linguistics.
\newblock \doi{10.18653/v1/W17-4511}.
\newblock URL \url{https://aclanthology.org/W17-4511}.

\bibitem[Gu et~al.(2022)Gu, Ash, and Hahnloser]{gu2022memsum}
Nianlong Gu, Elliott Ash, and Richard Hahnloser.
\newblock Mem{S}um: Extractive summarization of long documents using multi-step episodic markov decision processes.
\newblock In \emph{Proceedings of the 60th Annual Meeting of the Association for Computational Linguistics (Volume 1: Long Papers)}, pp.\  6507--6522, 2022.

\bibitem[Guttman(1984)]{guttman1984r}
Antonin Guttman.
\newblock R-trees: A dynamic index structure for spatial searching.
\newblock In \emph{Proceedings of the 1984 ACM SIGMOD international conference on Management of data}, pp.\  47--57, 1984.

\bibitem[He \& McAuley(2016)He and McAuley]{he2016ups}
Ruining He and Julian McAuley.
\newblock Ups and {D}owns: Modeling the visual evolution of fashion trends with one-class collaborative filtering.
\newblock In \emph{proceedings of the 25th international conference on world wide web}, pp.\  507--517, 2016.

\bibitem[Hoeffding(1963)]{hoeffding1963probability}
Wassily Hoeffding.
\newblock Probability inequalities for sums of bounded random variables.
\newblock \emph{Journal of the American statistical association}, 58\penalty0 (301):\penalty0 13--30, 1963.

\bibitem[Hosking et~al.(2023)Hosking, Tang, and Lapata]{hosking2023attributable}
Tom Hosking, Hao Tang, and Mirella Lapata.
\newblock Attributable and scalable opinion summarization.
\newblock In \emph{Proceedings of the 61st Annual Meeting of the Association for Computational Linguistics (Volume 1: Long Papers)}, pp.\  8488--8505, 2023.

\bibitem[Hu \& Liu(2004)Hu and Liu]{hu2004mining}
Minqing Hu and Bing Liu.
\newblock Mining and summarizing customer reviews.
\newblock In \emph{Proceedings of the tenth ACM SIGKDD international conference on Knowledge discovery and data mining}, pp.\  168--177, 2004.

\bibitem[Hutto \& Gilbert(2014)Hutto and Gilbert]{vader}
Clayton Hutto and Eric Gilbert.
\newblock Vader: A parsimonious rule-based model for sentiment analysis of social media text.
\newblock In \emph{Proceedings of the international AAAI conference on web and social media}, volume~8, pp.\  216--225, 2014.

\bibitem[Isonuma et~al.(2021)Isonuma, Mori, Bollegala, and Sakata]{isonuma2021unsupervised}
Masaru Isonuma, Junichiro Mori, Danushka Bollegala, and Ichiro Sakata.
\newblock Unsupervised abstractive opinion summarization by generating sentences with tree-structured topic guidance.
\newblock \emph{Transactions of the Association for Computational Linguistics}, 9:\penalty0 945--961, 2021.

\bibitem[Ji et~al.(2022)Ji, Lee, Frieske, Yu, Su, Xu, Ishii, Bang, Madotto, and Fung]{ji2022survey}
Ziwei Ji, Nayeon Lee, Rita Frieske, Tiezheng Yu, Dan Su, Yan Xu, Etsuko Ishii, Yejin Bang, Andrea Madotto, and Pascale Fung.
\newblock Survey of hallucination in natural language generation.
\newblock \emph{ACM Computing Surveys}, 2022.

\bibitem[Jiang et~al.(2023)Jiang, Wang, Wei, Li, and Wang]{jiang-etal-2023-large}
Han Jiang, Rui Wang, Zhihua Wei, Yu~Li, and Xinpeng Wang.
\newblock Large-scale and multi-perspective opinion summarization with diverse review subsets.
\newblock In Houda Bouamor, Juan Pino, and Kalika Bali (eds.), \emph{Findings of the Association for Computational Linguistics: EMNLP 2023}, pp.\  5641--5656, Singapore, December 2023. Association for Computational Linguistics.
\newblock \doi{10.18653/v1/2023.findings-emnlp.375}.
\newblock URL \url{https://aclanthology.org/2023.findings-emnlp.375}.

\bibitem[John \& Asharaf(2014)John and Asharaf]{john2014incremental}
Johney John and S~Asharaf.
\newblock Incremental multi-document summarization: An incremental clustering based approach.
\newblock In \emph{2014 International Conference on Data Science \& Engineering (ICDSE)}, pp.\  136--139. IEEE, 2014.

\bibitem[Johnson et~al.(2019)Johnson, Douze, and J{\'e}gou]{faiss}
Jeff Johnson, Matthijs Douze, and Herv{\'e} J{\'e}gou.
\newblock Billion-scale similarity search with gpus.
\newblock \emph{IEEE Transactions on Big Data}, 7\penalty0 (3):\penalty0 535--547, 2019.

\bibitem[Kim et~al.(2011)Kim, Ganesan, Sondhi, and Zhai]{kim2011comprehensive}
Hyun~Duk Kim, Kavita Ganesan, Parikshit Sondhi, and ChengXiang Zhai.
\newblock Comprehensive review of opinion summarization.
\newblock 2011.

\bibitem[Kim~Amplayo et~al.(2022)Kim~Amplayo, Brazinskas, Suhara, Wang, and Liu]{kim2022beyond}
Reinald Kim~Amplayo, Arthur Brazinskas, Yoshi Suhara, Xiaolan Wang, and Bing Liu.
\newblock Beyond opinion mining: Summarizing opinions of customer reviews.
\newblock In \emph{Proceedings of the 45th International ACM SIGIR Conference on Research and Development in Information Retrieval}, pp.\  3447--3450, 2022.

\bibitem[Laskar et~al.(2023)Laskar, Fu, Chen, and Bhushan~TN]{laskar-etal-2023-building}
Md~Tahmid~Rahman Laskar, Xue-Yong Fu, Cheng Chen, and Shashi Bhushan~TN.
\newblock Building real-world meeting summarization systems using large language models: A practical perspective.
\newblock In Mingxuan Wang and Imed Zitouni (eds.), \emph{Proceedings of the 2023 Conference on Empirical Methods in Natural Language Processing: Industry Track}, pp.\  343--352, Singapore, December 2023. Association for Computational Linguistics.
\newblock \doi{10.18653/v1/2023.emnlp-industry.33}.
\newblock URL \url{https://aclanthology.org/2023.emnlp-industry.33}.

\bibitem[Li \& Chaturvedi(2024)Li and Chaturvedi]{li2024rationale}
Haoyuan Li and Snigdha Chaturvedi.
\newblock Rationale-based opinion summarization.
\newblock 2024.

\bibitem[Li et~al.(2023)Li, Chowdhury, and Chaturvedi]{li2023aspect}
Haoyuan Li, Somnath Basu~Roy Chowdhury, and Snigdha Chaturvedi.
\newblock Aspect-aware unsupervised extractive opinion summarization.
\newblock In \emph{Findings of the Association for Computational Linguistics: ACL 2023}, pp.\  12662--12678, 2023.

\bibitem[Malkov \& Yashunin(2018{\natexlab{a}})Malkov and Yashunin]{hnsw}
Yu~A Malkov and Dmitry~A Yashunin.
\newblock Efficient and robust approximate nearest neighbor search using hierarchical navigable small world graphs.
\newblock \emph{IEEE transactions on pattern analysis and machine intelligence}, 42\penalty0 (4):\penalty0 824--836, 2018{\natexlab{a}}.

\bibitem[Malkov \& Yashunin(2018{\natexlab{b}})Malkov and Yashunin]{malkov2018efficient}
Yu~A Malkov and Dmitry~A Yashunin.
\newblock Efficient and robust approximate nearest neighbor search using hierarchical navigable small world graphs.
\newblock \emph{IEEE transactions on pattern analysis and machine intelligence}, 2018{\natexlab{b}}.

\bibitem[Manuvinakurike et~al.(2021)Manuvinakurike, Sahay, Chen, and Nachman]{manuvinakurike-etal-2021-incremental}
Ramesh Manuvinakurike, Saurav Sahay, Wenda Chen, and Lama Nachman.
\newblock Incremental temporal summarization in multi-party meetings.
\newblock In Haizhou Li, Gina-Anne Levow, Zhou Yu, Chitralekha Gupta, Berrak Sisman, Siqi Cai, David Vandyke, Nina Dethlefs, Yan Wu, and Junyi~Jessy Li (eds.), \emph{Proceedings of the 22nd Annual Meeting of the Special Interest Group on Discourse and Dialogue}, pp.\  530--541, Singapore and Online, July 2021. Association for Computational Linguistics.
\newblock \doi{10.18653/v1/2021.sigdial-1.55}.
\newblock URL \url{https://aclanthology.org/2021.sigdial-1.55}.

\bibitem[Mao et~al.(2023)Mao, Zhao, Li, Gu, He, Li, and Li]{mao-etal-2023-bipartite}
Qianren Mao, Shaobo Zhao, Jiarui Li, Xiaolei Gu, Shizhu He, Bo~Li, and Jianxin Li.
\newblock Bipartite graph pre-training for unsupervised extractive summarization with graph convolutional auto-encoders.
\newblock In Houda Bouamor, Juan Pino, and Kalika Bali (eds.), \emph{Findings of the Association for Computational Linguistics: EMNLP 2023}, pp.\  4929--4941, Singapore, December 2023. Association for Computational Linguistics.
\newblock \doi{10.18653/v1/2023.findings-emnlp.328}.
\newblock URL \url{https://aclanthology.org/2023.findings-emnlp.328}.

\bibitem[Maynez et~al.(2020)Maynez, Narayan, Bohnet, and McDonald]{maynez2020faithfulness}
Joshua Maynez, Shashi Narayan, Bernd Bohnet, and Ryan McDonald.
\newblock On faithfulness and factuality in abstractive summarization.
\newblock In \emph{Proceedings of the 58th Annual Meeting of the Association for Computational Linguistics}, pp.\  1906--1919, 2020.

\bibitem[McCreadie et~al.(2014)McCreadie, Macdonald, and Ounis]{richardincremental2014}
Richard McCreadie, Craig Macdonald, and Iadh Ounis.
\newblock Incremental update summarization: Adaptive sentence selection based on prevalence and novelty.
\newblock In \emph{Proceedings of the 23rd ACM International Conference on Conference on Information and Knowledge Management}, CIKM '14, pp.\  301–310, New York, NY, USA, 2014. Association for Computing Machinery.
\newblock ISBN 9781450325981.
\newblock \doi{10.1145/2661829.2661951}.
\newblock URL \url{https://doi.org/10.1145/2661829.2661951}.

\bibitem[Medhat et~al.(2014)Medhat, Hassan, and Korashy]{medhat2014sentiment}
Walaa Medhat, Ahmed Hassan, and Hoda Korashy.
\newblock Sentiment analysis algorithms and applications: A survey.
\newblock \emph{Ain Shams engineering journal}, 5\penalty0 (4):\penalty0 1093--1113, 2014.

\bibitem[Mihalcea \& Tarau(2004)Mihalcea and Tarau]{textrank}
Rada Mihalcea and Paul Tarau.
\newblock {T}ext{R}ank: Bringing order into text.
\newblock In \emph{Proceedings of the 2004 Conference on Empirical Methods in Natural Language Processing}, pp.\  404--411, Barcelona, Spain, July 2004. Association for Computational Linguistics.
\newblock URL \url{https://aclanthology.org/W04-3252}.

\bibitem[Nenkova \& Bagga(2003)Nenkova and Bagga]{nenkova2003facilitating}
Ani Nenkova and Amit Bagga.
\newblock Facilitating email thread access by extractive summary generation.
\newblock In \emph{RANLP}, volume 260, pp.\  287--296, 2003.

\bibitem[Nenkova \& Vanderwende(2005)Nenkova and Vanderwende]{nenkova2005impact}
Ani Nenkova and Lucy Vanderwende.
\newblock The impact of frequency on summarization.
\newblock \emph{Microsoft Research, Redmond, Washington, Tech. Rep. MSR-TR-2005}, 101, 2005.

\bibitem[Okamoto et~al.(2008)Okamoto, Chen, and Li]{toprank}
Kazuya Okamoto, Wei Chen, and Xiang-Yang Li.
\newblock Ranking of closeness centrality for large-scale social networks.
\newblock \emph{Lecture Notes in Computer Science}, 5059:\penalty0 186--195, 2008.

\bibitem[Ozsoy et~al.(2011)Ozsoy, Alpaslan, and Cicekli]{ozsoy2011text}
Makbule~Gulcin Ozsoy, Ferda~Nur Alpaslan, and Ilyas Cicekli.
\newblock Text summarization using latent semantic analysis.
\newblock \emph{Journal of Information Science}, 37\penalty0 (4):\penalty0 405--417, 2011.

\bibitem[Pang(2008)]{pang2008lee}
Bo~Pang.
\newblock Opinion mining and sentiment analysis.
\newblock \emph{Foundations and trends in information retrieval}, 2\penalty0 (1-2):\penalty0 1--135, 2008.

\bibitem[Peyrard(2019)]{peyrard-2019-simple}
Maxime Peyrard.
\newblock A simple theoretical model of importance for summarization.
\newblock In Anna Korhonen, David Traum, and Llu{\'\i}s M{\`a}rquez (eds.), \emph{Proceedings of the 57th Annual Meeting of the Association for Computational Linguistics}, pp.\  1059--1073, Florence, Italy, July 2019. Association for Computational Linguistics.
\newblock \doi{10.18653/v1/P19-1101}.
\newblock URL \url{https://aclanthology.org/P19-1101}.

\bibitem[Pratapa et~al.(2023)Pratapa, Small, and Dreyer]{pratapa-etal-2023-background}
Adithya Pratapa, Kevin Small, and Markus Dreyer.
\newblock Background summarization of event timelines.
\newblock In Houda Bouamor, Juan Pino, and Kalika Bali (eds.), \emph{Proceedings of the 2023 Conference on Empirical Methods in Natural Language Processing}, pp.\  8111--8136, Singapore, December 2023. Association for Computational Linguistics.
\newblock \doi{10.18653/v1/2023.emnlp-main.505}.
\newblock URL \url{https://aclanthology.org/2023.emnlp-main.505}.

\bibitem[Radev et~al.(2004)Radev, Jing, Sty{\'s}, and Tam]{radev2004centroid}
Dragomir~R Radev, Hongyan Jing, Ma{\l}gorzata Sty{\'s}, and Daniel Tam.
\newblock Centroid-based summarization of multiple documents.
\newblock \emph{Information Processing \& Management}, 40\penalty0 (6):\penalty0 919--938, 2004.

\bibitem[Rossiello et~al.(2017)Rossiello, Basile, and Semeraro]{rossiello2017centroid}
Gaetano Rossiello, Pierpaolo Basile, and Giovanni Semeraro.
\newblock Centroid-based text summarization through compositionality of word embeddings.
\newblock In \emph{Proceedings of the MultiLing 2017 Workshop on Summarization and Summary Evaluation Across Source Types and Genres}, pp.\  12--21, 2017.

\bibitem[Saha et~al.(2022)Saha, Zhang, Hase, and Bansal]{saha2022summarization}
Swarnadeep Saha, Shiyue Zhang, Peter Hase, and Mohit Bansal.
\newblock Summarization programs: Interpretable abstractive summarization with neural modular trees.
\newblock In \emph{The Eleventh International Conference on Learning Representations}, 2022.

\bibitem[Samet(1984)]{samet1984quadtree}
Hanan Samet.
\newblock The quadtree and related hierarchical data structures.
\newblock \emph{ACM Computing Surveys (CSUR)}, 16\penalty0 (2):\penalty0 187--260, 1984.

\bibitem[Siledar et~al.(2023)Siledar, Banerjee, Patil, Singh, Chelliah, Garera, and Bhattacharyya]{siledar-etal-2023-synthesize}
Tejpalsingh Siledar, Suman Banerjee, Amey Patil, Sudhanshu Singh, Muthusamy Chelliah, Nikesh Garera, and Pushpak Bhattacharyya.
\newblock Synthesize, if you do not have: Effective synthetic dataset creation strategies for self-supervised opinion summarization in {E}-commerce.
\newblock In Houda Bouamor, Juan Pino, and Kalika Bali (eds.), \emph{Findings of the Association for Computational Linguistics: EMNLP 2023}, pp.\  13480--13491, Singapore, December 2023. Association for Computational Linguistics.
\newblock \doi{10.18653/v1/2023.findings-emnlp.899}.
\newblock URL \url{https://aclanthology.org/2023.findings-emnlp.899}.

\bibitem[Sosea et~al.(2023)Sosea, Zhan, Li, and Caragea]{sosea-etal-2023-unsupervised}
Tiberiu Sosea, Hongli Zhan, Junyi~Jessy Li, and Cornelia Caragea.
\newblock Unsupervised extractive summarization of emotion triggers.
\newblock In Anna Rogers, Jordan Boyd-Graber, and Naoaki Okazaki (eds.), \emph{Proceedings of the 61st Annual Meeting of the Association for Computational Linguistics (Volume 1: Long Papers)}, pp.\  9550--9569, Toronto, Canada, July 2023. Association for Computational Linguistics.
\newblock \doi{10.18653/v1/2023.acl-long.531}.
\newblock URL \url{https://aclanthology.org/2023.acl-long.531}.

\bibitem[Steen \& Markert(2019)Steen and Markert]{steen2019abstractive}
Julius Steen and Katja Markert.
\newblock Abstractive timeline summarization.
\newblock In \emph{Proceedings of the 2nd Workshop on New Frontiers in Summarization}, pp.\  21--31, 2019.

\bibitem[Vanderwende et~al.(2007)Vanderwende, Suzuki, Brockett, and Nenkova]{vanderwende2007beyond}
Lucy Vanderwende, Hisami Suzuki, Chris Brockett, and Ani Nenkova.
\newblock Beyond sumbasic: Task-focused summarization with sentence simplification and lexical expansion.
\newblock \emph{Information Processing \& Management}, 43\penalty0 (6):\penalty0 1606--1618, 2007.

\bibitem[Waudby-Smith \& Ramdas(2024)Waudby-Smith and Ramdas]{waudby2024estimating}
Ian Waudby-Smith and Aaditya Ramdas.
\newblock Estimating means of bounded random variables by betting.
\newblock \emph{Journal of the Royal Statistical Society Series B: Statistical Methodology}, 86\penalty0 (1):\penalty0 1--27, 2024.

\bibitem[Yan et~al.(2011{\natexlab{a}})Yan, Kong, Huang, Wan, Li, and Zhang]{yan2011timeline}
Rui Yan, Liang Kong, Congrui Huang, Xiaojun Wan, Xiaoming Li, and Yan Zhang.
\newblock Timeline generation through evolutionary trans-temporal summarization.
\newblock In \emph{Proceedings of the 2011 Conference on Empirical Methods in Natural Language Processing}, pp.\  433--443, 2011{\natexlab{a}}.

\bibitem[Yan et~al.(2011{\natexlab{b}})Yan, Wan, Otterbacher, Kong, Li, and Zhang]{yan2011evolutionary}
Rui Yan, Xiaojun Wan, Jahna Otterbacher, Liang Kong, Xiaoming Li, and Yan Zhang.
\newblock Evolutionary timeline summarization: a balanced optimization framework via iterative substitution.
\newblock In \emph{Proceedings of the 34th international ACM SIGIR conference on Research and development in Information Retrieval}, pp.\  745--754, 2011{\natexlab{b}}.

\bibitem[Yoon et~al.(2023)Yoon, Chan, and Han]{yoon2023pdsum}
Susik Yoon, Hou~Pong Chan, and Jiawei Han.
\newblock {PDSum}: Prototype-driven continuous summarization of evolving multi-document sets stream.
\newblock In \emph{Proceedings of the ACM Web Conference 2023}, pp.\  1650--1661, 2023.

\bibitem[Zaheer et~al.(2019)Zaheer, Guruganesh, Levin, and Smola]{sgtree}
Manzil Zaheer, Guru Guruganesh, Golan Levin, and Alex. Smola.
\newblock Terrapattern: A nearest neighbor search service.
\newblock \emph{Preprint}, 2019.

\bibitem[Zhang et~al.(2023)Zhang, Wan, and Bansal]{zhang-etal-2023-extractive}
Shiyue Zhang, David Wan, and Mohit Bansal.
\newblock Extractive is not faithful: An investigation of broad unfaithfulness problems in extractive summarization.
\newblock In Anna Rogers, Jordan Boyd-Graber, and Naoaki Okazaki (eds.), \emph{Proceedings of the 61st Annual Meeting of the Association for Computational Linguistics (Volume 1: Long Papers)}, pp.\  2153--2174, Toronto, Canada, July 2023. Association for Computational Linguistics.
\newblock \doi{10.18653/v1/2023.acl-long.120}.
\newblock URL \url{https://aclanthology.org/2023.acl-long.120}.

\bibitem[Zhao \& Chaturvedi(2020)Zhao and Chaturvedi]{zhao2020weakly}
Chao Zhao and Snigdha Chaturvedi.
\newblock Weakly-supervised opinion summarization by leveraging external information.
\newblock In \emph{Proceedings of the AAAI Conference on Artificial Intelligence}, volume~34, pp.\  9644--9651, 2020.

\bibitem[Zhong et~al.(2019)Zhong, Liu, Wang, Qiu, and Huang]{zhong-etal-2019-searching}
Ming Zhong, Pengfei Liu, Danqing Wang, Xipeng Qiu, and Xuanjing Huang.
\newblock Searching for effective neural extractive summarization: What works and what{'}s next.
\newblock In Anna Korhonen, David Traum, and Llu{\'\i}s M{\`a}rquez (eds.), \emph{Proceedings of the 57th Annual Meeting of the Association for Computational Linguistics}, pp.\  1049--1058, Florence, Italy, July 2019. Association for Computational Linguistics.
\newblock \doi{10.18653/v1/P19-1100}.
\newblock URL \url{https://aclanthology.org/P19-1100}.

\bibitem[Zhong et~al.(2020)Zhong, Liu, Chen, Wang, Qiu, and Huang]{zhong-etal-2020-extractive}
Ming Zhong, Pengfei Liu, Yiran Chen, Danqing Wang, Xipeng Qiu, and Xuanjing Huang.
\newblock Extractive summarization as text matching.
\newblock In Dan Jurafsky, Joyce Chai, Natalie Schluter, and Joel Tetreault (eds.), \emph{Proceedings of the 58th Annual Meeting of the Association for Computational Linguistics}, pp.\  6197--6208, Online, July 2020. Association for Computational Linguistics.
\newblock \doi{10.18653/v1/2020.acl-main.552}.
\newblock URL \url{https://aclanthology.org/2020.acl-main.552}.

\end{thebibliography}
\bibliographystyle{tmlr}

\appendix
\clearpage
\section{Theoretical Proofs}
\label{sec:stat-background}

\localtableofcontents

\subsection{Hoeffding's Inequality}
\label{sec:hoeffding}
\begin{thm}[Hoeffding-Azuma inequality~\citep{hoeffding1963probability}]
Let $x_1, \ldots , x_t$ be i.i.d. random variables supported on $[-b/2, b/2]$. Define $\mu = \mathbb{E}[x_1]$ and $\hat{\mu} = \frac{1}{t}\sum_{i=1}^t x_i$. Then, for any $\delta \in (0, 1]$, with probability at least $(1 - \delta)$, it holds that   
\label{thm:1}
\end{thm}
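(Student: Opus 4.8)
The plan is to prove this as the standard two-sided concentration bound $|\hat\mu - \mu| \le \sqrt{b^2\log(2/\delta)/(2t)}$, which is exactly the per-coordinate estimate invoked in Proposition~\ref{prop:1}, via the Chernoff (exponential Markov) method. First I would control the upper tail. Writing $S = \sum_{i=1}^t (x_i - \mu)$, for any $s > 0$ the exponential Markov inequality gives $\mathbb{P}(\hat\mu - \mu \ge \epsilon) = \mathbb{P}(S \ge t\epsilon) \le e^{-st\epsilon}\,\mathbb{E}[e^{sS}]$, and by the i.i.d.\ assumption the moment generating function factorizes as $\mathbb{E}[e^{sS}] = \prod_{i=1}^t \mathbb{E}[e^{s(x_i-\mu)}]$.

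The crux is bounding the single-variable log-MGF, i.e.\ Hoeffding's lemma: if $Y := x_i - \mu$ is centered and takes values in an interval of width $b$ (here $x_i \in [-b/2,b/2]$), then $\mathbb{E}[e^{sY}] \le e^{s^2 b^2/8}$. I would establish this by convexity: since $y\mapsto e^{sy}$ lies below its chord on $[a,b]$, we have $e^{sY} \le \frac{b-Y}{b-a}e^{sa} + \frac{Y-a}{b-a}e^{sb}$; taking expectations and using $\mathbb{E}[Y]=0$ reduces the bound to $e^{\psi(u)}$ with $u = s(b-a)$ and $\psi(u) = -pu + \log(1-p+pe^u)$, where $p = -a/(b-a)$. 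A short computation gives $\psi(0)=\psi'(0)=0$ and $\psi''(u) = q(1-q) \le 1/4$ with $q = pe^u/(1-p+pe^u)\in[0,1]$, so Taylor's theorem yields $\psi(u)\le u^2/8$, which is the claimed sub-Gaussian MGF bound with $(b-a)=b$.

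Combining these, $\mathbb{P}(\hat\mu-\mu\ge\epsilon) \le \exp(-st\epsilon + ts^2b^2/8)$; minimizing the exponent over $s$ at $s^\star = 4\epsilon/b^2$ gives $\mathbb{P}(\hat\mu-\mu\ge\epsilon)\le \exp(-2t\epsilon^2/b^2)$. By symmetry the identical bound holds for the lower tail, so a union bound yields $\mathbb{P}(|\hat\mu-\mu|\ge\epsilon) \le 2\exp(-2t\epsilon^2/b^2)$. Finally I would invert: setting the right-hand side equal to $\delta$ and solving for $\epsilon$ produces $\epsilon = \sqrt{b^2\log(2/\delta)/(2t)}$, which is the stated bound holding with probability at least $1-\delta$.

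I expect the main obstacle to be Hoeffding's lemma, specifically verifying the uniform bound $\psi''(u)\le 1/4$ on the centered log-MGF, since the chord/convexity reduction and the Chernoff optimization over $s$ are routine. Everything else is mechanical; the only remaining care is tracking the interval width $b-a=b$ so that the constants match the $\sqrt{b^2\log(2/\delta)/(2t)}$ form that Proposition~\ref{prop:1} then aggregates across the $D$ coordinates.
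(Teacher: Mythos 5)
Your proof is correct: the Chernoff bound with Hoeffding's lemma (chord/convexity bound, $\psi(0)=\psi'(0)=0$, $\psi''(u)=q(1-q)\le 1/4$), the optimization at $s^\star=4\epsilon/b^2$ giving the one-sided tail $\exp(-2t\epsilon^2/b^2)$, the union bound, and the inversion to $\epsilon=\sqrt{b^2\log(2/\delta)/(2t)}$ with interval width $b-a=b$ all check out and yield exactly the stated two-sided bound. The paper itself states this theorem as background and cites \citet{hoeffding1963probability} without proof, and your argument is precisely the classical proof from that source, so there is nothing to compare beyond noting that you have supplied the omitted standard derivation.
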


\begin{equation}
\lvert \mu - \hat{\mu} \rvert < \sqrt{\frac{b^2\log(2/\delta)}{2t}}
\end{equation}

\subsection{Proof of Proposition~\ref{prop:1}}\label{appdx:prop1}

\begin{proof}
We proceed by applying the Hoeffding inequality (Theorem~\ref{thm:1}) to individual dimensions of the estimated centroid vector as shown below:
\begin{equation*}
    \mu_t^{(i)} = \mu^{(i)} \pm \sqrt{\frac{b^2\log(2/\delta)}{2t}}.
\end{equation*}
Using triangle inequality, we can bound the Euclidean distance to $\mu_t$ from any point $x \in \mathbb{R}^D$:
\begin{align*}
    \|\mu_t - x\| &= \|\mu - x + \epsilon\|, \text{ where } \epsilon^{(i)} = \pm \sqrt{\frac{b^2\log(2/\delta)}{2t}}\\
    & \leq \|\mu - x\| + \|\epsilon\|\\
    &= \|\mu - x\| + \sqrt{\frac{Db^2\log(2/\delta)}{2t}}.
\end{align*}
Therefore, for an Euclidean distance metric $d(\cdot, \cdot)$, we have obtained the following result:
\begin{equation*}
    d(\mu_t, x)= d(\mu, x) + \sqrt{\frac{Db^2\log(2/\delta)}{2t}}.
\end{equation*}

\end{proof}
The above proof can be extended to any distance metric that follows triangle inequality. The new bound can be written as: $d(\mu_t, x) \leq d(\mu, x) + \|\epsilon\|$, where $\epsilon = \left[\pm \sqrt{{b^2\log(2/\delta)}{/2t}}\right]^D$. 

\subsection{Proof of Proposition~\ref{prop:2}}\label{appdx:prop2}

\begin{proof}
We estimate the distance between subsequent centroids using  the triangle inequality of Euclidean distance and use the results from Proposition~\ref{prop:1} as follows:

\begin{equation*}
    \begin{aligned}
        d({\mu}_t, {\mu}_{t+i}) &\leq d({\mu}_t, \mu) + d({\mu}_{t+i}, \mu)\\
        &\leq d(\mu, \mu) + \sqrt{\frac{Db^2\log(2/\delta)}{2t}} + d(\mu, \mu) + \sqrt{\frac{Db^2\log(2/\delta)}{2(t+i)}}\\
        &\leq 2\sqrt{\frac{Db^2\log(2/\delta)}{2t}} \\
        &= \sqrt{{2Db^2\log(2/\delta)}{/t}}.
    \end{aligned}
\end{equation*}    
\end{proof}

\subsection{Proof of Proposition~\ref{lemma:exact}}\label{sec:lemma-exact}
\begin{proof}
    To prove this proposition, we first note that  $\mathcal{R}$ is initialized using reservoir search with radius $r = d_{k} + \lambda$, where $d_{k}$ is the distance of $\mu_{t}$ to its $k$-th neighbour. Therefore, for any radius $r \geq d_{k}$, the reservoir size is always $|\mathcal{R}| \geq k$. 

Next, we consider the case when the updated centroid $\mu_t$ is within distance $\|\mu_t - \mu_{\mathrm{last}}\| < \lambda/2$. In this case, the distance from $\mu_t$ of any representation not in the reservoir (blue circles \tikzcircle[fill=cyan]{2.5pt} in Figure~\ref{fig:viz} (left \& center))
is at least 
$d(x, \mu_t) > d_{\mathrm{min}} = d_{k} + \lambda/2, \; \forall x \in  \mathcal{X}_t \setminus \mathcal{R}$. Now, note that all points in $\mathbf{S}_{\mathrm{last}} \in \mathcal{R}$, as the $\mathcal{R}$ has not been updated after that (also $|\mathbf{S}_{\mathrm{last}}| = k$).
Next, we can show that:
\begin{equation}
\begin{aligned}
     \forall {x \in \mathbf{S}_{\mathrm{last}}}, \; d(x, \mu_t) &\leq d(x, \mu_{\mathrm{last}}) + d(\mu_t, \mu_{\mathrm{last}})\\
     &<  d_k + \lambda/2 = d_{\mathrm{min}} 
\end{aligned}
\end{equation}
This shows that the reservoir has at least $k$ points within $d_{\mathrm{min}}$ of $\mu_t$. Therefore, performing \texttt{knn} within $\mathcal{R}$ 
is exact.  
When the updated mean $\mu_t$ drifts more $\Delta \geq \lambda/2$ (shown in Figure~\ref{fig:viz} (right)), we perform a reservoir search query on the entire cover tree which is also exact. 
\end{proof}

\subsection{Proof of Proposition~\ref{lemma:3}}\label{sec:lemma4_}
\begin{proof}
First, we note that reservoir search queries on the cover tree are executed when the bound obtained in Proposition~\ref{prop:2} is violated. This happens with probability $p_t = 1 - (1 - \delta)^{2D}$ at each time step $t$. Therefore, the expected value of $n_{\mathrm{rs}}$ can be obtained as follows:

\begin{equation*}
    \begin{aligned}
        n_{\mathrm{rs}} &= \sum_t p_t\\
        &= \sum_t 1 - (1 - \delta)^{2D}\\
        &= \sum_t 2D\delta - D(2D-1)\delta^2 + \ldots\\
        &\leq \int_{t=1}^n \left[\frac{2D}{t} + \frac{D(2D-1)(2D-2)}{3t^3} + \ldots \right]dt\\
        &\approx \int_{t=1}^n \left[\frac{2D}{t} + \frac{D(2D-1)(2D-2)}{3t^3} + \ldots \right]dt\\
        &= 2D \log n + \frac{D(2D -1)(2D -2)}{6n^2} - \ldots\\
        &= O(D\log n).
    \end{aligned}
\end{equation*}

We set $\delta = \Theta({1}/{t})$ as used in our algorithm and assume that $n \gg D$ in the above proof.
\end{proof}

\subsection{Proof of Proposition~\ref{lemma:4}\label{sec:proof_lemma4}}
\begin{proof}
    For a metric space $(\mathcal{X}, d)$, we assume the doubling constant to be $c$. The size of the reservoir $|\mathcal{R}|$ is given by the maximum number of elements in the ball $B(\mu_n, d_k(1+\epsilon))$, where $\mu_n$ is the last query for range search in the cover tree, and $d_k(1+\epsilon)$ is the radius used for the same. We know that $|B(\mu_n, d_k)| = k$. Using this, we proceed to formulate the bound as follows:
    \begin{equation}
    \begin{aligned}
    |\mathcal{R}| &\leq |B(\mu_n, d_k(1+\epsilon))| \\
    &\leq c^{{\ceil{\log_2 (1+\lambda/d_k)}}}|B(\mu_n,  d_k)| \\
    &\leq c^{{\ceil{\log_2 (1+\sqrt{{\alpha D\log(2t)}/{2t}}/d_k)}}}k.
    \end{aligned}
    \label{eqn:bound}
    \end{equation}
    From Equation~\ref{eqn:bound}, we observe that we can control the reservoir size using the hyperparameter $\alpha$. Next, we show that the reservoir size bound is only dependent on data dimension $D$, number of nearest neighbour $k$, and aspect ratio $\Delta$ of $X$. For large $t \gg 0$, we have the limit: $\lim_{t \rightarrow \infty} \frac{\log(2t)}{2t}  \rightarrow 0$.
    Plugging this result in Equation~\ref{eqn:bound}, we get the following
    \begin{equation}
    |\mathcal{R}| \leq c^{{\ceil{\log_2 (1+\epsilon)}}}k = ck,
    \label{eqn:bound_1}
    \end{equation}
    where for large $t \gg 0$, $\epsilon \rightarrow 0$ is a small constant. 
\end{proof}
\textbf{Discussion}. Therefore, at later time steps, we observe that $|\mathcal{R}|$ is directly proportional to the number of nearest neighbours, $k$. For a metric space $(X, d)$, the doubling constant ($c$) can be bounded as $c \leq 2^{D \ceil{1 + \log_2 \Delta}}$, where $\Delta$ is the aspect ratio of the space. The aspect ratio $\Delta$ is defined as the ratio of the largest to the smallest interpoint distance in that space.

\subsection{Additional Theoretical Results}\label{sec:addl_theory}

\begin{prop}[Interval between reservoir search queries] 
\textit{The number of steps (interval) between two consecutive reservoir search queries ($m$) grows linearly with the time step, i.e., $m = O(t)$.}
    \label{lemma:5}
\end{prop}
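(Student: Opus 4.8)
The plan is to build directly on the probabilistic model already used in the proof of Proposition~\ref{lemma:3}. There, a reservoir search is triggered precisely when the concentration bound of Proposition~\ref{prop:2} fails, an event that occurs at step $t$ with probability $p_t = 1 - (1-\delta)^{2D}$. Consequently, the interval $m$ between two consecutive reservoir search queries is exactly the waiting time until the next such failure, and I would estimate it as the reciprocal of the per-step trigger probability, $m \approx 1/p_t$. This makes Proposition~\ref{lemma:5} the natural ``dual'' of Proposition~\ref{lemma:3}: summing $p_t$ over all steps produced the total count $O(D\log n)$, whereas inverting $p_t$ at a fixed step yields the local spacing between queries.

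First I would model the trigger events as (approximately) independent Bernoulli trials with success probability $p_t$ at each step. Under this model the number of steps until the next trigger is geometrically distributed, so its expectation is $\mathbb{E}[m] = 1/p_t$. Next I would substitute the confidence schedule $\delta = \Theta(1/t)$ used throughout the algorithm and expand $p_t$ to first order. Using the Bernoulli/Taylor expansion $(1-\delta)^{2D} = 1 - 2D\delta + O((D\delta)^2)$ (the same series appearing in the proof of Proposition~\ref{lemma:3}), I obtain $p_t = 2D\delta + O((D\delta)^2) = \Theta(D/t)$ for $t \gg D$. Inverting this gives
\begin{equation*}
m = \frac{1}{p_t} = \Theta\!\left(\frac{t}{2D}\right) = O(t),
\end{equation*}
which establishes the claimed linear growth of the interval in the time step $t$, treating the dimension $D$ as a fixed constant.

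The main obstacle I anticipate is justifying the independence/geometric approximation cleanly: the per-step probability $p_s$ is not constant over the interval $[t, t+m]$ but decays like $1/s$, so over an interval of length $m = \Theta(t)$ it varies by a constant factor. I would handle this by observing that $p_s$ remains within a constant multiple of $p_t$ throughout the interval, so the \emph{order} of the waiting time is unaffected, and by framing the conclusion as a statement about the expected interval rather than an almost-sure one. As a consistency check, I would note that geometric spacing $m = \Theta(t)$ places queries at geometrically increasing times, whose count up to $n$ is $\Theta(\log n)$, in agreement with Proposition~\ref{lemma:3}; this cross-validation is the main piece of evidence that the linear (rather than, say, sublinear) scaling is the correct answer.
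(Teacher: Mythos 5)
Your proof is correct at the level of rigor the paper itself adopts, but it takes a genuinely different route from the paper's. The paper argues deterministically: using the incremental update it writes $\mu_{t+m} = \frac{t}{t+m}\mu_t + \frac{1}{t+m}\sum_{i=1}^{m} x_{t+i}$, so the drift from the last query point is exactly $\Delta = \frac{m}{t+m}\left\|\mu_{\mathrm{last}} - \mu_{t+1:t+m}\right\|$, where $\mu_{t+1:t+m}$ is the mean of the $m$ newly arrived points; setting $\Delta \approx \lambda/2$ (the algorithm's actual trigger condition in Algorithm~\ref{alg:online-ct}) and solving yields $m \approx \lambda t/\left(2\|\mu_{\mathrm{last}} - \mu_{t+1:t+m}\| - \lambda\right) = O(t)$. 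You instead invert the per-step trigger probability $p_t = 1-(1-\delta)^{2D} = \Theta(D/t)$ from the proof of Proposition~\ref{lemma:3} and read off the expected geometric waiting time $1/p_t = \Theta(t/D)$. Your route buys a pleasing unification --- Propositions~\ref{lemma:3} and~\ref{lemma:5} become, respectively, the sum and the reciprocal of the same Bernoulli rate, and your geometric-spacing consistency check (intervals of order $t$ imply $\Theta(\log n)$ total queries) is sound --- but it inherits the heuristics of that model: the trigger events are not independent (they share one centroid trajectory), and the $p_t$ derived from Proposition~\ref{prop:2} is only an \emph{upper} bound on the probability that the drift actually exceeds $\lambda/2$, so strictly your inversion delivers a lower bound $m = \Omega(t/D)$ on the expected interval rather than the two-sided $\Theta(t/2D)$ you state (the lower bound is arguably the direction of practical interest, namely that queries become infrequent, and it matches the spirit of the claim). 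The paper's deterministic computation avoids independence assumptions entirely, holds per trajectory rather than merely in expectation, and exposes the dependence $m \propto 1/\|\mu_{\mathrm{last}} - \mu_{t+1:t+m}\|$, which is precisely what the paper exploits next to analyze distribution drift in Proposition~\ref{lemma:6}; your rate-inversion argument, being agnostic to the data through $\delta$ alone, does not recover that refinement.
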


\begin{proof}
    The evolving centroid is computed in $O(1)$ time as shown below:

\begin{equation*}
    \mu_{t+1} = \left(1 - \frac{1}{t+1}\right)\mu_t + \frac{1}{t+1}\mathbf x_{t+1}.
\end{equation*}
In a similar manner, we can compute $\mu_{t+2}$ as follows:
\begin{equation*}
    \begin{aligned}
        \mu_{t+2} &= \left(1 - \frac{1}{t+2}\right)\mu_{t+1} + \frac{1}{t+2}\mathbf x_{t+2}\\
         &= \left(1 - \frac{1}{t+2}\right)\left(1 - \frac{1}{t+1}\right)\mu_{t} \\&+ \left(1 - \frac{1}{t+2}\right)\frac{1}{t+1}\mathbf x_{t+1} +  \frac{1}{t+2}\mathbf x_{t+2}\\
         &= \frac{t}{t+2}\mu_{t} + \frac{\mathbf{x}_{t+1} + \mathbf{x}_{t+2}}{t+2}.
    \end{aligned}
\end{equation*}
Inductively, we have the following:
\begin{equation*}
    \mu_{t+m} = \frac{t}{t+m}\mu_{t} + \frac{\sum_{i=1}^{m}\mathbf{x}_{t+i}}{t+m}. 
\end{equation*}
Now let us consider that the last reservoir search query was executed at time step $t$ ($\mu_{\mathrm{last}} = \mu_t$). We compute the drift from $\mu_{\mathrm{last}}$ after $m$ steps as follows:
\begin{equation*}
    \begin{aligned}
        \Delta &= \|\mu_{\mathrm{last}} - \mu_{t+m}\|\\
        &= \left\|\left(1 - \frac{t}{t+m}\right)\mu_{\mathrm{last}} - \frac{\sum_{i=1}^{m}\mathbf{x}_{t+i}}{t+m}\right\|\\
        &= \frac{m}{t+m}\left\|\mu_{\mathrm{last}} - \sum_{i=1}^{m}\mathbf{x}_{t+i}/m\right\|\\
        &= \frac{m}{t+m}\left\|\mu_{\mathrm{last}} - \mu_{t+1:t+m}\right\|,
    \end{aligned}
\end{equation*}
where $\mu_{t+1:t+m}$ is the mean of $m$ representations observed after the last reservoir search query. Now, let us consider the scenario where the centroid drift just exceeds the threshold, $\Delta \approx \frac{\lambda}{2}$:
\begin{equation*}
    \begin{aligned}
        &\frac{m}{t+m}\left\|\mu_{\mathrm{last}} - \mu_{t+1:t+m}\right\| \approx \frac{\lambda}{2}\\
        &m \approx \frac{\lambda t}{2\|\mu_{\mathrm{last}} - \mu_{t+1:t+m}\| - \lambda}\\
        &m \sim O(t),
    \end{aligned}
\end{equation*}
where $\lambda$ is confidence bound defined in Equation~\ref{eqn:threshold}. 
\end{proof}

\textbf{Discussion}. We observe that the interval between consecutive reservoir search queries $m$ is directly proportional to the time step of the last query $t$ ($\lambda$ remains constant for any reasonably large $t$). This shows that the reservoir search queries become infrequent as more samples are processed. We also observe that $m$ reduces as the $\|\mu_{\mathrm{last}} - \mu_{t+1:t+m}\|$ increases, which implies that the reservoir search queries become more frequent whenever there is a shift in the underlying input distribution.  

\begin{prop}[Interval between reservoir search during distribution drift]
    {The number of steps (interval) between two consecutive reservoir search queries ($m$) when the centroid of the distribution drifts, increases with the number of representations processed $m = O(\sqrt{t\log t})$.}
    \label{lemma:6}
\end{prop}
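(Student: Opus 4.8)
The plan is to build directly on the drift identity derived in the proof of Proposition~\ref{lemma:5}, namely
\[
\Delta = \frac{m}{t+m}\left\|\mu_{\mathrm{last}} - \mu_{t+1:t+m}\right\|,
\]
and to isolate how the factor $\|\mu_{\mathrm{last}} - \mu_{t+1:t+m}\|$ behaves once the distribution is drifting. The guiding observation is that this factor is what distinguishes the two regimes: in the stationary setting of Proposition~\ref{lemma:5} both $\mu_{\mathrm{last}}$ and the fresh-window mean $\mu_{t+1:t+m}$ are estimates of the \emph{same} fixed centroid, so their separation decays like $O(1/\sqrt m)$, and this vanishing factor is exactly what inflates the interval to $m = O(t)$. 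My first step would therefore be to argue that under drift this separation no longer collapses: because the underlying centroid has genuinely moved, the mean of the incoming window carries a persistent bias relative to the stale query point $\mu_{\mathrm{last}}$, so that $\|\mu_{\mathrm{last}} - \mu_{t+1:t+m}\| = \Theta(1)$ rather than $O(1/\sqrt m)$. I would state this as the working model of drift, a constant per-window displacement $\kappa>0$, and remark that the residual stochastic fluctuation of the window mean is lower order and absorbed into $\kappa$.

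Given $\|\mu_{\mathrm{last}} - \mu_{t+1:t+m}\| \approx \kappa$, the next step is to reuse the algebra of Proposition~\ref{lemma:5}: imposing the triggering condition $\Delta \approx \lambda/2$ gives $m \approx \lambda t/(2\kappa - \lambda)$. Since the schedule forces $\lambda \to 0$, this simplifies to $m \approx \lambda t/(2\kappa)$; equivalently, because the solution will satisfy $m \ll t$, one may approximate $t+m \approx t$ and read off $\tfrac{m\kappa}{t} \approx \tfrac{\lambda}{2}$ directly.

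Finally I would substitute the explicit threshold $\lambda = \sqrt{2\alpha D b^2\log(2/\delta)/t}$ from Equation~\ref{eqn:threshold} together with the confidence schedule $\delta = \Theta(1/t)$, so that $\log(2/\delta) = \Theta(\log t)$ and $\lambda t = \sqrt{2\alpha D b^2\log(2t)\,t} = O(\sqrt{t\log t})$. This yields $m = O(\sqrt{t\log t})$, and since $\sqrt{t\log t} \ll t$ for large $t$, the earlier approximation $m \ll t$ is self-consistent, closing the argument. The contrast with the stationary bound $m = O(t)$ also recovers the intended qualitative conclusion that reservoir-search queries become \emph{more} frequent under drift.

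The hard part will be the very first step: the entire rate hinges on justifying that $\|\mu_{\mathrm{last}} - \mu_{t+1:t+m}\|$ stays bounded below by a positive constant. This is a modelling choice about the drift rather than a routine calculation, and it is delicate because the answer is sensitive to it. A naive constant-velocity model, in which the per-window displacement grows like $vm$, would instead give $\tfrac{vm^2}{t}\approx\lambda$ and hence $m = O\big((t\log t)^{1/4}\big)$, not the claimed rate. The proof must therefore pin down precisely the drift regime, a persistent constant-magnitude offset between the lagging query centroid and the incoming samples, and cleanly separate this dominant bias from the vanishing statistical fluctuations in order to obtain exactly $m = O(\sqrt{t\log t})$.
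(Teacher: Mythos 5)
Your proposal is correct and arrives at the paper's rate by an equivalent route, but the two arguments are packaged differently in a way worth spelling out. The paper does not work from the Proposition~\ref{lemma:5} identity at all: it instead \emph{postulates} a concrete generative drift model, $\mathbf{x} \sim \mathcal{N}(\mu_t + \beta\bar{v}, \epsilon I)$ with $\epsilon \ll \beta$, in which each incoming sample sits at a constant offset $\beta\bar{v}$ from the \emph{current running mean} $\mu_t$; it then telescopes the update $\mu_{t+1} = \mu_t + (\beta\bar{v} \pm \epsilon\mathbf{1})/(t+1)$ into a harmonic sum, bounds $\|\mu_{t+m} - \mu_t\| \leq \beta\log(1 + m/t)$, and inverts the trigger $\beta\log(1+m/t) \geq \lambda/2$ via $e^x \approx 1+x$ to get $m \approx t\lambda/(2\beta) = O(\sqrt{t\log t})$. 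Under that model your working assumption holds automatically: the window mean satisfies $\|\mu_{\mathrm{last}} - \mu_{t+1:t+m}\| \approx \beta(1 + O(m/t)) = \Theta(1)$, so your $\kappa$ is exactly the paper's $\beta$, and your linearized trigger $m \approx \lambda t/(2\kappa)$ coincides term-for-term with the paper's $m \approx t\lambda/(2\beta)$ before the substitution of Equation~\ref{eqn:threshold} and $\delta = \Theta(1/t)$. The step you correctly flag as the ``hard part'' --- justifying the $\Theta(1)$ lower bound on the window-mean separation --- is resolved in the paper not by proof but by definition: the drift regime is fixed by fiat as a constant-magnitude offset relative to the lagging sample mean, which is precisely the regime you identified as necessary. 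What your route buys is brevity (reusing the exact drift identity rather than re-deriving the accumulation) and an explicit sensitivity analysis the paper lacks: your observation that a constant-velocity drift of the true mean would instead give $m = O\big((t\log t)^{1/4}\big)$ is correct and makes transparent that the stated rate is a property of the paper's particular drift model, not of drift per se. What the paper's route buys is that the $\Theta(1)$ offset is a derived consequence of a stated model (with the noise term $\pm\epsilon\mathbf{1}$ tracked explicitly through the harmonic sum) rather than an assumption, and its discussion additionally reads off the inverse dependence $m \propto 1/\beta$ on the drift magnitude, which your constant-$\kappa$ formulation also contains but does not emphasize.
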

 
\begin{proof}
    Let us consider the scenario where the centroid of the distribution from which representations are sampled shifts from $\mu_1$ to $\mu_2$ (where $\mu_1, \mu_2 \in \mathbb{R}^d$). The unit vector in the direction of the drift is given as $\Bar{v} = \frac{\mu_1 - \mu_2}{\|\mu_1 - \mu_2\|}$. During the drift, we assume that representations are sampled from the dynamic distribution $\mathbf{x} \sim \mathcal{N}(\mu_t + \beta\Bar{v}, \epsilon I)$, where $\epsilon \ll \beta$ and $\mu_t$ is the current mean of all samples seen so far. We assume a specific form of distribution shift as shown below:
\begin{equation*}
    \begin{aligned}
        \mu_{t+1} &= \left(1 - \frac{1}{t+1}\right)\mu_t + \frac{x_t}{t+1}\\
        &= \left(1 - \frac{1}{t+1}\right)\mu_t + \frac{\mu_t + \beta \Bar{v} \pm \epsilon \mathbf{1}}{t+1}\\
        &= \mu_t + \frac{ \beta \Bar{v} \pm \epsilon \mathbf{1}}{t+1}.
    \end{aligned}
\end{equation*}
Extending this for $m$ steps, we get the following:
\begin{equation*}
    \begin{aligned}
        \mu_{t+m} &= \mu_t + (\beta \Bar{v} \pm \epsilon \mathbf{1})\sum_{i=t}^{t+m} \frac{ 1}{i+1}\\
        \mu_{t+m} - \mu_t &=  (\beta \Bar{v} \pm \epsilon \mathbf{1})\left(\sum_{i=1}^{t+m} \frac{ 1}{i} - \sum_{i=1}^{t} \frac{ 1}{i}\right)\\
        \|\mu_{t+m} - \mu_t\| &\leq \beta \left(\sum_{i=1}^{t+m} \frac{ 1}{i} - \sum_{i=1}^{t} \frac{ 1}{i}\right).
    \end{aligned}
\end{equation*}
Incorporating the upper bound on the harmonic series, $\sum_{k=1}^m \frac{1}{k} = \log(m) + \epsilon_m$, with $\epsilon_m \geq 0$, and monotonically decreasing.
\begin{equation*}
    \begin{aligned}
        \|\mu_{t+m} - \mu_t\| &\leq \beta \left(\log(t+m) - \log(t) \right)\\
        &= \beta\log\left(1 + \frac{m}{t}\right).
    \end{aligned}
\end{equation*}
Now assuming $\mu_{\mathrm{last}} = \mu_t$ is the last time the reservoir search query was executed. We compute the number of steps till the centroid drift $\Delta \geq \frac{\lambda}{2}$ exceeds the threshold:
\begin{equation*}
    \begin{aligned}
        \beta\log(1 + \frac{m}{t}) &\geq \frac{\lambda}{2} \\
        m  &\geq t\left( \exp({\lambda/2\beta}) - 1\right) \\
        &= t\left( \exp\left(\frac{1}{2\beta}\sqrt{\frac{\alpha D \log(2t)}{2t}}\right) - 1\right) \\
        &\approx t\left( 1 + \frac{1}{2\beta}\sqrt{\frac{\alpha D \log(2t)}{2t}} - 1\right) \\
        &=  \frac{1}{2\beta}\sqrt{\frac{\alpha D t\log(2t)}{2}}\\
        &\sim O\left(\sqrt{t \log t}\right).
    \end{aligned}
\end{equation*}
In this derivation, for large $t \gg 0$, the term ${\lambda}{/2\beta}$ is quite small allowing us to write the exponent term as $e^x = 1+x$, which provides a final complexity of $O(\sqrt{t\log t})$. 
\end{proof}
\textbf{Discussion}. The above result shows that the interval between consecutive reservoir search queries is smaller when there is a distribution drift. We also observe that $m$ is inversely proportional to the amount of drift $\beta$, which implies that reservoir search queries are called more frequently when the amount of drift is large.

\clearpage
\section{Additional Related Work}
\label{sec:addl_rel}

\noindent \textbf{Extractive Opinion Summarization}. The task of extractive opinion summarization has been extensively researched by a long line of work~\citep{kim2011comprehensive, zhong-etal-2019-searching,zhao2020weakly, zhong-etal-2020-extractive, gu2022memsum, hosking2023attributable, mao-etal-2023-bipartite, siledar-etal-2023-synthesize, jiang-etal-2023-large, sosea-etal-2023-unsupervised, li2024rationale}. Most systems operate in an unsupervised setup, where saliency scores are assigned to review sentences and a subset of sentences are selected based on their saliency scores. Several forms of saliency computation methods have been explored by prior works using: raw textual frequency features~\citep{nenkova2005impact, nenkova2003facilitating}, distance from centroid~\citep{radev2004centroid}, graph-based approaches~\citep{erkan2004lexrank, textrank}, among others. 
More recent neural approaches~\citep{qt, semae, geosumm, li2023aspect} perform summarization in a two-step process by learning review representations followed by an inference algorithm to generate the summary using the learned representations. In our algorithm, we follow the centroid-based summarization paradigm and use a similar saliency scoring mechanism to perform opinion summarization incrementally.

 \noindent \textbf{Timeline Summarization}. Our work on incremental opinion summarization is closely related to the task of timeline summarization. Timeline summarization, which is a form of temporal summarization similar to our setup, has been studied extensively for news articles~\citep{allan2001temporal, richardincremental2014, ge2016news, ghalandari2020examining, pratapa-etal-2023-background} and in multi-document settings~\citep{john2014incremental, manuvinakurike-etal-2021-incremental, yoon2023pdsum, laskar-etal-2023-building}. This task involves generating a sequence of summaries that reflect the timeline of long-ranging news topics. Timeline summaries~\citep{yan2011evolutionary, yan2011timeline, steen2019abstractive, chen2023follow} are expected to have the following features: summaries should discuss important events in a time segment and consecutive summaries should not be redundant. Recently, \citep{cheang-etal-2023-lms} has shown several challenges involved in using language models for timeline summarization as pre-training of LMs affect the faithfulness of future summaries. In contrast to timeline summaries, opinion summaries in an incremental setup should capture the dominant opinions at each time step without any redundancy constraints, as only one summary is presented to the user at a time.

 \clearpage
\section{Synthetic Data Generation}\label{sec:lda}
In this section, we describe the process of sampling synthetic representations from the LDA~\citep{lda} process. The overall algorithm is presented in Algorithm~\ref{alg:lda}. First, we sample the word distribution about each topic. Secondly, for every document or review, we estimate its length. Thirdly, we select the words within each review by initially sampling a topic, and subsequently, picking a word from that topic's specific word distribution. We consider $10$ different topics (can be considered as aspects for opinion summarization), vocabulary size $|V|=100$, average review length $\Bar{L}=150$, and generate a total of $n=10^4$ reviews. We consider one-hot vectors for each word as the \texttt{repr}$(\cdot)$ function and retrieve the mean of all words as the review representation $e_i$. Finally, the entire set of synthetic representations $\mathcal{E}$ is returned to the user. 

\begin{algorithm}[H]
    \caption{Synthetic LDA data generation}
\begin{algorithmic}[1]
    \Function{LDA}{Topic Count $k$, Vocabulary Size $|V|$, Average review length $\bar{L}$, Reviews Count $n$}.
        \State $\alpha = \mathbf{1}^{1 \times k}$, $\beta = \mathbf{1}^{1 \times |V|}$
        \State \textcolor{gray}{// sample word distribution per topic}
        \State $\phi = \mathrm{Dir}(\beta) \in \mathbb{R}^{k \times |V|}$ 
        \State $\mathcal{E} = \{\}$ \textcolor{gray}{// empty representation set}
        \For{$i=1\ldots n$}
            \State \textcolor{gray}{// sample topic distr. and review length}
            \State $\theta \sim \mathrm{Dir}(\alpha)$, $l_i \sim \mathrm{Pois}(\bar{L})$ 
            \State $\mathcal W_i = \{\}$ \textcolor{gray}{// words in the $i$-th review}
            \For{$j= 1\ldots l_i$} 
                \State \textcolor{gray}{// sample new word}
                \State $t \sim \mathrm{Multinomial}(\theta)$
                \State $w \sim \mathrm{Multinomial}(\phi_t)$ 
                \State $\mathcal W_i = \mathcal W_i \cup w$ \textcolor{gray}{// add the new word}
            \EndFor
            \State \textcolor{gray}{// mean word representation}
            \State $e_i = \mathbb{E}_{w \sim \mathcal W_i}[\texttt{repr}(w)]$ 
            \State $\mathcal E_i = \mathcal E_i \cup e_i$ \textcolor{gray}{// add representation}
        \EndFor
    \State \Return{$\mathcal{E}$}
    \EndFunction
\end{algorithmic}
\label{alg:lda}
\end{algorithm} 

\clearpage

\section{Review Deletion Using {\X}}
\label{sec:deletion}
In this section, we show that we can use {\X} to update summaries when a subset of reviews is deleted. This scenario may be useful when some reviews are flagged for their reliability or foul language. In this setting, the user provides the system with a set of sentences $\mathbf{x}_{\mathrm{del}}$ to be deleted and expects the updated summary as the output. In Algorithm~\ref{alg:delete}, we present the algorithm to retrieve the updated summary when a set of sentences are deleted. First, we remove the sentences from the overall sentence set $\mathcal{X}_t$, cover tree $\texttt{ct}$, and reservoir $\mathcal{R}$. Next, we observe that we need to perform reservoir search on the entire cover tree if either of the conditions is met: the reservoir size is less than $k$, or the new mean has drifted beyond $\lambda/2$ (Line~\ref{lst:condition}). If neither of the above conditions are met, the algorithm can generate the summary from the existing reservoir $\mathcal{R}$. We observe that the most expensive step in Algorithm~\ref{alg:delete} is deletion from cover trees (Line~\ref{lst:del}), therefore executing the $\textsc{Delete}(\cdot)$ function should at least take $O(m \log |\mathcal{X}_t|)$ time (where $m$ is the number of sentences to be deleted and $|\mathcal{X}_t|$ is the total number of elements in the cover tree during deletion).

\begin{algorithm}[H]
    \caption{{\X} Deletion Routine}
\label{alg:unconstrained}
\begin{algorithmic}[1]
    \Function{\text{Delete}}{Sentences to delete $\mathbf{x}_{\mathrm{del}}$, last centroid $\mu_{\mathrm{last}}$,  reservoir $\mathcal{R}$}
        \State $\mathcal{X}_t = \mathcal{X}_{t-1} \setminus \mathbf{x}_{\mathrm{del}}$ \textcolor{gray}{// remove $\mathbf{x}_{\mathrm{del}}$}
    \State  \textcolor{gray}{// delete $\mathbf{x}_{\mathrm{del}}$ from the cover tree} 
    \State $\texttt{ct.delete}(\mathbf{x}_{\mathrm{del}})$\label{lst:del}
    \State \textcolor{gray}{// computed incrementally in $O(|\mathbf{x}_{\mathrm{del}}|)$}
    \State $\mu_t = \mathbb{E}[\mathcal{X}_t]$ 
    \State \textcolor{gray}{// drift of $\mu_t$ from the last query $\mu_{\mathrm{last}}$}
    \State $\Delta = \|\mu_t - \mu_{\mathrm{last}}\|$ 
        \State $\mathcal{R} = \mathcal{R} \setminus \mathbf{x}_{\mathrm{del}}$ \textcolor{gray}{// delete $\mathbf{x}_{\mathrm{del}}$ from $\mathcal{R}$}
        \State \textcolor{gray}{// check if drift exceeds threshold or $\mathcal{R}$ has less than $k$ elements} 
    \If{$\Delta > \lambda/2 \;\lor\; |\mathcal{R}| < k$} \label{lst:condition}
            \State $\lambda = \sqrt{\frac{\alpha D\log(2/\delta)}{2(t - |\mathbf{x}_{\mathrm{del}}|)}}$ 
            \State $\mathcal{R} = \texttt{ct.ReservoirSearch}(\mu_t, r, \lambda)$
            \State $\mu_{\mathrm{last}} = \mu_t$
    \EndIf 
    \State $\mathbf{S}_t = \texttt{knn}(\mathcal{R}, \mu_t)$ 
    \State \Return $\mathbf{S}_t$
    \EndFunction
\end{algorithmic}
\label{alg:delete}
\end{algorithm}

\clearpage

\section{Experiments}
\subsection{Implementation Details}
\label{sec:setup}

We implemented all our experiments in Python 3.6 on a Linux server. The experiments were run on a single Intel(R) Xeon(R) Silver 4214 CPU @ 2.20GHz processor. In our experiments, we set $\delta = 1/t$ and perform a grid search on a small held-out set to determine $\alpha$, for each dataset. The summary budget in all experiments was $k=20$. In our experiments, we use the SG Tree implementation available in  \texttt{graphgrove} library.\footnote{https://github.com/nmonath/graphgrove/}  
We will make the codebase, along with detailed instructions on how to replicate our experiments, publicly available after the double-blind review phase. We used the default set of hyperparameters available with HNSW and FAISS.

\subsection{Human Evaluation Details}
\label{sec:human_eval}

We performed human evaluation experiments on Amazon MTurk. We considered entities from the Amazon US reviews dataset where the number of reviews was more than 1000. 
In Table~\ref{tab:human_result}, we report the percentage of summary pairs marked as redundant or informative. All the datasets used for the evaluation were in English language and we did not perform additional data collection on our own. Human judges were compensated at a wage rate of \$15 per hour.

\subsection{Additional Experimental Results}
\label{sec:addl}

\begin{wraptable}[11]{r}{0.4\textwidth}
    \centering
    \vspace{-5pt}
    \resizebox{0.4\textwidth}{!}{
    \begin{tabular}{l c c}
    \toprule[1pt]
    Algorithm  &  Time (s) $\downarrow$ \TBstrut\\
    \midrule[0.5pt]
    {Brute force} & 25.51  \Tstrut\\
    Naive CT & 14.75 \\
    {\X} (reservoir) & 15.36 \\
    {\X} (reservoir + lazy) & 15.39 \\
    \bottomrule[1pt]
\end{tabular}

    }
    \caption{Runtime of different algorithms when the points are sampled from an adversarial distribution.}
    \label{tab:adv}
\end{wraptable}

\textbf{Adversarial distributions}. A limitation of {\X} is that its performance is dependent on the input data distribution. If the input points are not bounded, and the centroid of the input representations shifts frequently, {\X}'s performance worsens as we need to perform $k$NN queries on the cover tree quite often. To simulate this scenario, we sample points from a dynamic distribution where the mean shifts continuously. Specifically, we sample the $i$-th representation $x_i \sim \mathcal{N}(\mu_i, \Sigma)$, where $\mu_i = (i/n)\mathbf{1}$ and $n$ is a hyperparameter. In Table~\ref{tab:adv}, we observe that {\X} is quite slow {in this scenario} and obtains similar time complexity to the brute force method. We also observe that the Naive CT is the fastest algorithm in this setting,
as it does not have additional range search and reservoir computations. However, adversarial distributions are rare in real-world settings. In such scenarios, we recommend using Naive CT instead of {\X}.

\begin{figure}[h!]
    \centering
    \includegraphics[height=0.21\textwidth, keepaspectratio]{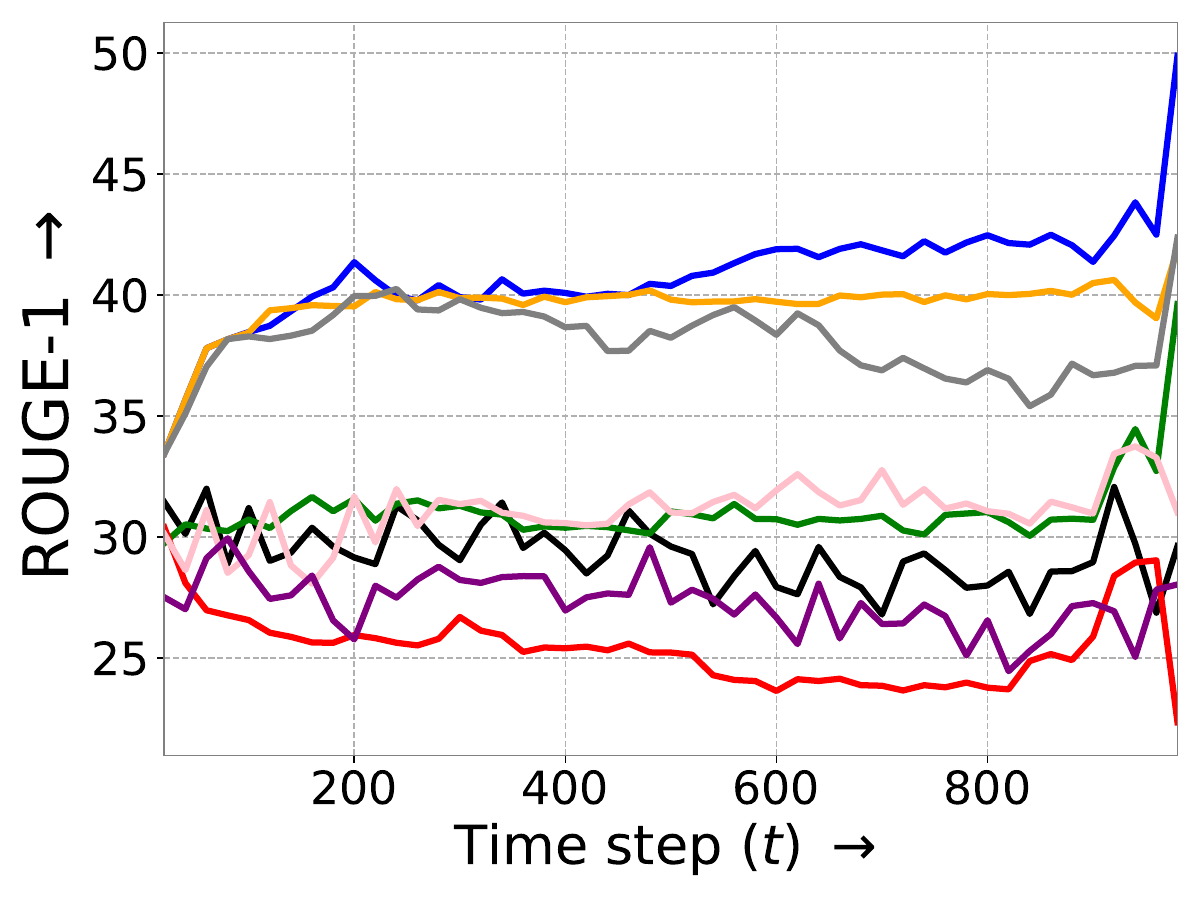}
    \includegraphics[height=0.21\textwidth, keepaspectratio]{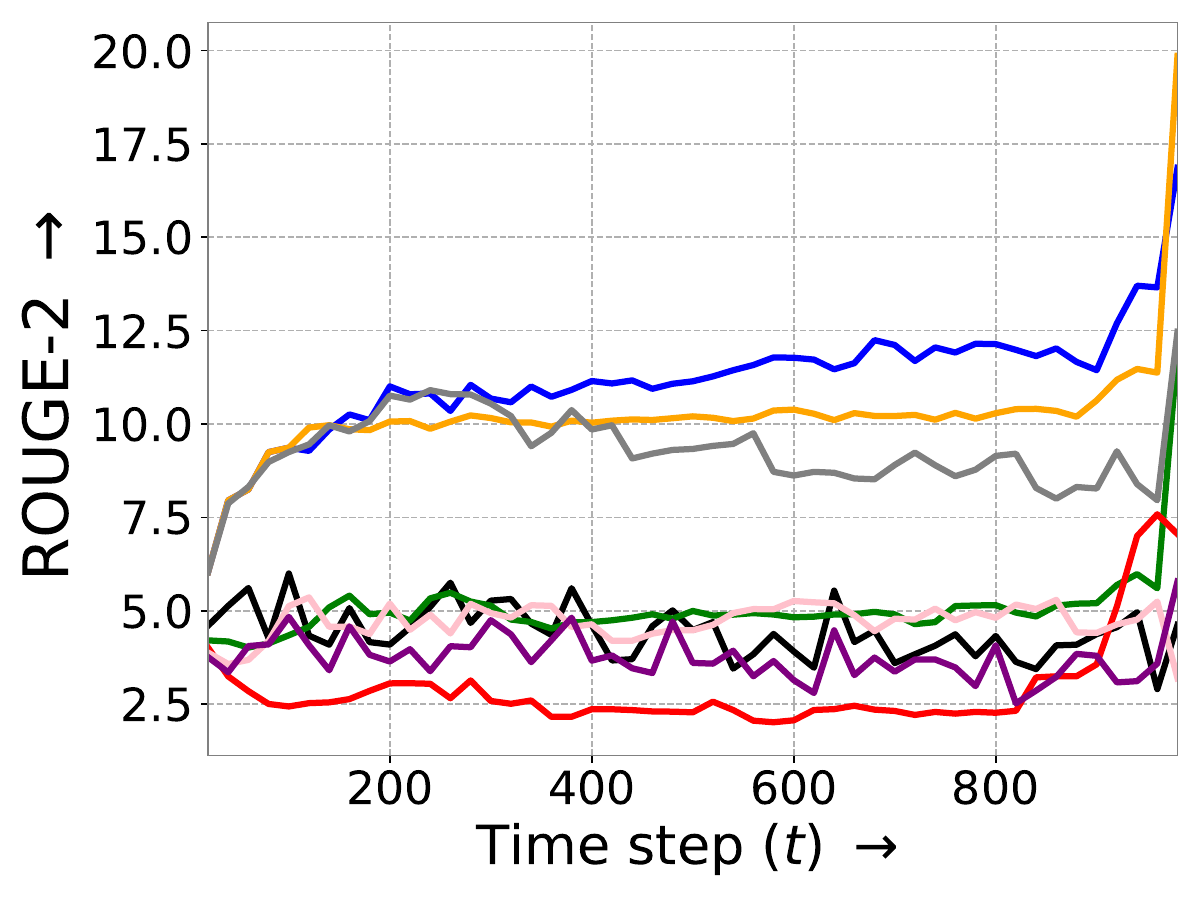}
    \includegraphics[height=0.21\textwidth, keepaspectratio]{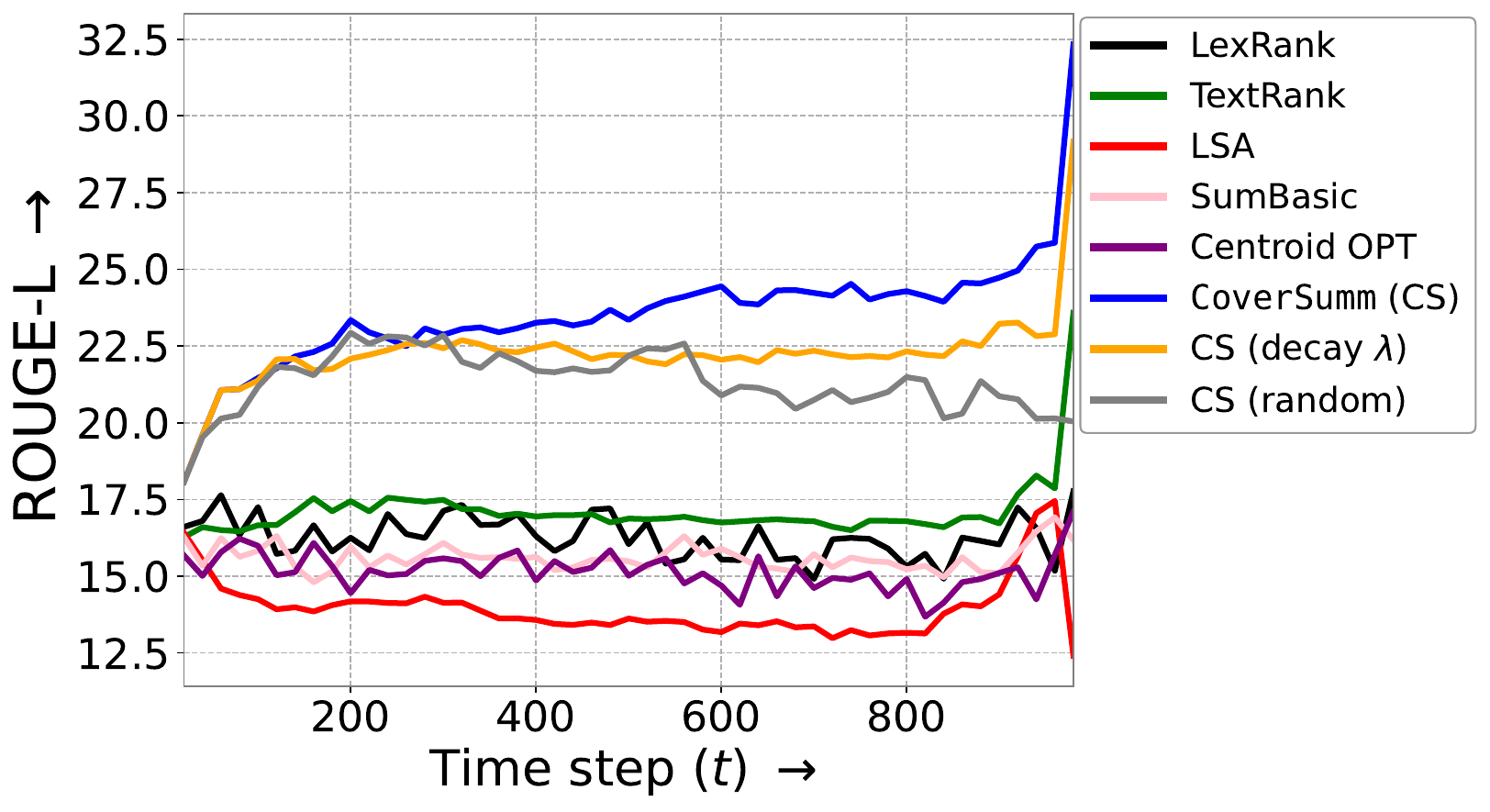}
    \caption{Evolution of ROUGE scores during incremental summarization. {\X} achieves the best scores with a gradual increase over time as more reviews are observed. {\X} is abbreviated as CS.}
    \label{fig:ROUGE}
\end{figure}

\textbf{Summarization Quality}. In this experiment, we investigate how the quality of the generated summaries improves in the incremental setup. Specifically, we measure the ROUGE scores of the generated summaries with the human-written summaries for all the reviews and report their evolution over time. In Figure~\ref{fig:ROUGE}, we observe a gradual improvement in summary quality (in terms of ROUGE overlap with the human-written summary) across different summarization systems. We also observe that {\X} and its variants achieve the best ROUGE scores throughout the incremental summarization process.

\begin{figure*}[t!]
	\centering
	\includegraphics[height=0.18\textwidth, keepaspectratio]{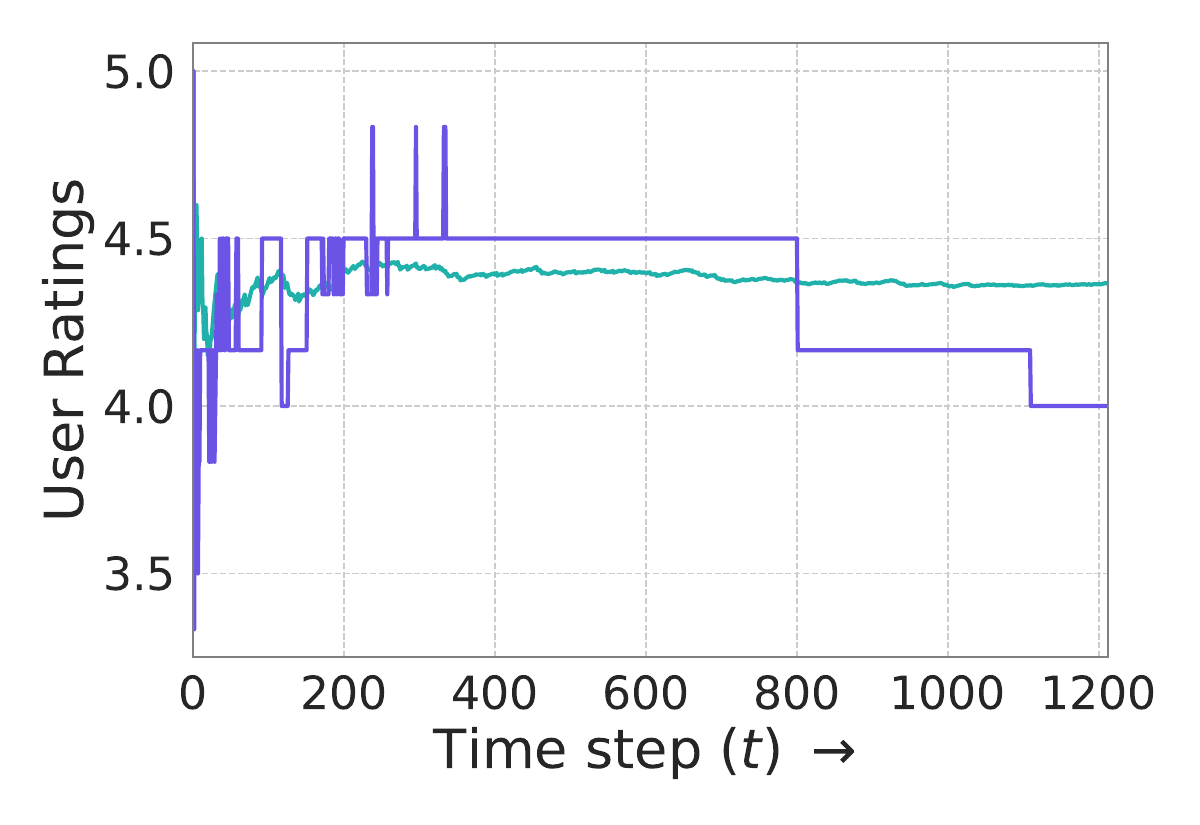}
	\includegraphics[height=0.18\textwidth, keepaspectratio]{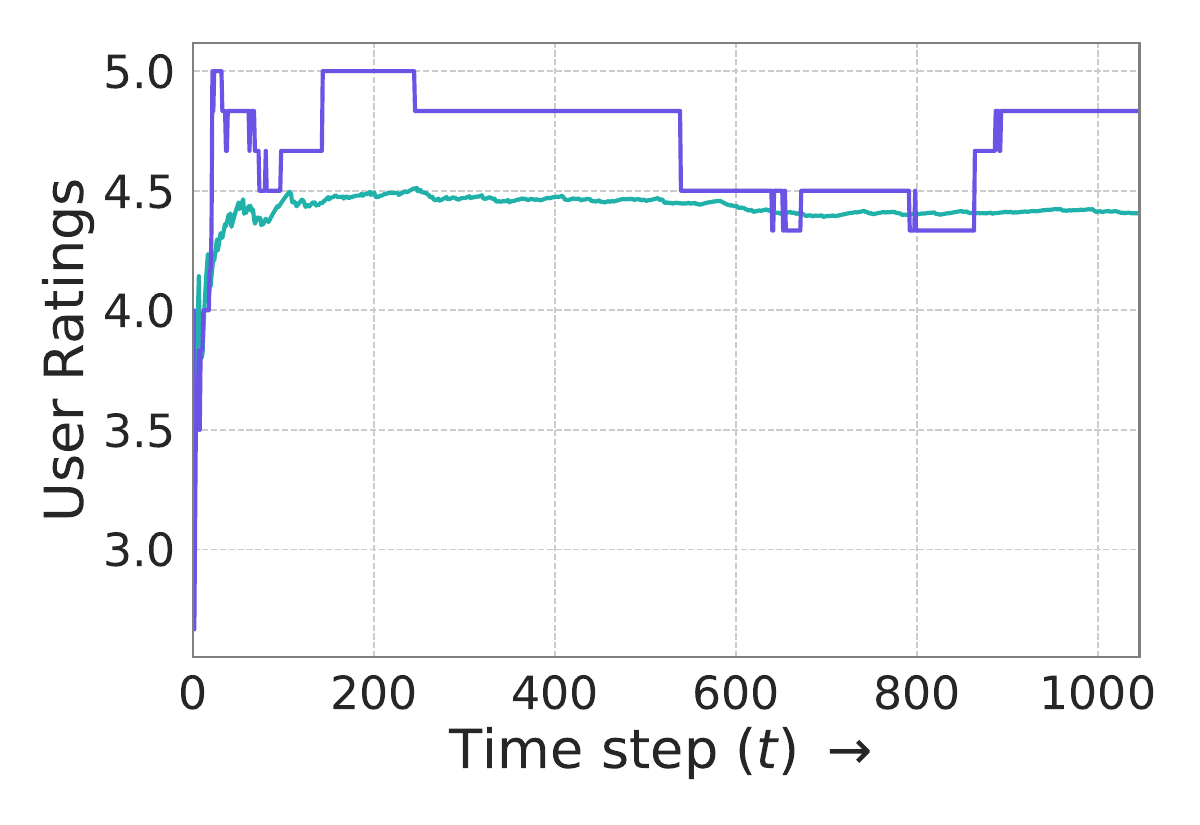}
	\includegraphics[height=0.18\textwidth, keepaspectratio]{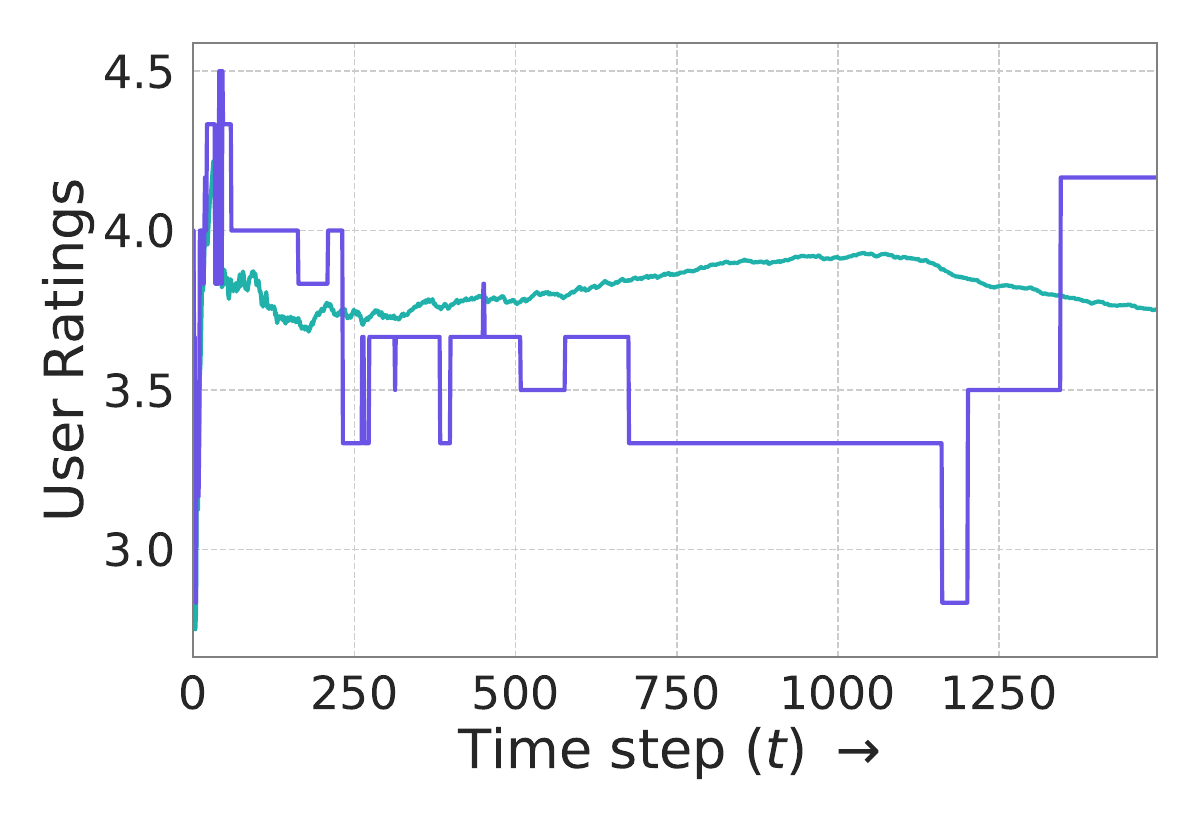}
	\includegraphics[height=0.18\textwidth, keepaspectratio]{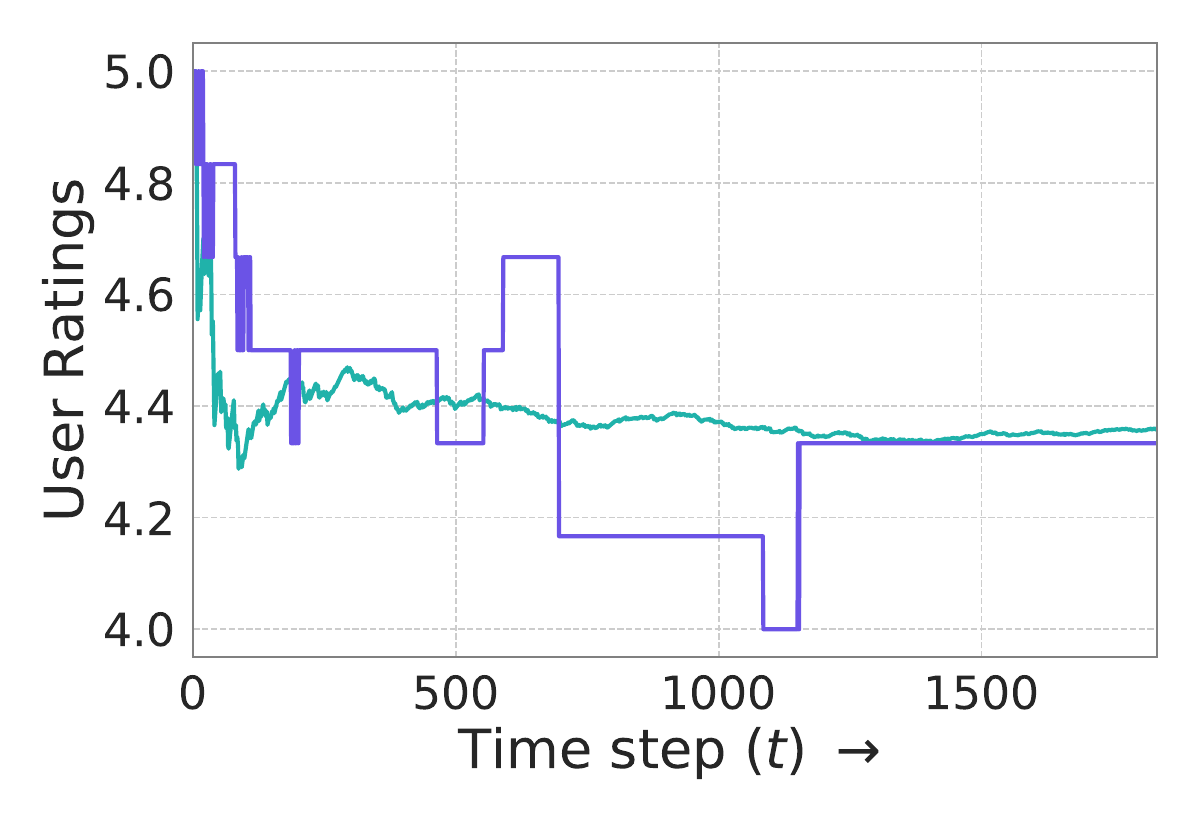}
	\includegraphics[height=0.18\textwidth, keepaspectratio]{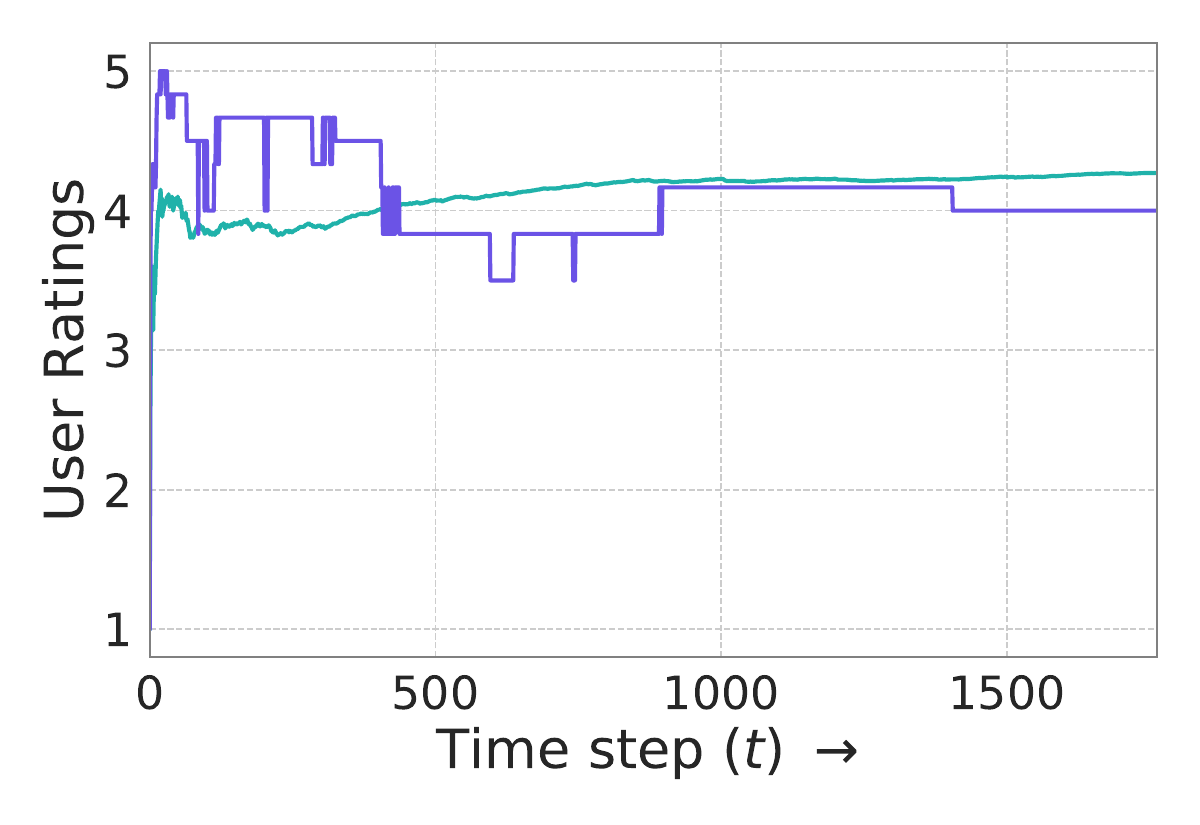}
	\includegraphics[height=0.18\textwidth, keepaspectratio]{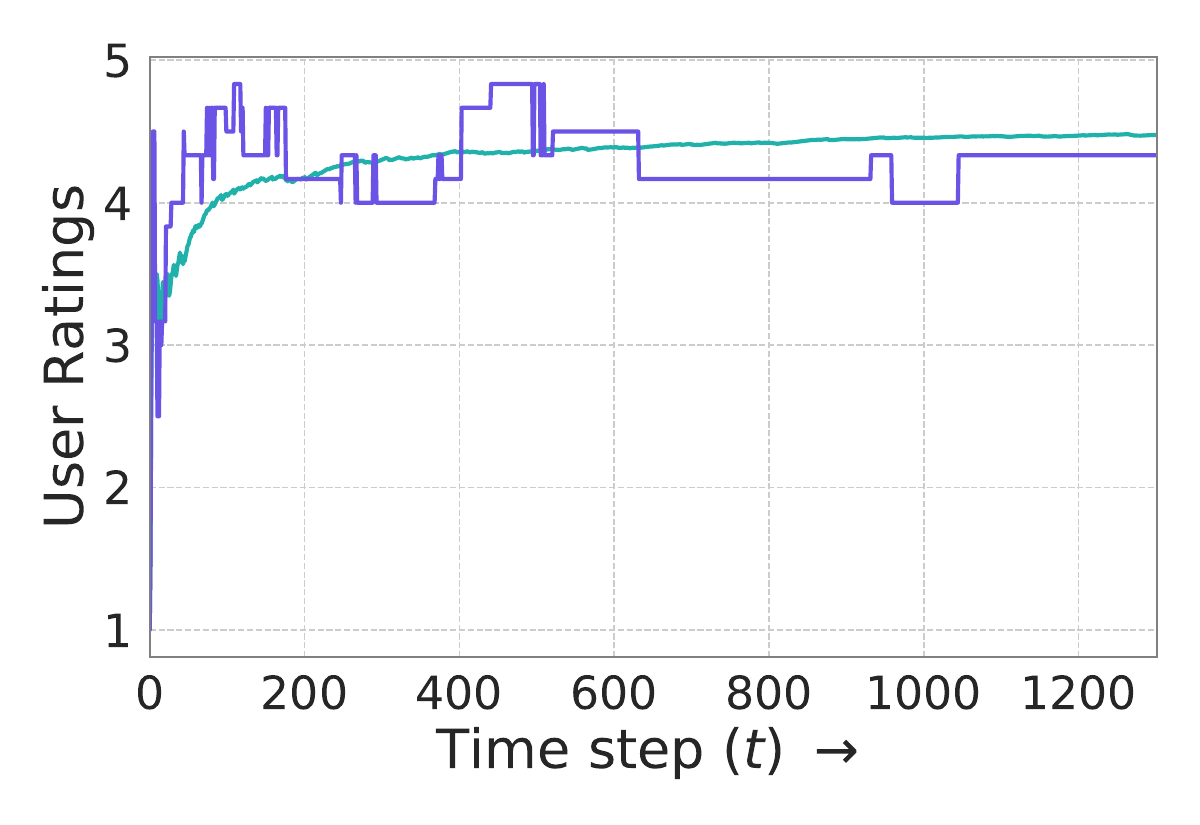}
	\includegraphics[height=0.18\textwidth, keepaspectratio]{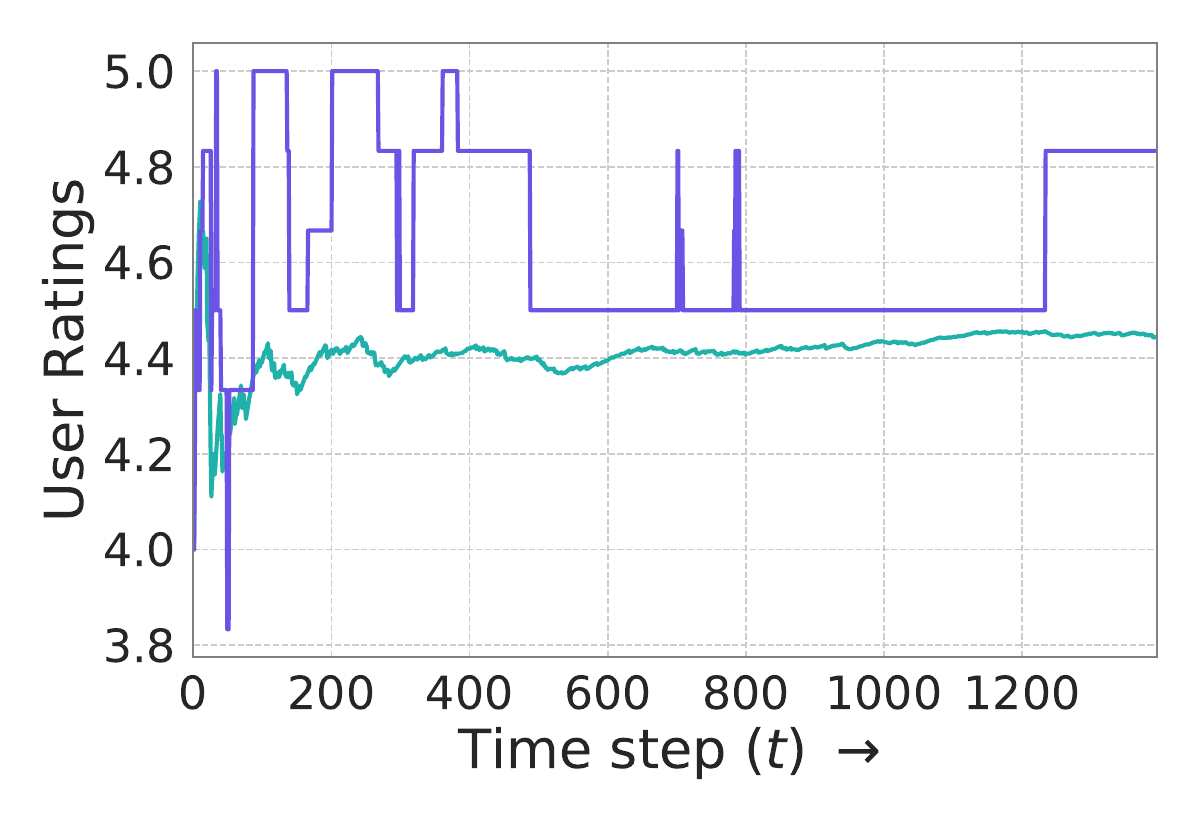}
	\includegraphics[height=0.18\textwidth, keepaspectratio]{figures/user_ratings_7.pdf}
	\includegraphics[height=0.18\textwidth, keepaspectratio]{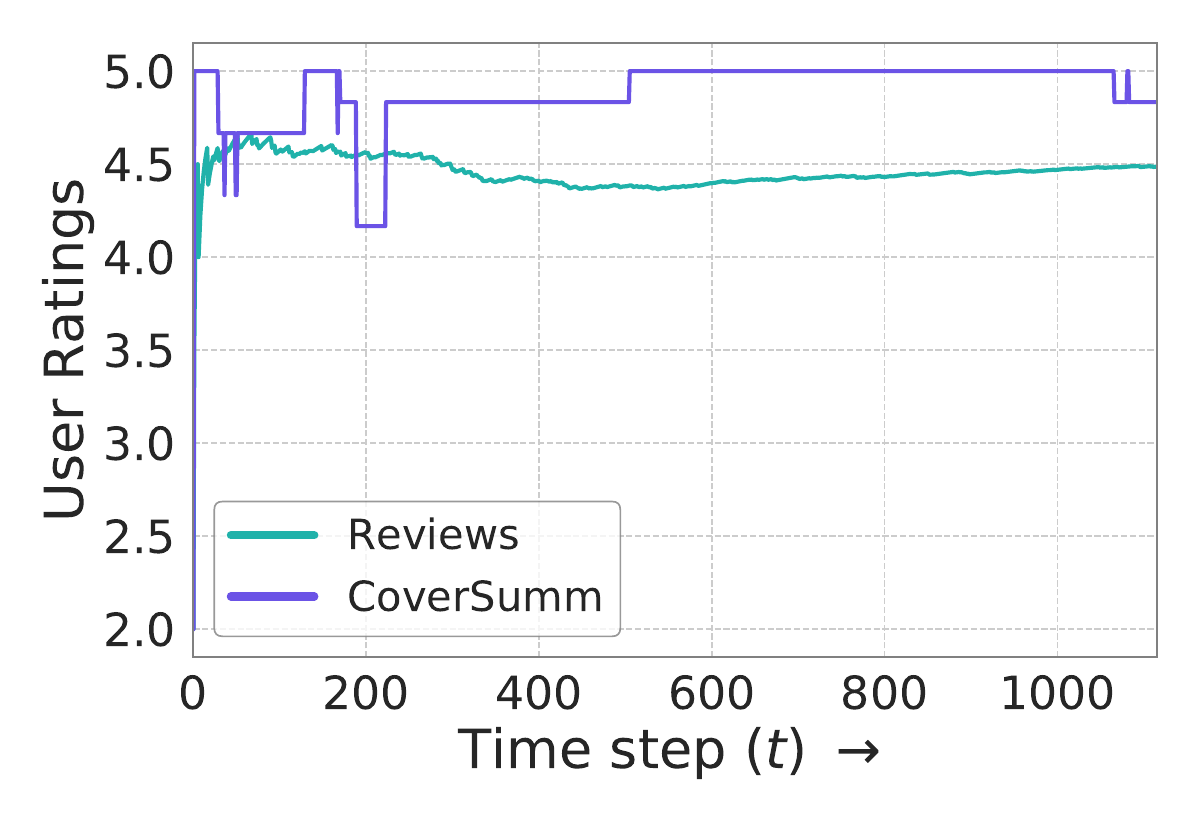}
	\caption{Evolution of average user ratings in the summary and user reviews for different products in the Amazon US reviews dataset. We observe that the user ratings in {\X}'s summaries can track the aggregate review ratings closely for all products.}
	\label{fig:amzn_reviews}
\end{figure*}

\textbf{User ratings}.\label{sec:user_ratings} We report the evolution of average user ratings and summary ratings using {\X} during the incremental summarization process. In Figure~\ref{fig:amzn_reviews}, we showcase the summary rating trajectory for 9 different product entities from the Amazon dataset. We observe that for all instances the average summary rating mimics the aggregate user ratings across all time steps during summarization.

\textbf{Generated Summaries}. In Table~\ref{tab:gen_summaries}, we report the summaries generated by {\X} over time for an entity in the SPACE dataset. We observe that the extracted summaries can be uninformative or redundant when there are only a few reviews, but the quality of summaries improves as more reviews pour in over time.

\textbf{Sentiment Polarity}. We report the evolution of sentiment polarity of {\X}'s summaries and aggregate reviews in Figure~\ref{fig:amzn_sentiment} for 21 different products from the \textsc{Space} dataset. For all products, we observe that the summary is able to capture the trends of sentiment polarity change in the original reviews. The generated summary also achieves sentiment polarity scores that are close to the aggregate user reviews.

\begin{table*}[t!]
    \small
    \centering
    \resizebox{\textwidth}{!}{
        \begin{tabular}{p{0.15\textwidth} | p{0.84\textwidth}} 
		\toprule[1pt]
		\multicolumn{1}{c|}{Time Step} & \multicolumn{1}{c}{CoverSumm Summaries} \TBstrut\\ 			
		\midrule[1pt]
		& \\[-1.0em]
		\multicolumn{1}{c|}{$t=10$}
		&  I've been at all three of the Key West Marriott Hotels and this is my favorite for value and relaxation. Has a great relaxing pool layout, full continental+ breakfast and on this last visit a staff which worked harder than most travel agents finding some last minute activities and making the bookings for our family. I did complain about the room at reception and they apologised and we were only staying the one night.\\
        \midrule
		\multicolumn{1}{c|}{$t=50$}
		&  We took a 2 night 3 day vacation and stayed at the Fairfield Inn in January 2012. The bathroom smelled like sewer water, the front door couldn't open wider, it felt like we were in a motel. The location was good and the price reasonable for Key West. The Fairfield Inn is the absolute best value and the staff is nothing short of fantastic. Rooms nothing special; not very spacious and "weird" bathroom. Good location.
            \\
        \midrule
		\multicolumn{1}{c|}{$t=100$}
		&  The staff is so hospitable, and the hotel was very reasonably priced and a good location. We had a good time at the pool and the bar is great. We took a 2 night 3 day vacation and stayed at the Fairfield Inn in January 2012. The location was good and the price reasonable for Key West. The staff was very Friendly and helpful. The room was very clean. Good location.
            \\
        \midrule
		\multicolumn{1}{c|}{$t=300$}
		&  The hotel staff was very friendly and accommodating. The front desk people were very helpful and friendly. Staff was friendly and helpful. We would definitely stay here again. Rooms were nice and clean. We would definitely stay there again and would highly recommend this hotel. The pool area is very nice. Breakfast was more than adequate and the pool area was very enjoyable. The room was very clean. They succeeded! This hotel is great.
            \\
        \midrule
		\multicolumn{1}{c|}{$t=500$}
		&  The hotel staff was very friendly and accommodating. The front desk people were very helpful and friendly. Staff was friendly and helpful. Rooms were nice and clean. We would definitely stay here again. Staff are very courteous and professional. The pool area is very nice. Breakfast was more than adequate and the pool area was very enjoyable. We would definitely stay there again and would highly recommend this hotel. They succeeded! The rooms are loud.
            \\
        \midrule
		\multicolumn{1}{c|}{$t=915$ (final)} 
		&  Staff was friendly and helpful. The front desk people were very helpful and friendly. The hotel staff was very friendly and accommodating. The pool was nice, but we did not use it. Rooms were nice and clean. We would definitely stay here again. The pool area was very nice. A big plus was the free parking and large selection for continental breakfast. Staff are very courteous and professional. We were not disappointed! They succeeded!
            \\
		\bottomrule[1pt]
\end{tabular}
}
    \caption{Generated summaries from CoverSumm at different time steps for an entity from the SPACE dataset. }
    \label{tab:gen_summaries}
\end{table*}

\begin{figure*}[t!]
	\centering 
        \includegraphics[width=0.25\textwidth, keepaspectratio]{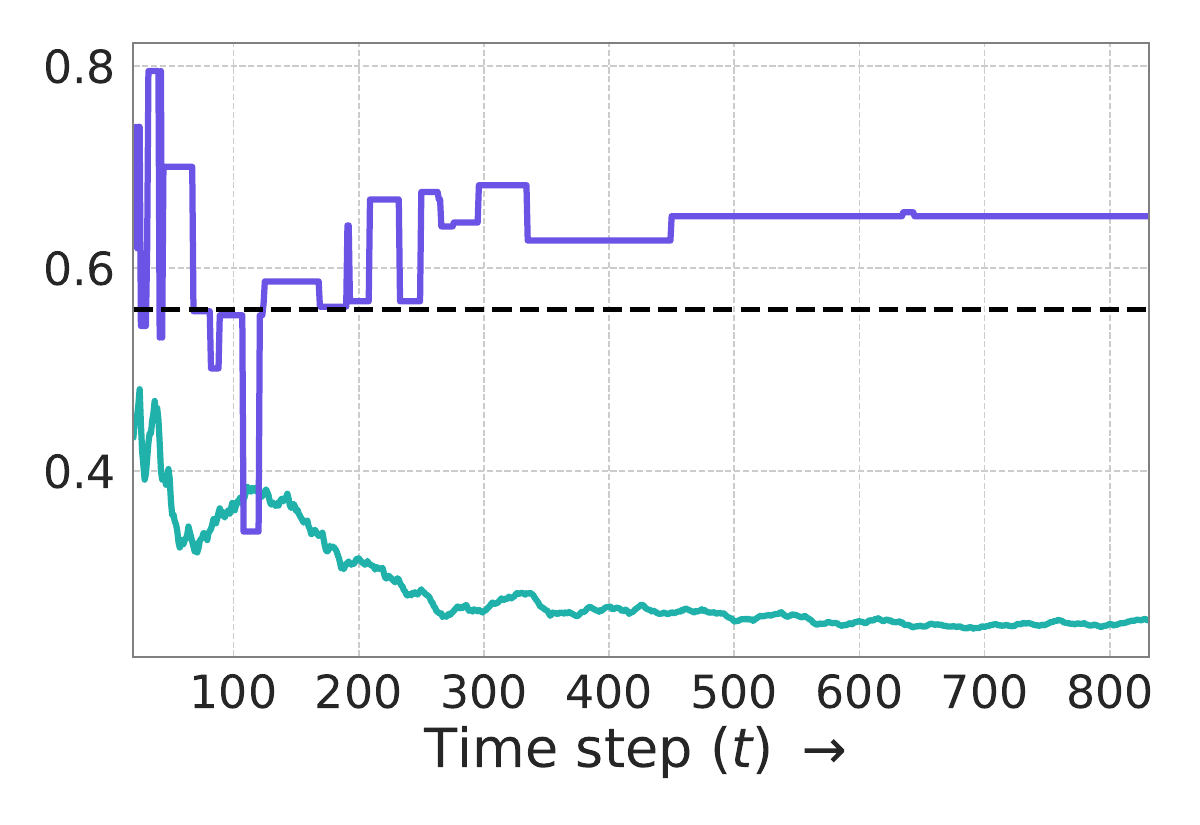}
         \includegraphics[width=0.25\textwidth, keepaspectratio]{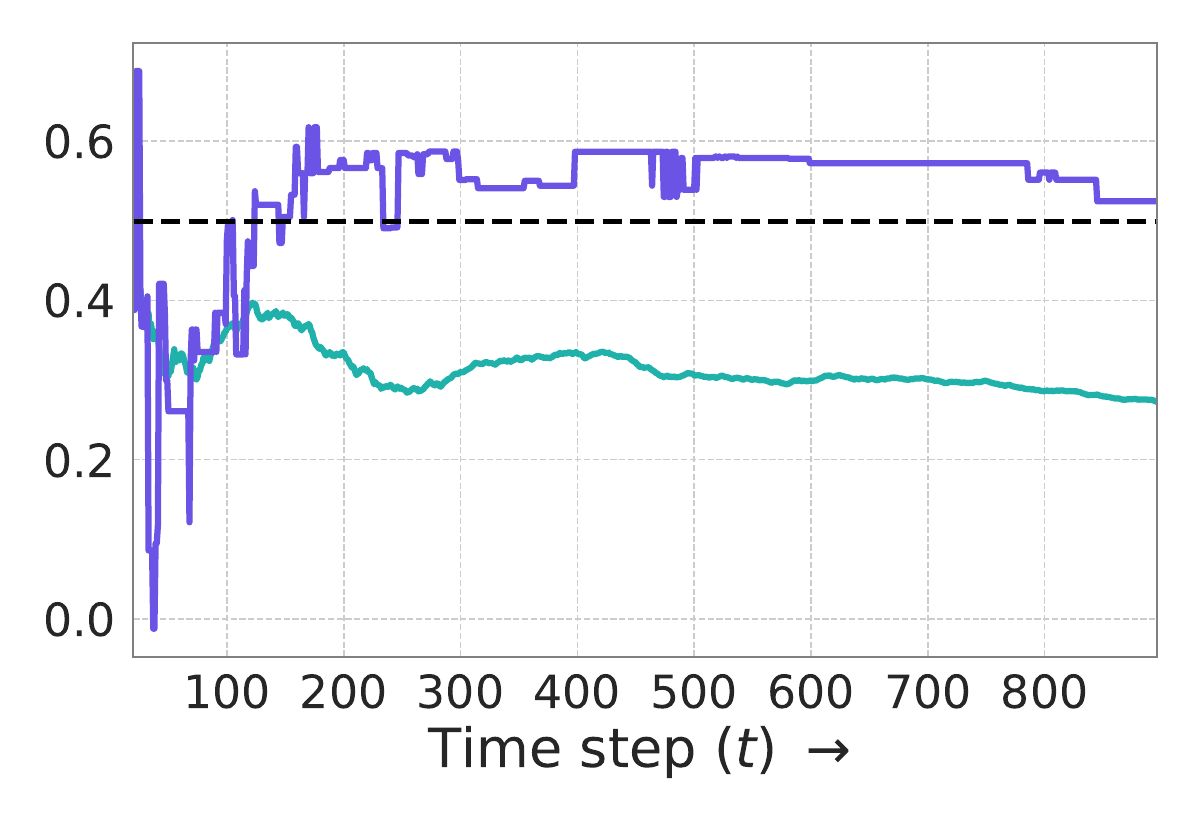}
         \includegraphics[width=0.25\textwidth, keepaspectratio]{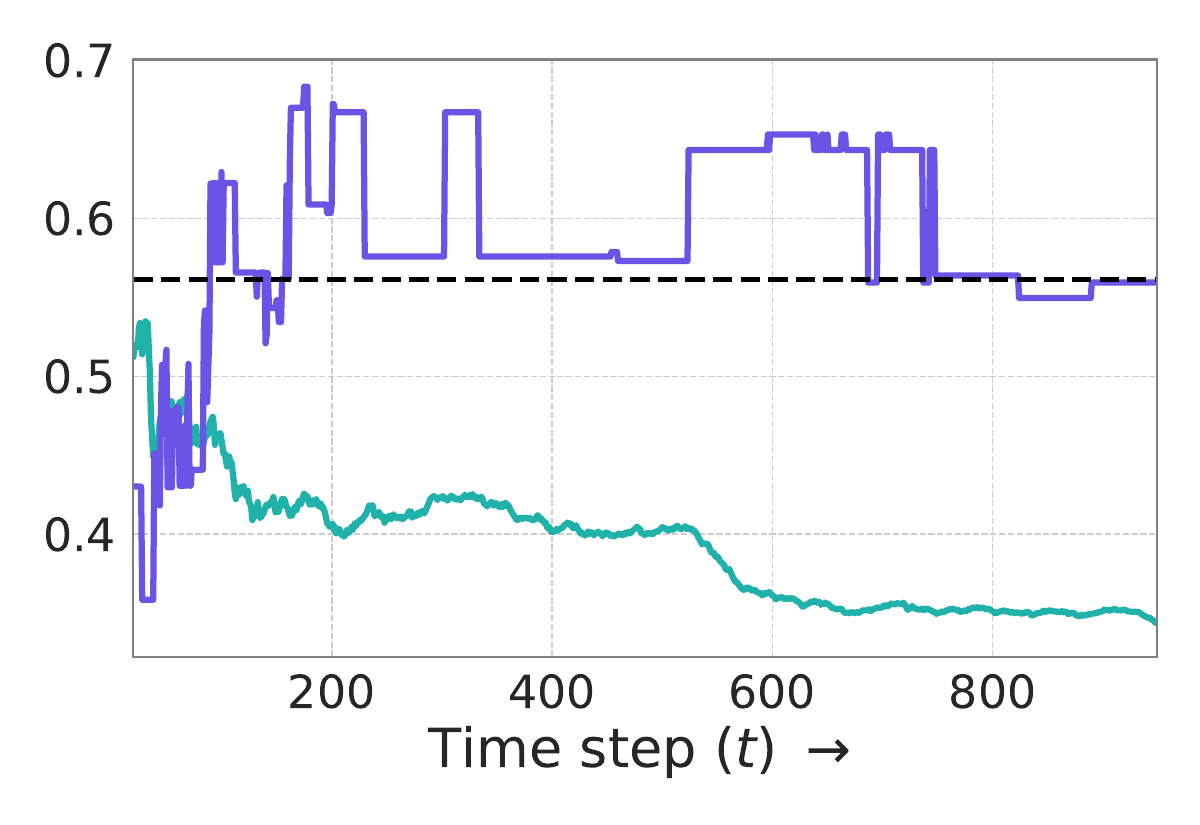}
         \includegraphics[width=0.25\textwidth, keepaspectratio]{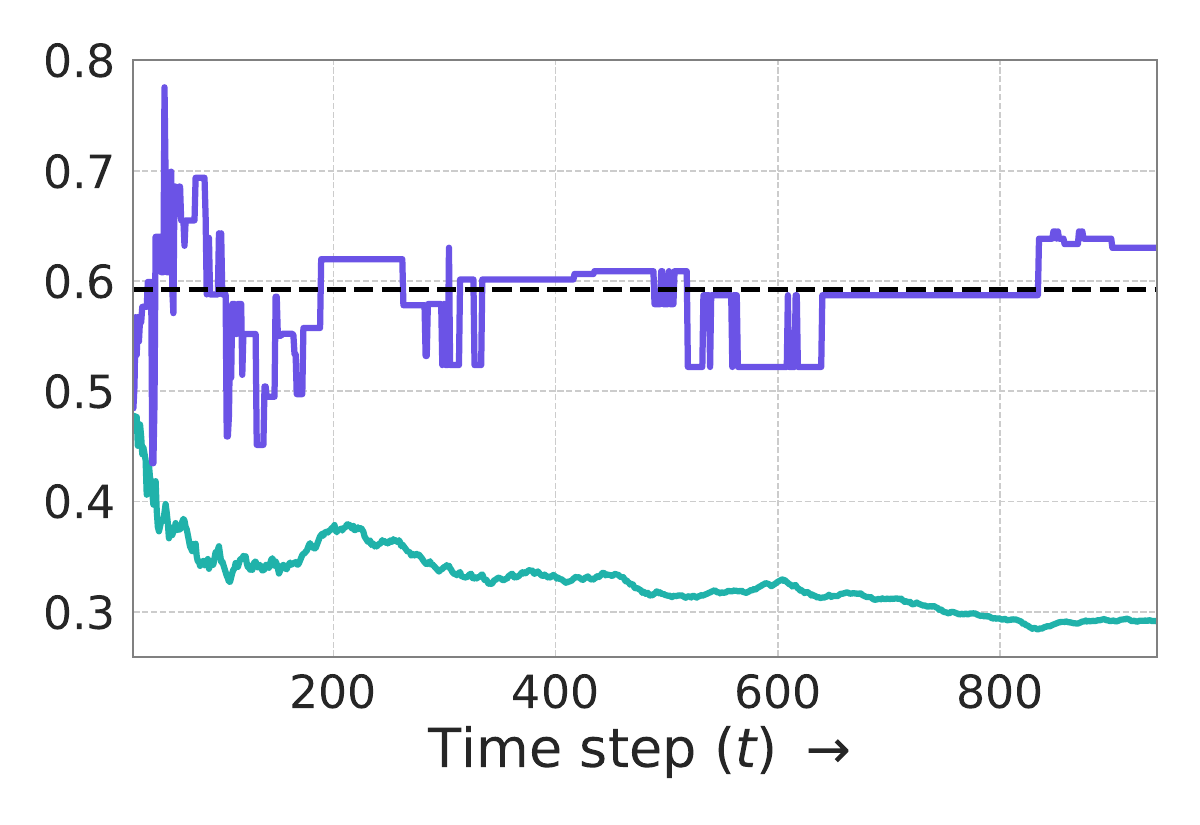}
         \includegraphics[width=0.25\textwidth, keepaspectratio]{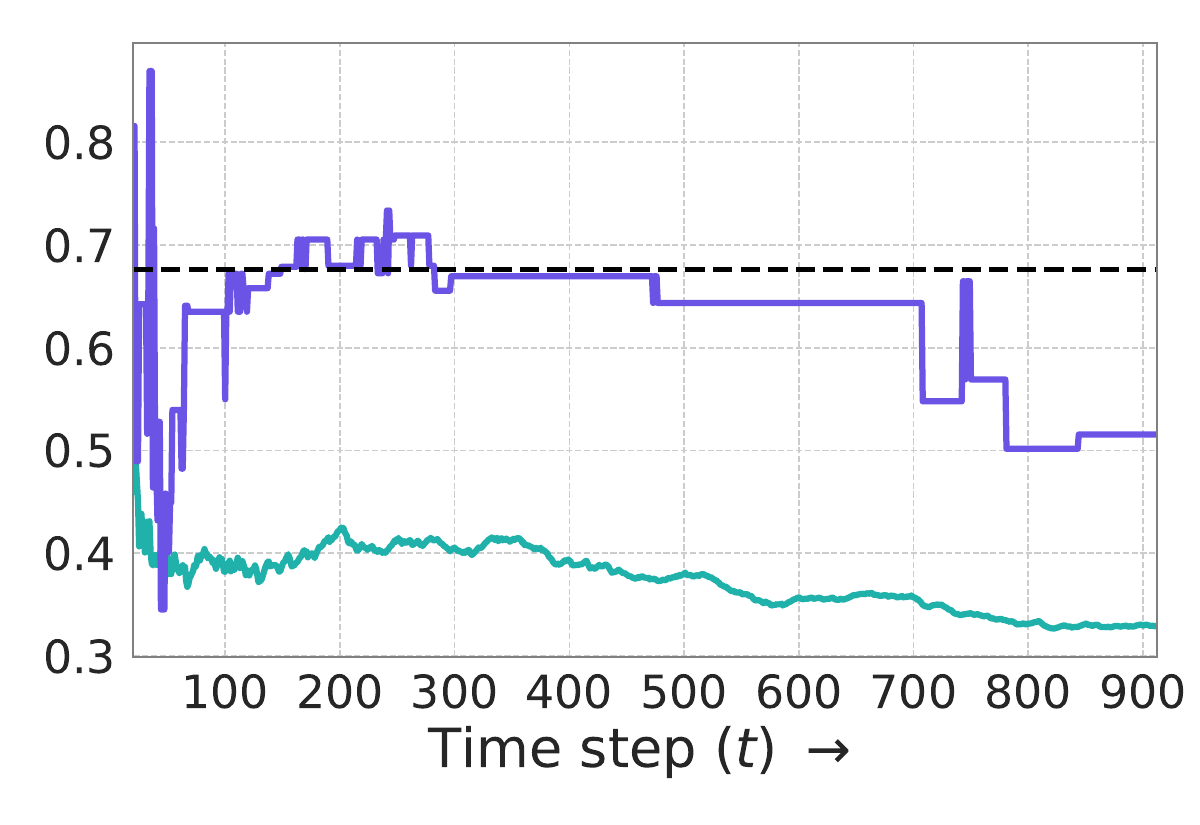}
         \includegraphics[width=0.25\textwidth, keepaspectratio]{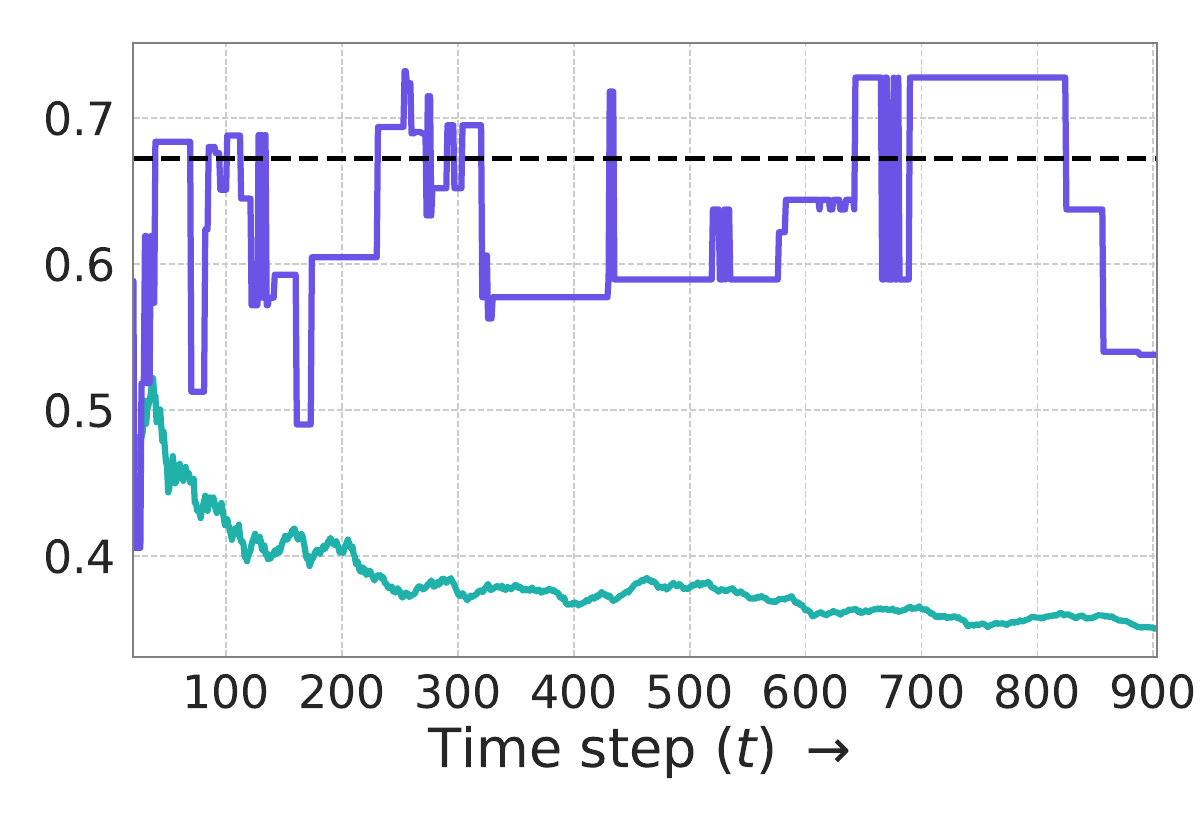}
         \includegraphics[width=0.25\textwidth, keepaspectratio]{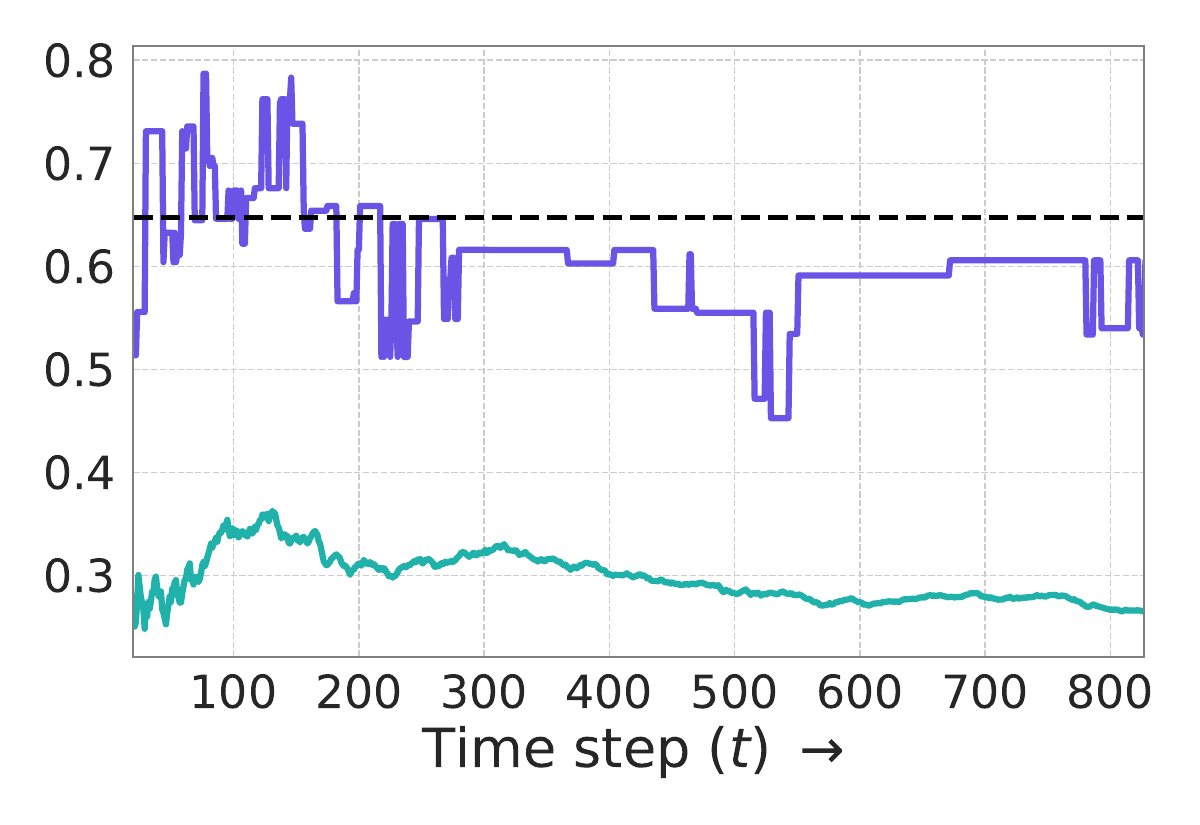}
         \includegraphics[width=0.25\textwidth, keepaspectratio]{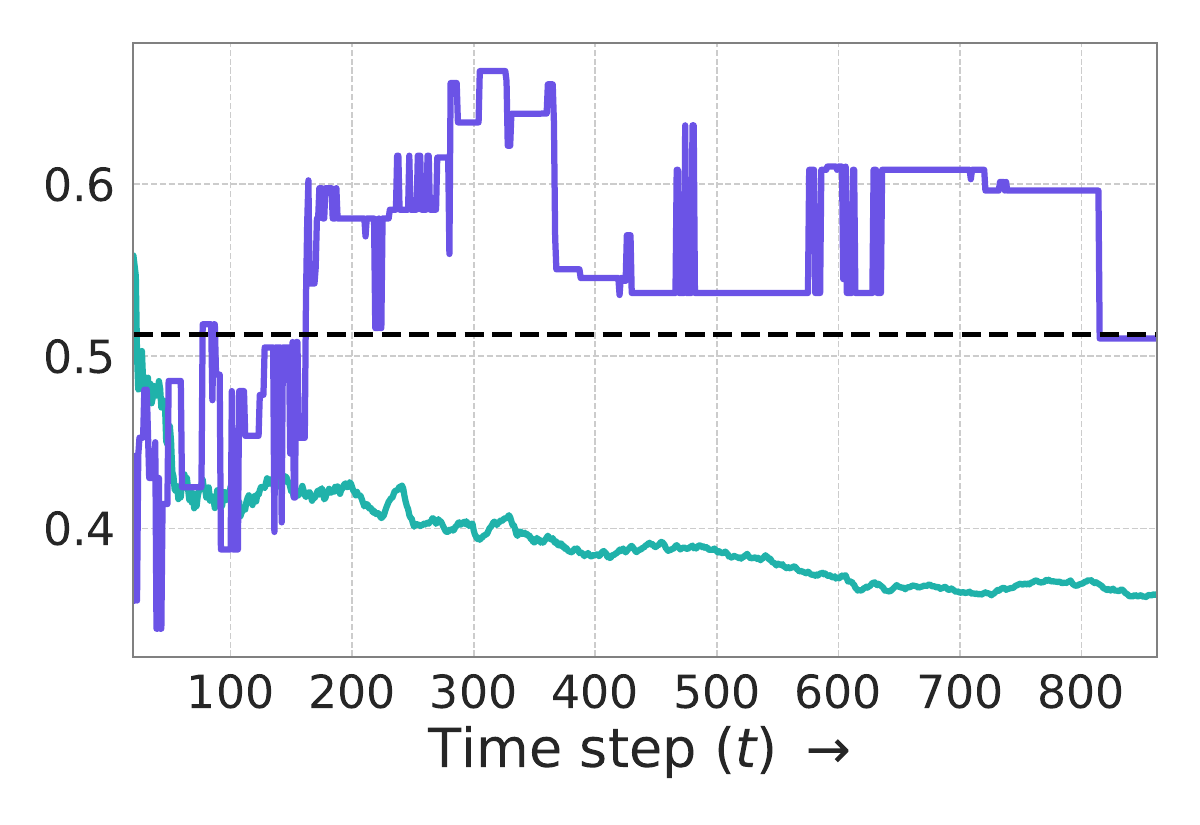}
         \includegraphics[width=0.25\textwidth, keepaspectratio]{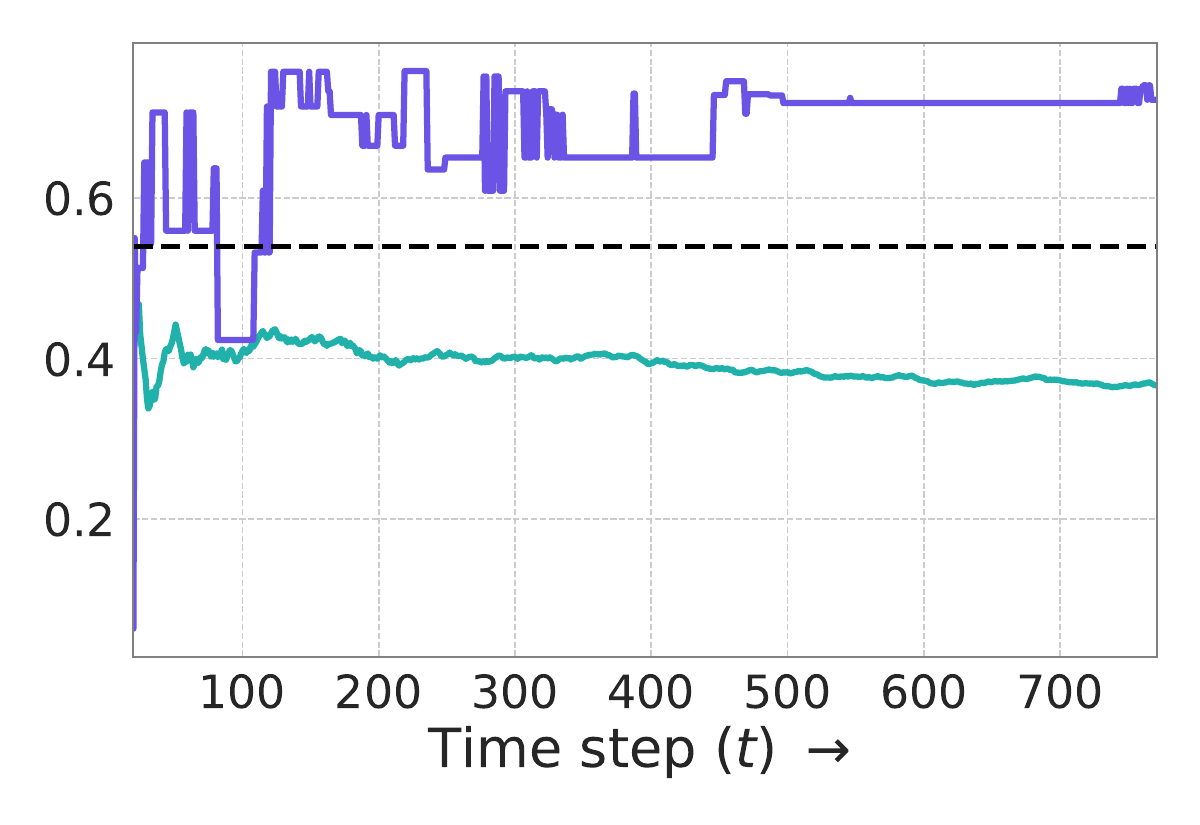}
         \includegraphics[width=0.25\textwidth, keepaspectratio]{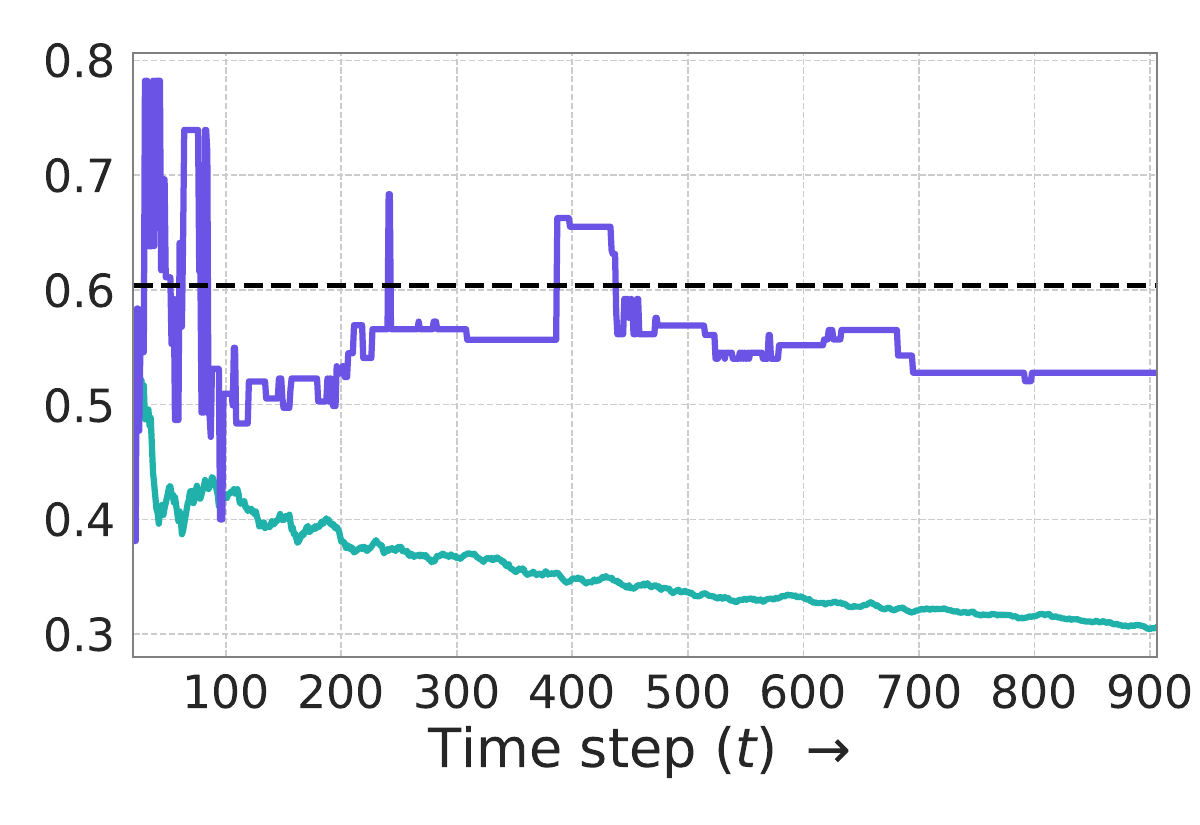}
         \includegraphics[width=0.25\textwidth, keepaspectratio]{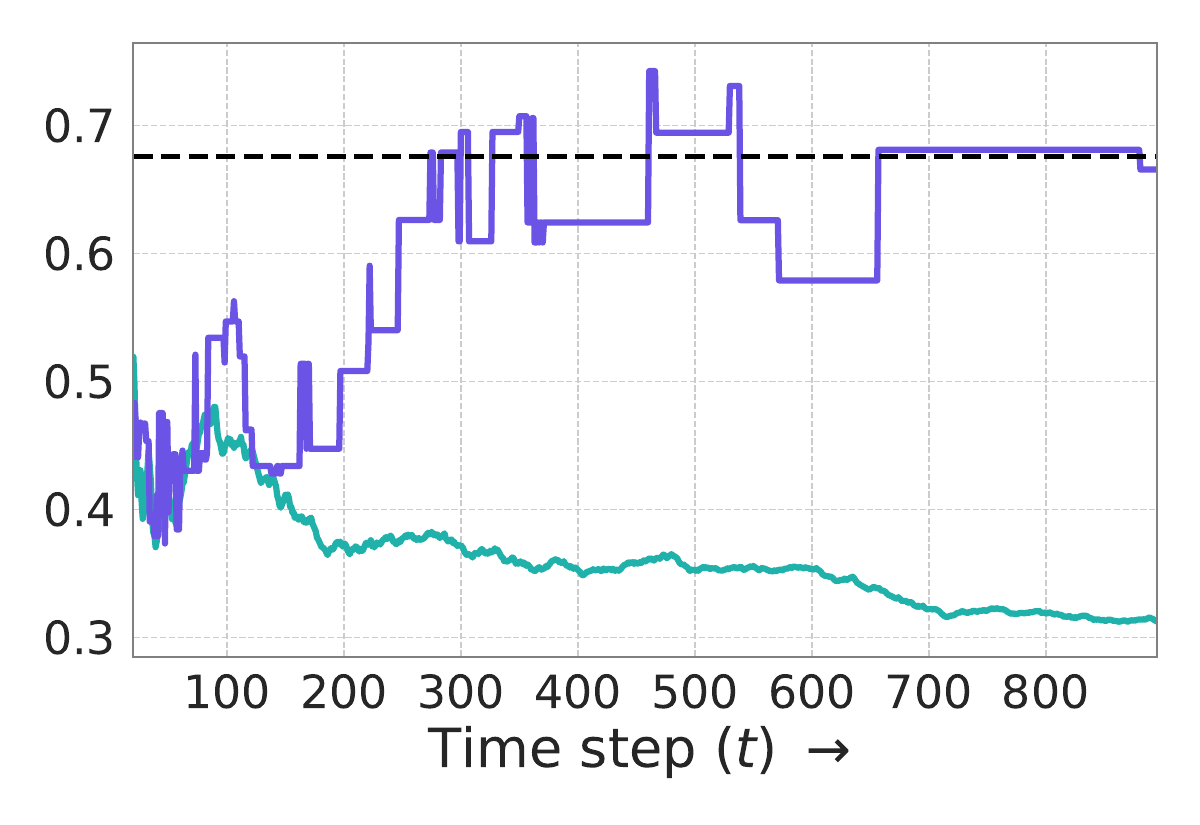}
         \includegraphics[width=0.25\textwidth, keepaspectratio]{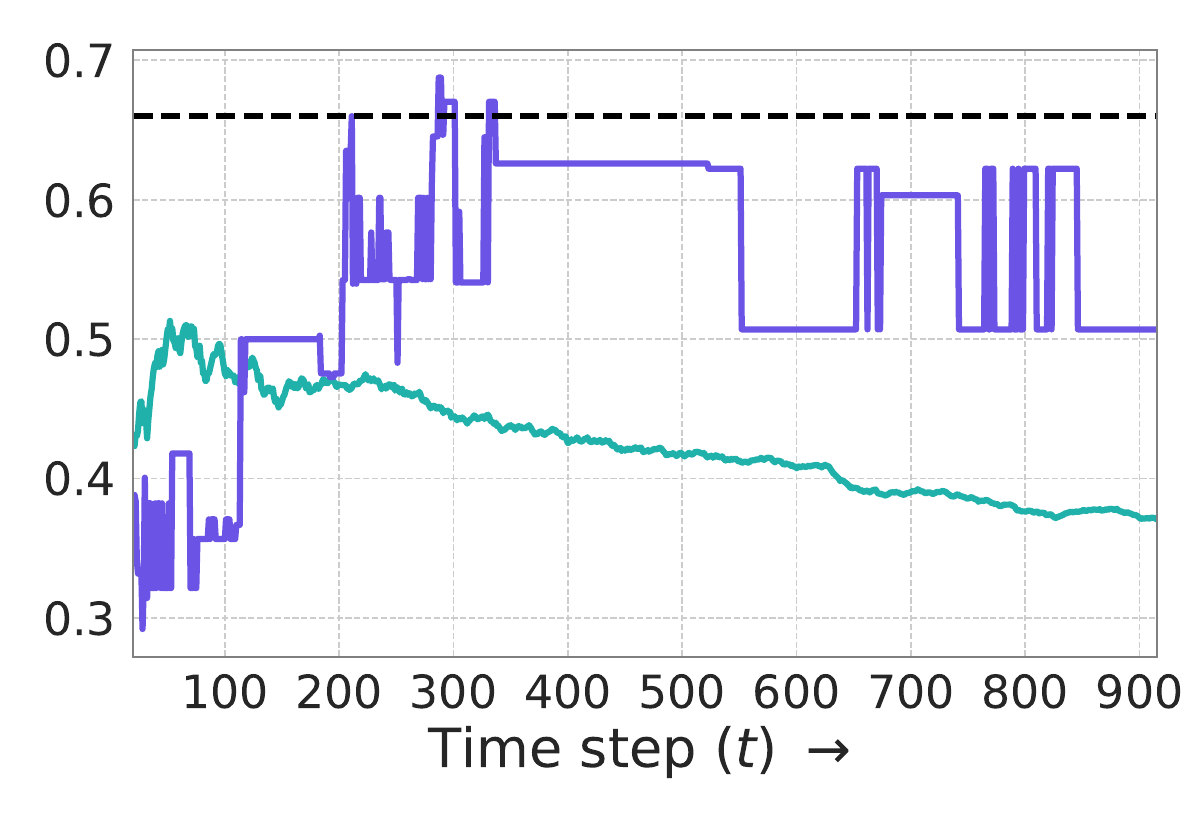}
         \includegraphics[width=0.25\textwidth, keepaspectratio]{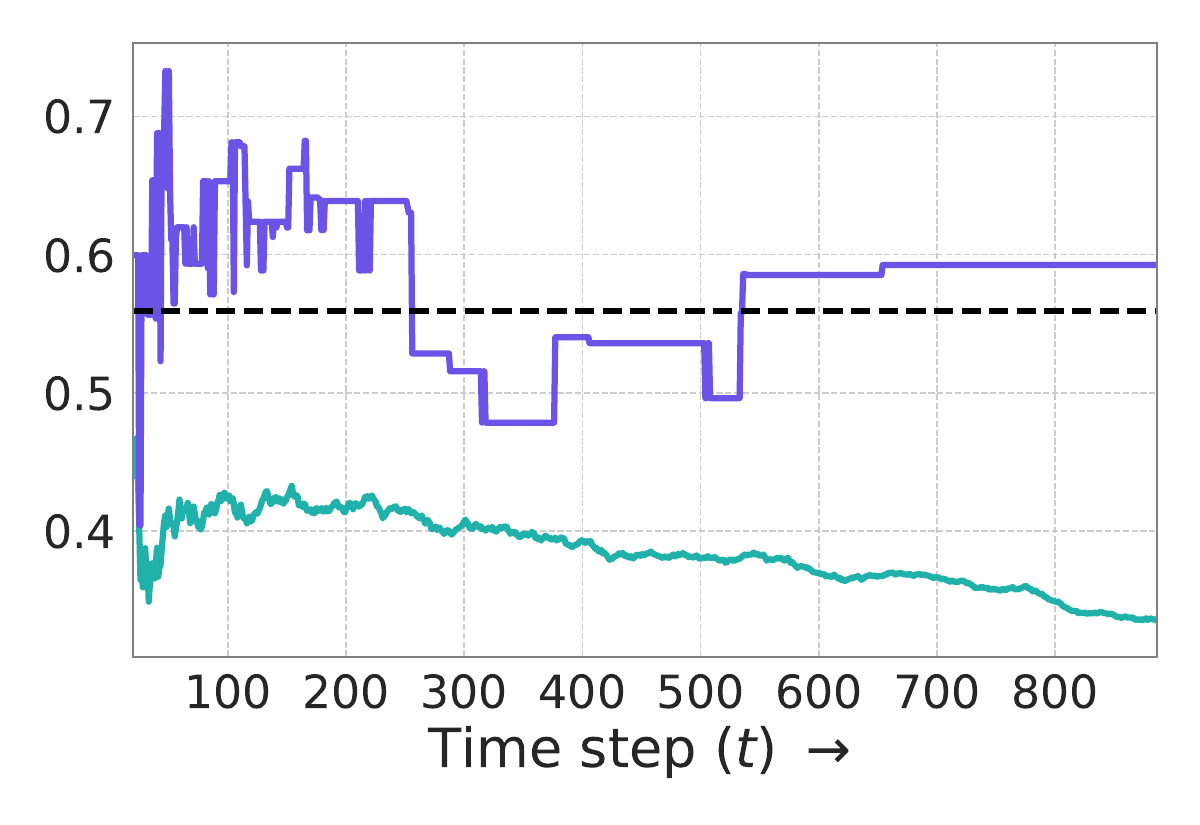}
         \includegraphics[width=0.25\textwidth, keepaspectratio]{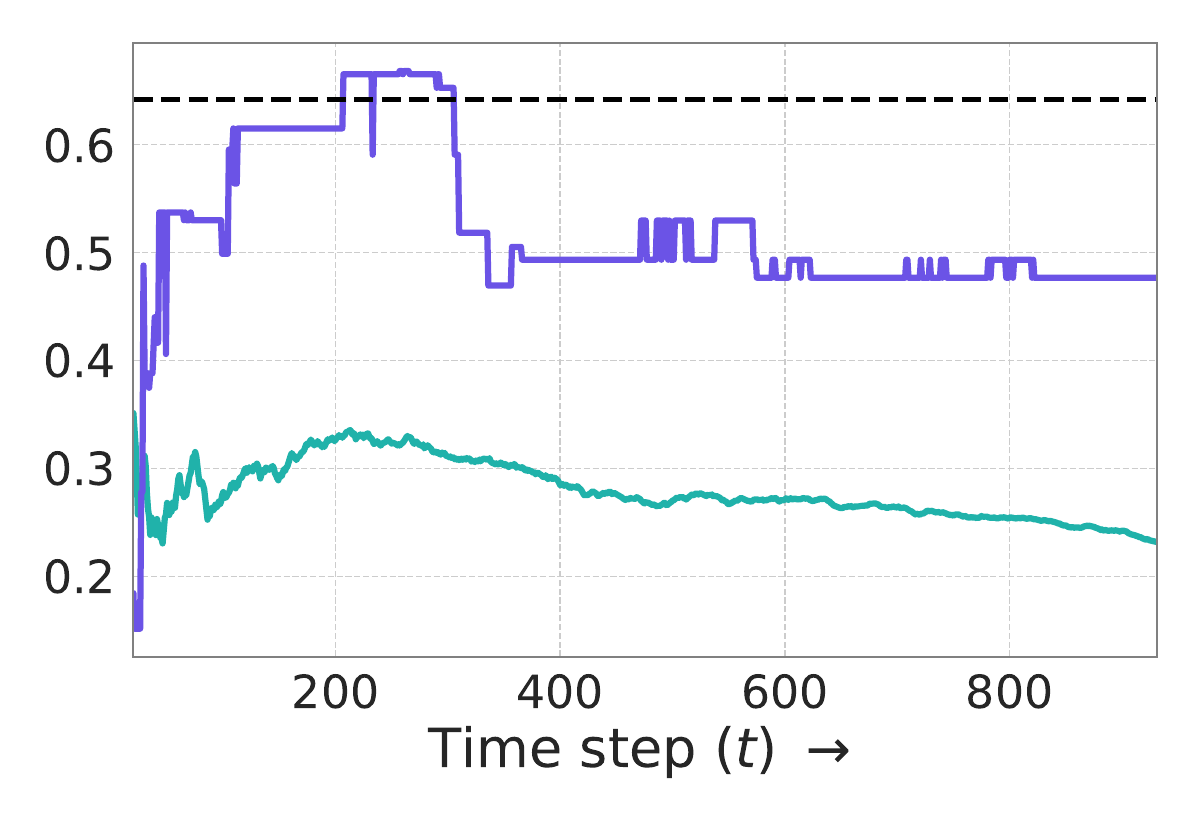}
         \includegraphics[width=0.25\textwidth, keepaspectratio]{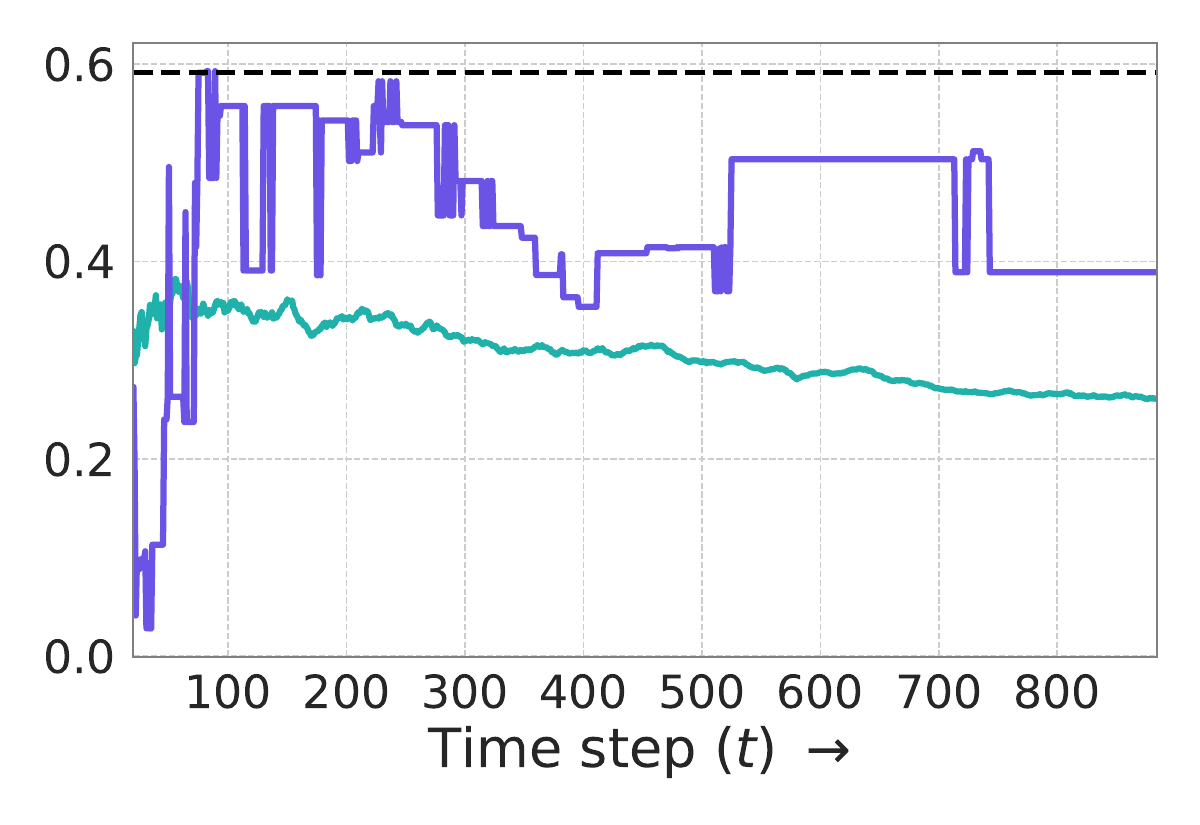}
         \includegraphics[width=0.25\textwidth, keepaspectratio]{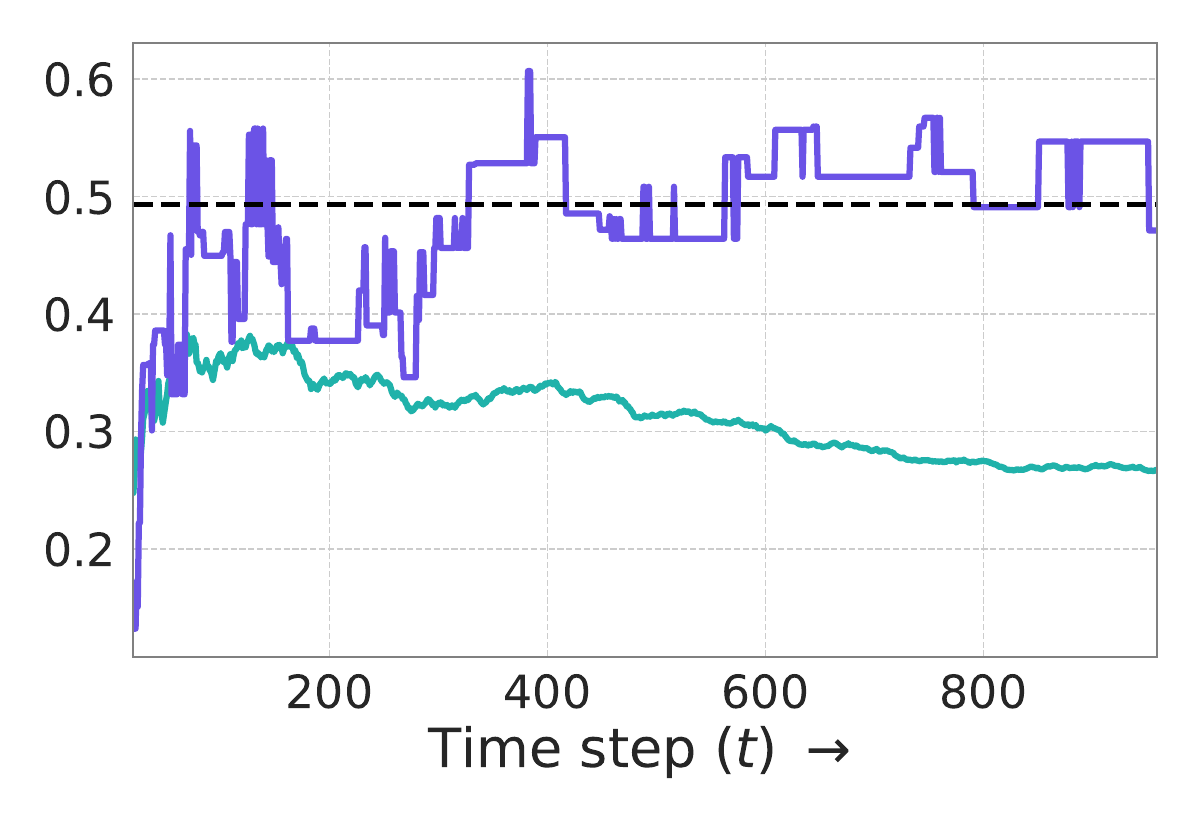}
         \includegraphics[width=0.25\textwidth, keepaspectratio]{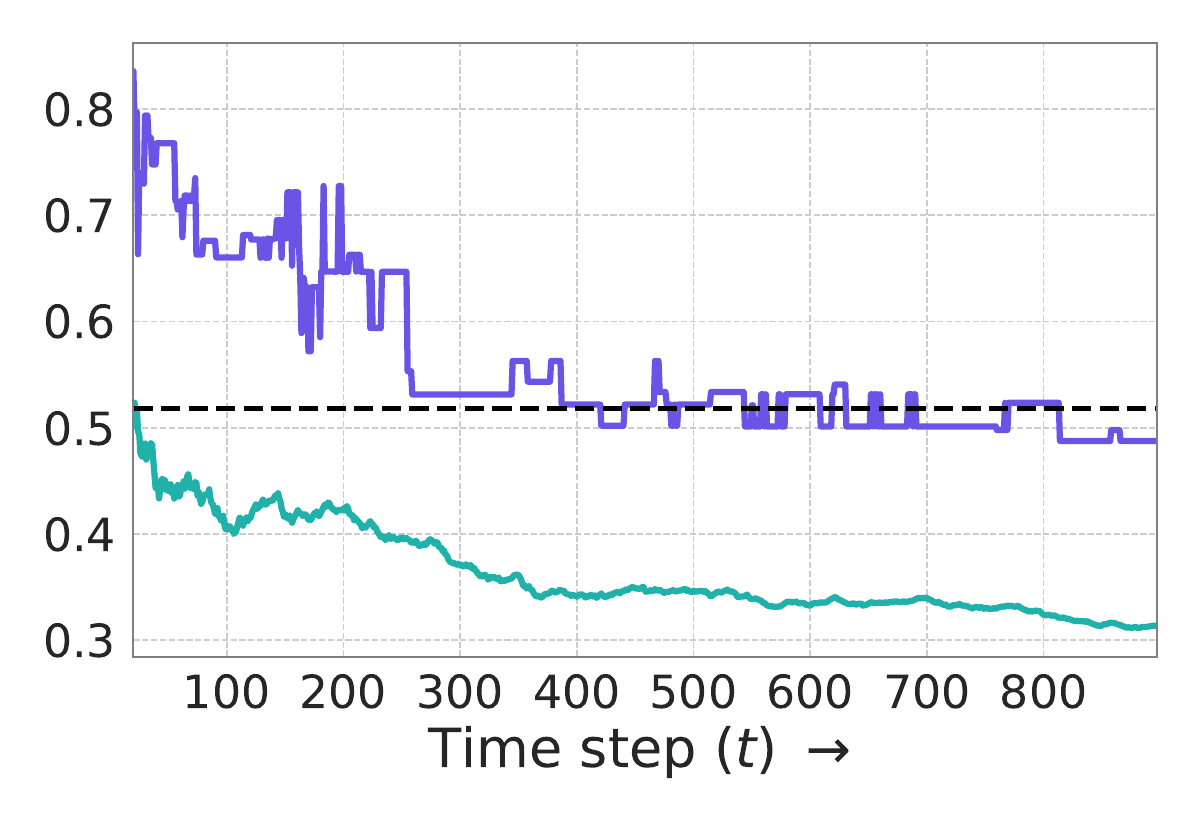}
         \includegraphics[width=0.25\textwidth, keepaspectratio]{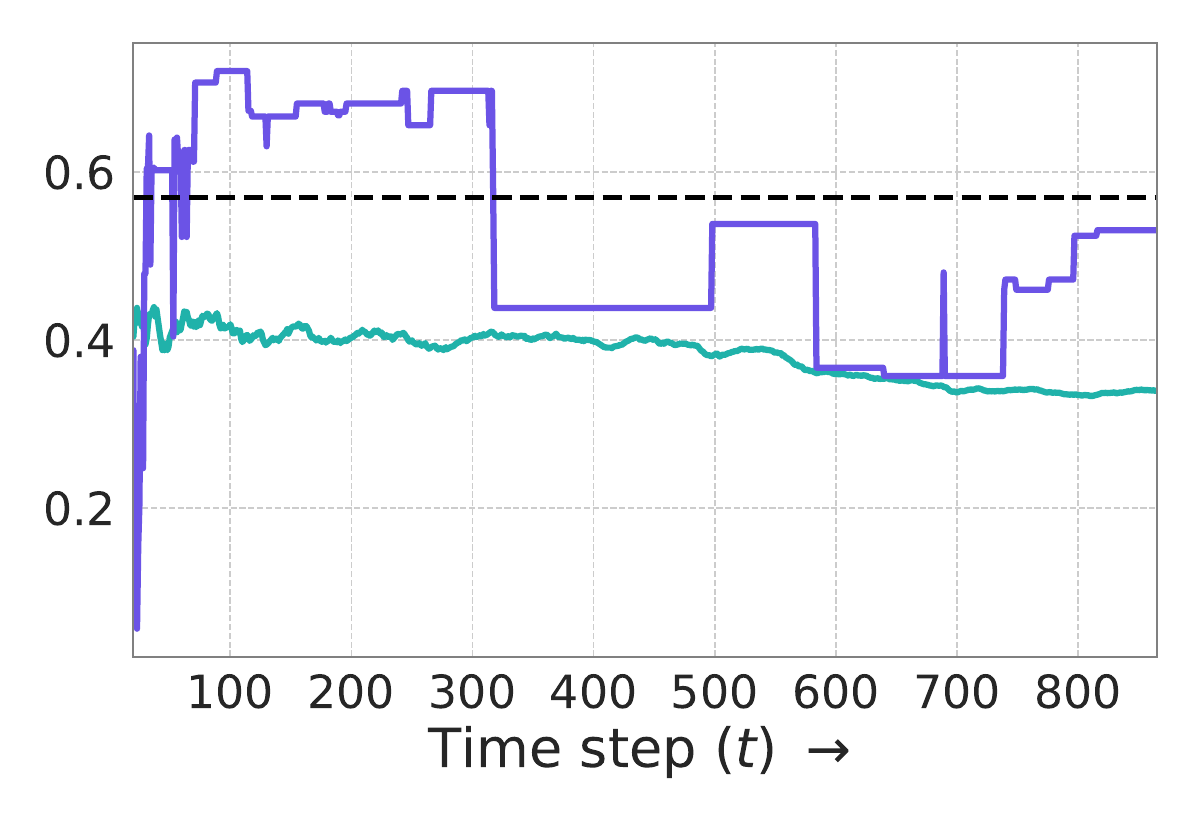}
         \includegraphics[width=0.25\textwidth, keepaspectratio]{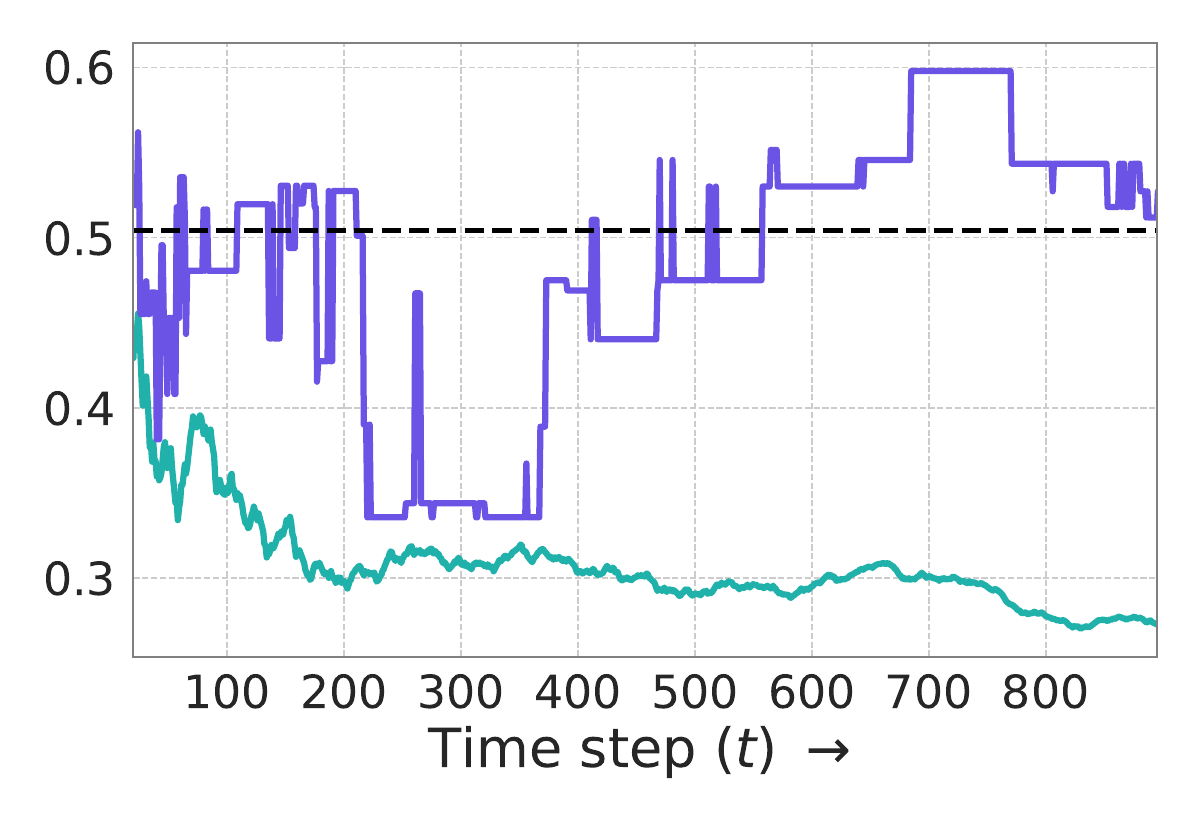}
         \includegraphics[width=0.25\textwidth, keepaspectratio]{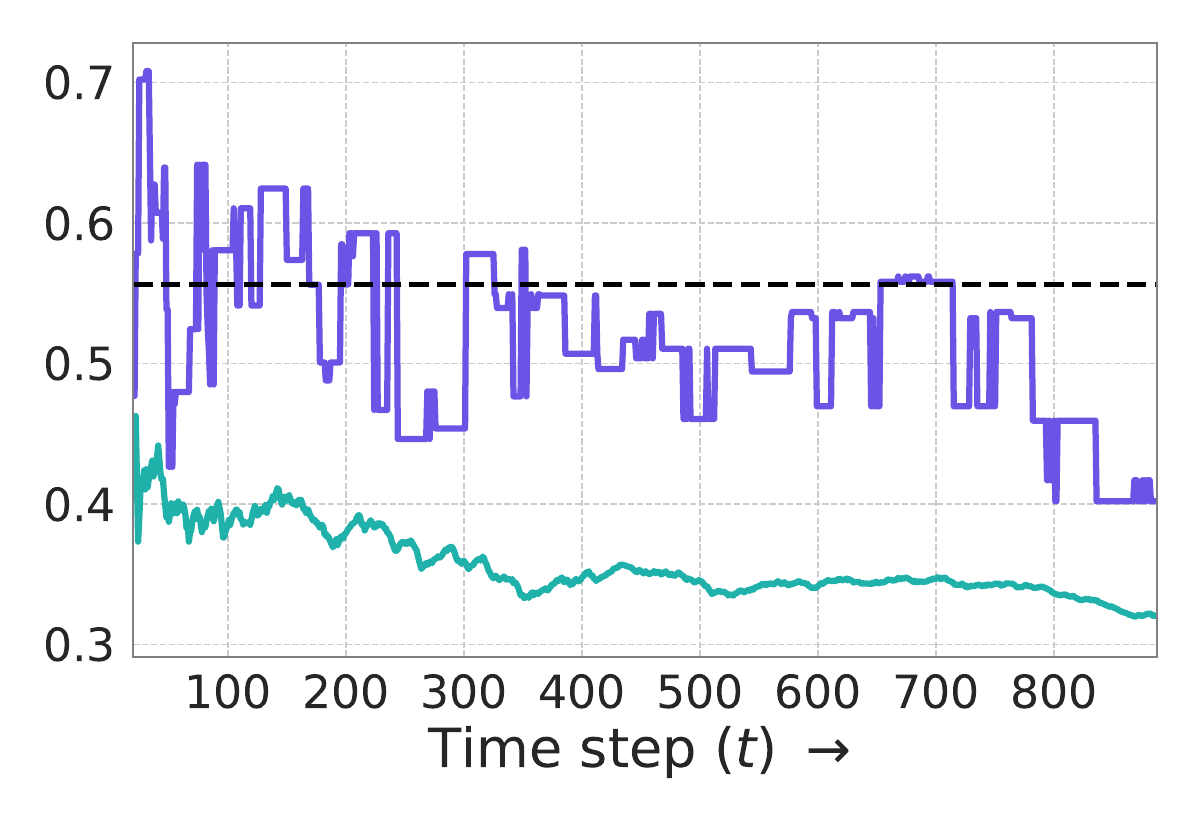}
         \includegraphics[width=0.25\textwidth, keepaspectratio]{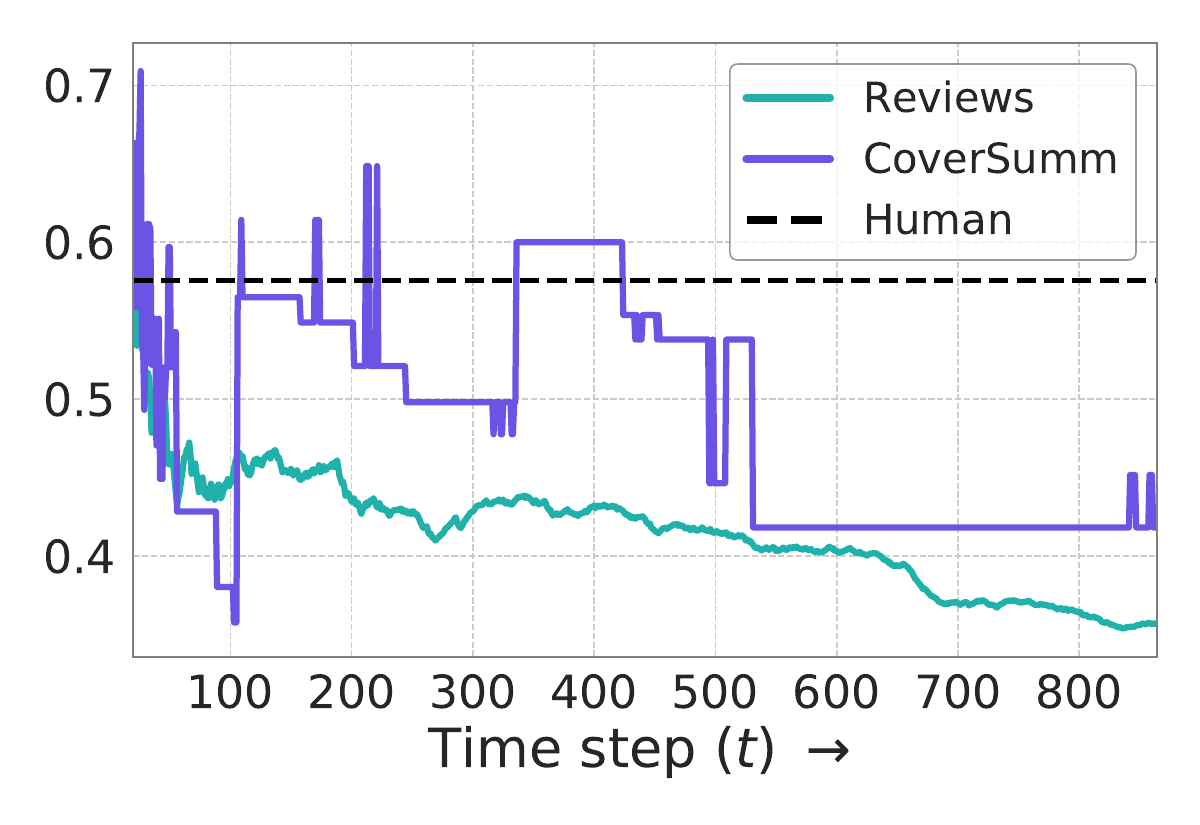}
	\caption{Evolution of sentiment polarity in {\X}'s summaries and aggregate user reviews for different products in the Amazon US reviews dataset. We observe that {\X}'s summaries can track the trends in sentiment polarity of the aggregate review set while achieving scores similar to the human-written summary.}
	\label{fig:amzn_sentiment}
\end{figure*}

\end{document}